\def\fskip#1{}
\newtheorem{theorem}{Theorem}
\newtheorem{assumption}{Assumption}
\newtheorem{definition}{Definition}
\newtheorem{example}{Example}
\newtheorem{lemma}{Lemma}
\newtheorem{proposition}[theorem]{Proposition}
\newtheorem{remark}{Remark}
\def\1{{\bf 1}}
\newcommand{\remove}[1]{}
\def\argmin{\mathop{\rm argmin}}
\def\argmax{\mathop{\rm argmax}}
\begin{document}
\title{Learning Stationary Nash Equilibrium Policies in $n$-Player Stochastic Games with Independent Chains}
\author{\authorblockN{S. Rasoul Etesami*}
 \authorblockA{\vspace{-1.5cm}
}
\thanks{*Department of Industrial and Systems Engineering, Department of Electrical and Computer Engineering, and Coordinated Science Lab, University of Illinois Urbana-Champaign, Urbana, IL, USA 61801 (etesami1@illinois.edu). This material is supported by the Air Force Office of Scientific Research under award number FA9550-23-1-0107 and the NSF
CAREER Award under Grant No. EPCN-1944403.}
}

\maketitle
\begin{abstract}
We consider a subclass of $n$-player stochastic games, in which players have their own internal state/action spaces while they are coupled through their payoff functions. It is assumed that players' internal chains are driven by independent transition probabilities. Moreover, players can receive only realizations of their payoffs, not the actual functions, and cannot observe each other's states/actions. For this class of games, we first show that finding a stationary Nash equilibrium (NE) policy without any assumption on the reward functions is interactable. However, for general reward functions, we develop polynomial-time learning algorithms based on dual averaging and dual mirror descent, which converge in terms of the averaged Nikaido-Isoda distance to the set of $\epsilon$-NE policies almost surely or in expectation. In particular, under extra assumptions on the reward functions such as social concavity, we derive polynomial upper bounds on the number of iterates to achieve an $\epsilon$-NE policy with high probability. Finally, we evaluate the effectiveness of the proposed algorithms in learning $\epsilon$-NE policies using numerical experiments for energy management in smart grids.    
\end{abstract}
\begin{keywords}
Stochastic games, stationary Nash equilibrium, dual averaging, dual mirror descent, Nikaido-Isoda function, learning in games, smart grids.
\end{keywords}

\section{Introduction}\label{sec:intro}

Since the early work on the existence of a mixed-strategy Nash equilibrium in static noncooperative
games \cite{nash1950equilibrium}, and its extension on the existence of stationary Nash equilibrium policies in dynamic stochastic games \cite{shapley1953stochastic}, substantial research has been done to develop scalable algorithms for computing Nash equilibrium (NE) points in static and dynamic environments. NE provides a stable solution concept for strategic multiagent decision-making systems, which is a desirable property in many applications, such as socioeconomic systems \cite{jackson2010social}, network security \cite{alpcan2010network}, and routing and scheduling \cite{roughgarden2016twenty}, among many others \cite{bacsar2018handbook,cesa2006prediction}.  

In general, computing NE is PPAD-hard \cite{daskalakis2009complexity}, and it is unlikely to admit a polynomial-time algorithm. Thus, to overcome this fundamental barrier, two main approaches have been adapted in the past literature: i) searching for relaxed notions of stable solutions, such as correlated equilibrium \cite{aumann1987correlated}, which includes the set of NE; and ii) searching for NE points in special structured games, such as \emph{potential games} \cite{monderer1996potential} or \emph{concave games} \cite{rosen1965existence}. Thanks to recent advances in the field of learning theory, it is known that some tailored algorithms for finding relaxed notions of equilibrium in case (i) can also be used to compute NE points of structured games in case (ii). For instance, it is known that the so-called \emph{no-regret} algorithms always converge to the set of coarse correlated equilibria \cite{cesa2006prediction}, and they can also be used to compute NE in the class of \emph{socially concave games} \cite{even2009convergence}. However, such results have mainly been developed for static games, in which players repeatedly play the same game and gradually learn the underlying stationary environment. Unfortunately, extension of such results to dynamic stochastic games \cite{shapley1953stochastic,basar1999dynamic}, in which the state of the game evolves as a result of players' past decisions and the realizations of a stochastic nature, imposes major challenges. 

One major challenge that makes the learning task harder as one moves from static games to dynamic stochastic games is the level of uncertainty and nonstationarity introduced on players' decision-making trajectories due to state dynamics. The reason is that players' payoffs depend not only on the actions of others but also on the state of the game, which evolves stochastically as a function of players' actions. Therefore, not only do the players need to learn about each other's actions, but they also need to learn about the state trajectory to respond properly. In this work, we expand over the past literature and extend the existing results for learning stationary NE policies to a subclass of dynamic $n$-player stochastic games with independent chains. Such games provide natural modeling for many applications, such as multiagent wireless communication \cite{altman2007constrained,narayanan2017large,altman2008constrained}, robotic navigation \cite{zhang2021multi}, and energy management in smart grids \cite{etesami2018stochastic}, among others \cite{qiu2021provably,altman2005zero}. In this class of games, there are $n$ players, where each player $i$ has its own finite state space $S_i$ and finite action set $A_i$. Moreover, the state-action transition matrices $P_i(s'_i|a_i,s_i), i=1,\ldots,n$ are assumed to be independent across players. However, the players are coupled through their reward functions such that the reward of player $i$, denoted by $r_i(s,a)$, depends on the states $s\in \prod_{j=1}^nS_j$ and actions $a\in \prod_{j=1}^nA_j$ of all players. 

\subsection{Related Work} 

While learning NE for two-player zero-sum static games has been long known in the literature with a variety of learning algorithms such as fictitious play or regret minimization dynamics, only recently these results have been extended to two-player zero-sum stochastic games \cite{daskalakis2021independent,sayin2022fictitious}. Moreover, learning NE in $n$-player repeated static games under various assumptions on the payoff functions has been extensively studied in the past literature \cite{even2009convergence,mertikopoulos2019learning}. However, these results mainly work for the static setting and cannot be applied directly to dynamic games with stochastic state-action transitions. 

Recent advances in the field of reinforcement learning (RL) \cite{zhang2021multi} have raised substantial interest in developing efficient learning algorithms for computing optimal stationary policies in single-agent Markov decision processes (MDP) \cite{agarwal2021theory,wang2017primal,cardoso2019large}. MDPs are general frameworks that can model many real-world decision-making problems in the face of uncertainties \cite{chen2021primal,zhang2021multi,jin2020efficiently}. Multiagent extensions of MDPs, in which multiple agents (players) want to maximize their payoffs while interacting under a random environment, have been studied using the framework of stochastic games (a.k.a. Markov games) \cite{shapley1953stochastic}. However, the results for equilibrium computation in stochastic games are often much weaker than those for single-agent MDPs, mainly because of the nonstationary environment that is induced by players' decisions \cite{zhang2021multi}. In general, there are strong lower bounds for computing stationary NE in stochastic games that grow exponentially in terms of the number of players \cite{song2021can}. Therefore, prior work has largely focused on the special case of two-player zero-sum stochastic games \cite{zhao2021provably,qiu2021provably,zhang2021multi,tian2021online,sayin2021decentralized,sayin2022fictitious}. For instance, \cite{daskalakis2021independent} provided finite-sample NE convergence result for independent policy gradient methods in two-player zero-sum stochastic games without coordination. More recently, \cite{sayin2022fictitious} adapted the classical fictitious play dynamics with Q-learning for stochastic games and analyzed its convergence properties in two-player zero-sum stochastic games. However, their convergence results are asymptotic and do not provide any explicit convergence rate. 

While two-player zero-sum stochastic games constitute an important basic setting, there are many problems with a large number of players, a situation that hinders the applicability of the existing algorithms for computing a stationary NE. To address this issue, researchers have recently developed learning algorithms for finding NE in special structured stochastic games. For instance, \cite{hambly2021policy} shows that policy gradient methods can find a NE in $n$-player general-sum linear-quadratic games. An application of RL for finding NE in linear-quadratic mean-field games has been studied in \cite{uz2020reinforcement}. Moreover, \cite{zhang2021gradient,leonardos2021global} show that $n$-player Markov potential games, an extension of static potential games to dynamic stochastic games, admit polynomial-time algorithms for computing their NE policies. Unfortunately, the class of Markov potential games is very restrictive because it requires strong assumptions on the existence of a general potential function. In fact, even establishing the existence of such a potential function could be a challenging task \cite{macua2018learning,mguni2021learning}. Motivated by numerous applications of large-scale stochastic games, in this work, we develop polynomial-time learning algorithms for computing NE policies for a natural class of payoff-coupled stochastic games with independent chains \cite{altman2008constrained,qiu2021provably}.

This work is also related to a large body of literature on the \emph{mirror descent} (MD) algorithm \cite{nemirovskij1983problem,beck2003mirror}, and its variant \emph{dual averaging} (DA), also known as \emph{lazy mirror descent} \cite{bubeck2014convex,juditsky2019unifying}. Both MD and DA have been extensively used in online convex optimization \cite{hazan2019introduction,bubeck2014convex}, online learning for MDPs with changing rewards \cite{cardoso2019large}, regret minimization \cite{cesa2006prediction}, and learning of NE in continuous games \cite{gao2020continuous,bravo2018bandit,mertikopoulos2019learning,zhou2017mirror}.  
Although MD and DA algorithms share similarities in their analysis, DA algorithms are believed to be more robust in the presence of noise, while MD algorithms often provide better convergence rates \cite{juditsky2019unifying}. For that reason, we will study both of these variants.

\subsection{Contributions}

We consider the class of payoff-coupled stochastic games with independent chains and develop efficient decentralized learning dynamics toward computing their stationary $\epsilon$-NE policies. Moreover, under certain assumptions on the reward functions, we show that the proposed learning dynamics converge almost surely or in expectation to a stationary $\epsilon$-NE policy with nonasymptotic polynomial-time convergence rates. Our main contributions can be summarized as follows:
\begin{itemize} 
\item Using a reduction of such stochastic games to a static virtual game in terms of occupation measures, we show that for general reward functions, computing a stationary $\epsilon$-NE policy is PPAD-hard, despite the independency of chains.
\item Leveraging the virtual game formulation for general reward functions, we devise novel polynomial-time learning algorithms, which converge in a weaker distance (namely, the averaged Nikaido-Isoda gap) to the set of $\epsilon$-NE policies almost surely. In particular, we show that if the sequence of iterates converges with positive probability, it must be an $\epsilon$-NE policy. 
\item For reward functions that satisfy social concavity property, we show that for any $\alpha\in (0, 1)$, after at most $O(\frac{n}{\alpha \epsilon}\sum_{i=1}^n\frac{|A_i||S_i|}{\delta_i})^2$ episodes, their time-average policies will form an $\epsilon$-NE policy with probability at least $1-\alpha$, where $\delta_i>0$ is a constant and the expected length of each episode is bounded above by $O(\tau\max_i|S_i|)$, where $\tau$ is the worst mixing time of players' internal chains across all stationary policies. Moreover, we improve this bound to $O\big(\sum_{i=1}^n\frac{|A_i|+\log(|S_i||A_i|)}{\epsilon}\big)^2$ for convergence in expectation to an $\epsilon$-NE policy.  
\item We extend our convergence results when the game admits a stable equilibrium and evaluate the effectiveness of the proposed algorithms in learning $\epsilon$-NE policies using numerical experiments. 
\end{itemize}
Our proposed algorithms are simple, easy to implement, and work in a fully decentralized manner. The players take action in the original stochastic game and observe their realized payoffs. They then use this information in the static virtual game, which is a compact representation of the original static game, to update their policies in the space of occupation measures. However, the virtual game can be played only once, and thus the existing learning algorithms for repeated static games such as \cite{even2009convergence} cannot be applied directly to compute a NE. Nevertheless, using a sampling method, we show how to repeatedly play over the original stochastic game and use the collected information in the virtual game to guide the learning dynamics to a NE. In addition to Markov potential games and linear-quadratic stochastic games, this work provides another subclass of $n$-player stochastic games that, under some assumption on players' reward functions, provably admit polynomial-time learning algorithms for finding their stationary $\epsilon$-NE policies.

\subsection{Organization} 

In Section \ref{sec:formulation}, we introduce payoff-coupled stochastic games with independent chains. In Section \ref{sec:dual}, we provide a dual formulation for such games and establish several preliminary results. In Section \ref{sec:alg-design}, we develop an algorithm for learning $\epsilon$-NE policies. In Section \ref{sec:sociall-concave}, we present our convergence results in terms of the averaged Nikaido-Isoda gap function to the set of $\epsilon$-NE policies and without any assumptions on the reward functions and establish polynomial-time convergence rates in high probability or in expectation. In Section \ref{sec-general}, we consider structured reward functions when they are socially concave or allow the existence of a stable equilibrium and obtain stronger polynomial-time convergence rates in terms of the Euclidean distance to the set of $\epsilon$-NE policies. In Section \ref{sec:simulations}, we provide simulations to evaluate the effectiveness of our proposed algorithm. Conclusions are given in Section \ref{sec:conclusion}, and omitted proofs and auxiliary lemmas can be found in Appendix I.     

\bigskip
\section{Problem Formulation}\label{sec:formulation}

We consider a stochastic game with $[n]=\{1,\ldots, n\}$ players, where each player $i\in[n]$ has its own finite set of states $S_i$ and finite set of actions $A_i$. We denote the joint state and action set of players by $S=\prod_{i=1}^n S_i$ and $A=\prod_{i=1}^n A_i$, respectively. At any discrete time $t=0,1,2,\ldots$, we use $s_i^t\in S_i$ and $a_i^t\in A_i$, respectively, to denote the (random) state and action of player $i$. Similarly, we use $s^t=(s_1^t,\ldots,s_n^t)$ and $a^t=(a_1^t,\ldots,a_n^t)$, respectively, to denote the joint state and action vectors for all players at time $t$. Moreover, we let $r_i(s^t,a^t)$ be the (random) reward received by player $i$ at time $t$, where, without loss of generality, we assume that the rewards are normalized such that $r_i\in [0, 1], \forall i\in [n]$. At any time $t$, the information available to player $i$ is given by the history of its \emph{realized} states, actions, and rewards, i.e., $\mathcal{H}^t_i=\{s_i^{\ell},a_i^{\ell},r_i(s^{\ell},a^{\ell}): \ell=0,1,\ldots,{t-1}\}\cup\{s_i^{t}\}$. In particular, we note that player $i$ is not able to observe other players' states and actions, nor can it access the structure of its reward function $r_i(\cdot)$.  At any time $t$, player $i$ takes an action $a_i^t$ based on its information set $\mathcal{H}^t_i$ and receives a reward $r_i(s^t,a^t)$, which also depends on other players' states and actions. After that, the state of player $i$ changes from $s_i^t$ to a new state $s_i^{t+1}$ with probability $P_i(s_i^{t+1}|s_i^{t},a_i^t)$, where the transition probability matrix $P_i$ is known only to player $i$ and is independent of other transition matrices. 

A general policy for player $i$ is a sequence of probability measures $\pi_i=\{\pi^t_i, t=0,1,\ldots\}$ over $A_i$ that at each time $t$ selects an action $a_i\in A_i$ based on past observations $\mathcal{H}^t_i$ with probability $\pi^t_i(a_i|\mathcal{H}_i^t)$. Use of general policies is often computationally expensive, and in practical applications, players are interested in easily implementable policies. In that regard, the class of \emph{stationary} policies constitutes the most well-known class of simple policies, as defined next. 

\begin{definition}
A policy $\pi_i$ for player $i$ is called \emph{stationary} if the probability $\pi_i^t(a_i|\mathcal{H}^t)$ of choosing action $a_i$ at time $t$ depends only on the current state $s_i^t=s_i$, and is independent of the time $t$. In the case of the stationary policy, we use $\pi_i(a_i|s_i)$ to denote this time-independent probability. 
\end{definition}

\begin{assumption}\label{ass:indep}
We assume that the joint transition probability $P(s'|s,a)$ can be factored into independent components $P(s'|s,a)=\prod_{i=1}^n P_i(s'_i|s_i,a_i)$, where $P_i(s'_i|s_i,a_i)$ is the transition model for player $i$. Moreover, we assume that players' policies belong to the class of stationary policies.\footnote{In fact, for single-agent MDPs, restriction to the class of stationary policies is without loss of generality, and the optimal policy can be chosen from stationary policies \cite{altman1999constrained}.}
\end{assumption}

Given some initial state $s^0$, the objective for each player $i\in [n]$ is to choose a stationary policy $\pi_i$ that maximizes its long-term expected average payoff given by
\begin{align}\label{eq:aggregate-reward}
V_i(\pi_i,\pi_{-i})=\mathbb{E}\Big[\lim_{T\to \infty}\frac{1}{T}\sum_{t=0}^{T}r_i(s^t,a^t)\Big],
\end{align} 
where $\pi_{-i}=(\pi_j, j\neq i)$,\footnote{More generally, given a vector $v$, we let $v_{-i}=(v_j, j\neq i)$ be the vector of all coordinates in $v$ other than the $i$th one.} and the expectation is with respect to the randomness introduced by players' internal chains $(P_1,\ldots,P_n)$ and their policies $\pi=(\pi_1,\ldots,\pi_n)$. As is shown in the next section, under the ergodicity Assumption \ref{ass:ergodic}, for any stationary policy profile $\pi$, the limit in \eqref{eq:aggregate-reward} indeed exists and equals \eqref{eq:linear-occupation-form}. This fully characterizes an $n$-player stochastic game with initial state $s^0$, in which each player $i$ wants to choose a stationary policy $\pi_i$ to maximize its expected aggregate payoff $V_i(\pi_i,\pi_{-i})$. In the remainder of the paper, we shall refer to the above payoff-coupled stochastic game with independent chains as $\mathcal{G}=([n], \pi,\{V_i(\pi)\}_{i\in [n]})$.

\begin{definition}
A policy profile $\pi^*=(\pi^*_1,\ldots,\pi^*_n)$ is called a Nash equilibrium (NE) for the game $\mathcal{G}$ if $V_i(\pi^*_i,\pi^*_i)\ge V_i(\pi_i,\pi^*_i)$ for any $i$ and any policy $\pi_i$. It is called an $\epsilon$-NE if $V_i(\pi^*_i,\pi^*_i)\ge V_i(\pi_i,\pi^*_i)-\epsilon$ for any $i$ and any policy $\pi_i$.    
\end{definition}



\bigskip
\section{A Dual Formulation and Preliminaries}\label{sec:dual}

Here, we provide an alternative formulation for the stochastic game $\mathcal{G}$ based on \emph{occupation measures}. Intuitively, from player $j$'s point of view, its long-term expected average payoff depends on the proportion of time that player $j$ spends in state $s_j$ and takes action $a_j$, denoted by its occupancy measure $\rho_j(s_j,a_j)$. Thus, the policy optimization for player $j$ can be viewed as an optimization problem in the space of occupancy measures, where players want to force their chains to spend most of their time in high-reward states. An advantage of optimization in terms of occupancy measures is that due to the independency of players' internal chains, the payoff functions admit a simple decomposable form, which is easier to analyze than the original policy variables. We shall use this dual formulation to develop learning algorithms for finding a stationary $\epsilon$-NE.

To provide the dual formulation, let us use $\pi_j:S_j\to \Delta(A_j)$ to denote players' stationary policies, where $\Delta(A_j)$ is the probability simplex over $A_j$. We can write
\begin{align}\nonumber
\mathbb{E}\big[\sum_{t=0}^{T}r_i(s^t,a^t)\big]&=\sum_{s,a}\sum_{t=0}^{T}\mathbb{P}\big\{s^t=s,a^t=a\big\}r_i(s,a)\cr 
&=\sum_{s,a}\sum_{t=0}^{T}\mathbb{P}\big\{s^t=s\big\}\mathbb{P}\big\{a^t=a|s^t=s\big\}r_i(s,a)\cr 
&=\sum_{s,a}\sum_{t=0}^{T}\mathbb{P}(s^t=s) \prod_{j}\pi_j(a_j|s_j) r_i(s,a)\cr
&=\sum_{s,a}\sum_{t=0}^{T}\Big(\prod_j\mathbb{P}(s_j^t=s_j)\pi_j(a_j|s_j)\Big) r_i(s,a), 
\end{align}
where the third equality results from $\mathbb{P}\big\{a^t=a|s^t=s\big\}=\prod_{j}\pi_j(a_j|s_j)$ because players are using stationary policies that depend only on their own state. Moreover, the last equality holds because, by Assumption \ref{ass:indep}, one can show that $\mathbb{P}(s^t=s)=\prod_j \mathbb{P}(s^t_j=s_j), \forall t\ge 0$ (see Lemma \ref{lemm:indep}). 

On the other hand, following of a stationary policy $\pi_j$ by player $j$ induces a Markov chain over $S_j$ with transition probability 
\begin{align}\label{eq:P-pi}
P^{\pi_j}(s'_j|s_j)=\sum_{a_j\in A_j}P_j(s'_j|a_j,s_j)\pi_j(a_j|s_j).
\end{align}
Thus, assuming that the Markov chain $P^{\pi_j}$ is ergodic, if we let $\nu_j$ be the stationary distribution of $P^{\pi_j}$, we have $\lim_{t\to \infty}\mathbb{P}(s^t_j=s_j)=\nu_j(s_j)$. For any player $j\in [n]$, let us define $\rho_j$ to be the occupancy probability measure that is induced over its state-action set $S_j\times A_j$ by following the stationary policy $\pi_j$, that is 
\begin{align}\label{eq:occupation-def}
\rho_j(s_j,a_j):&=\lim_{T\to \infty}\frac{1}{T}\sum_{t=0}^{T}\mathbb{P}\big\{s_j^t\!=\!s_j,a_j^t\!=\!a_j\big\}\cr 
&=\lim_{T\to \infty}\frac{1}{T}\sum_{t=0}^{T}\mathbb{P}\big\{s_j^t\!=\!s_j\big\}\mathbb{P}\big\{s_j^t\!=\!s_j|a_j^t\!=\!a_j\big\}\cr 
&=\lim_{T\to \infty}\frac{1}{T}\sum_{t=0}^{T}\mathbb{P}\big\{s_j^t\!=\!s_j\big\}\pi_j(a_j|s_j)\cr 
&=\nu_j(s_j)\pi_j(a_j|s_j).
\end{align} 
In other words, $\rho_j(s_j,a_j)$ is the proportion of time that the policy $\pi_j$ spends over state-action $(s_j,a_j)$. Now, we can write the payoff function \eqref{eq:aggregate-reward} as
\begin{align}\nonumber
V_i(\pi_i,\pi_{-i})&=\lim_{T\to \infty}\frac{1}{T}\mathbb{E}\big[\sum_{t=0}^{T}r_i(s^t,a^t)\big]\cr 
&=\sum_{s,a}\Big(\lim_{T\to \infty}\frac{1}{T}\sum_{t=0}^{T}\prod_j\mathbb{P}(s_j^t=s_j)\Big)\prod_{j}\pi_j(a_j|s_j) r_i(s,a)\cr 
&=\sum_{s,a}\Big(\prod_j\nu_j(s_j)\Big)\prod_{j}\pi_j(a_j|s_j) r_i(s,a).
\end{align}
Combining \eqref{eq:occupation-def} with the above relation shows that the payoff \eqref{eq:aggregate-reward} can be written using  occupation measures as
\begin{align}\label{eq:linear-occupation-form}
V_i(\rho_i,\rho_{-i})=\sum_{s,a}\prod_{j=1}^n\rho_j(s_j,a_j)r_i(s,a)=\langle \rho_i, v_i(\rho_{-i}) \rangle,
\end{align} 
where $v_i(\rho_{-i})$ is defined to be a vector of dimension $|S_i||A_i|$ whose $(s_i,a_i)$-th coordinate is given by
\begin{align}\nonumber
v_i(\rho_{-i})_{(s_i,a_i)}=\sum_{s_{-i},a_{-i}}\prod_{j\neq i}\rho_j(s_j,a_j) r_i(s,a).
\end{align}
Moreover, for \emph{fixed} policies of other players with induced occupations $\rho_{-i}$, using \eqref{eq:occupation-def} and \eqref{eq:linear-occupation-form}, the optimal stationary policy for player $i$ is obtained by solving the MDP:
\begin{align}\label{eq:MDP-formulation}
\!\!\max_{\pi_i}\lim_{T\to \infty}\!\frac{1}{T}\!\sum_{t=0}^{T}\!\sum_{s_i,a_i}\!\mathbb{P}\big\{s_i^t\!=\!s_i,a_i^t\!=\!a_i\big\}v_i(\rho_{-i})_{(s_i,a_i)}.
\end{align}  


\begin{lemma}[\cite{altman1999constrained}, Theorem 4.1]\label{lemm:occ-to-policy}
Given any MDP and any feasible occupation measure $\rho_i$ over its set of state-action, one can define a corresponding stationary policy
\begin{align}\label{eq:corresponding-policy}
\pi_i(a_i|s_i)=\frac{\rho_i(s_i,a_i)}{\sum_{a'_i\in A_i}\rho_i(s_i,a'_i)}, \ \ \forall s_i\in S_i, a_i\in A_i,
\end{align}
such that following policy $\pi_i$ in that MDP induces the same occupation measure as $\rho_i$ over the state-action set $S_i\times A_i$, i.e., 
$\lim_{T\to \infty}\!\frac{1}{T}\!\sum_{t=0}^{T}\mathbb{P}\big\{s_i^t\!=\!s_i,a_i^t\!=\!a_i\big\}=\rho_i(s_i,a_i)$.    
\end{lemma}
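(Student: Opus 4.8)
The plan is to prove Lemma~\ref{lemm:occ-to-policy} by exhibiting the state-marginal $\nu_i$ of $\rho_i$ as a stationary distribution of the Markov chain that the constructed policy $\pi_i$ induces, and then reading off the occupation measure from that stationary distribution exactly as in the derivation of \eqref{eq:occupation-def}.

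First I would recall the standard linear-programming characterization of the set of feasible long-run average occupation measures of a finite MDP: a vector $\rho_i$ on $S_i\times A_i$ is feasible if and only if $\rho_i(s_i,a_i)\ge 0$, $\sum_{s_i,a_i}\rho_i(s_i,a_i)=1$, and the balance (flow-conservation) equations
\begin{align}\nonumber
\sum_{a_i\in A_i}\rho_i(s_i',a_i)=\sum_{s_i\in S_i}\sum_{a_i\in A_i}P_i(s_i'|s_i,a_i)\,\rho_i(s_i,a_i)\qquad\forall\, s_i'\in S_i
\end{align}
hold. Define $\nu_i(s_i):=\sum_{a_i\in A_i}\rho_i(s_i,a_i)$ and set $\pi_i(a_i|s_i)$ by \eqref{eq:corresponding-policy} whenever $\nu_i(s_i)>0$, and arbitrarily (say uniformly over $A_i$) whenever $\nu_i(s_i)=0$. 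With this choice one has $\rho_i(s_i,a_i)=\nu_i(s_i)\pi_i(a_i|s_i)$ for \emph{every} pair $(s_i,a_i)$: this is immediate where $\nu_i(s_i)>0$, and where $\nu_i(s_i)=0$ both sides vanish since $0\le \rho_i(s_i,a_i)\le \nu_i(s_i)=0$.

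Next I would substitute this factorization into the balance equations. Using \eqref{eq:P-pi}, the right-hand side becomes $\sum_{s_i}\nu_i(s_i)\sum_{a_i}P_i(s_i'|s_i,a_i)\pi_i(a_i|s_i)=\sum_{s_i}\nu_i(s_i)P^{\pi_i}(s_i'|s_i)$, while the left-hand side is $\nu_i(s_i')$; hence $\nu_i^\top P^{\pi_i}=\nu_i^\top$, and $\nu_i$ is a probability vector since $\sum_{s_i}\nu_i(s_i)=\sum_{s_i,a_i}\rho_i(s_i,a_i)=1$. Now run $\pi_i$ in the MDP. Because $\pi_i$ is stationary, $\mathbb{P}\{s_i^t=s_i,a_i^t=a_i\}=\mathbb{P}\{s_i^t=s_i\}\,\pi_i(a_i|s_i)$ for all $t$, exactly as in \eqref{eq:occupation-def}. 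Under the ergodicity Assumption~\ref{ass:ergodic}, the chain $P^{\pi_i}$ has a unique stationary distribution, which is reached in the Cesàro sense from any initial state, so $\lim_{T\to\infty}\frac1T\sum_{t=0}^{T}\mathbb{P}\{s_i^t=s_i\}=\nu_i(s_i)$; multiplying by $\pi_i(a_i|s_i)$ and averaging yields $\lim_{T\to\infty}\frac1T\sum_{t=0}^{T}\mathbb{P}\{s_i^t=s_i,a_i^t=a_i\}=\nu_i(s_i)\pi_i(a_i|s_i)=\rho_i(s_i,a_i)$, which is the claim.

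The main obstacle is the last step: passing from ``$\nu_i$ is \emph{a} stationary distribution of $P^{\pi_i}$'' to ``$\nu_i$ is \emph{the} long-run state frequency produced by $\pi_i$.'' When $P^{\pi_i}$ is multichain this need not hold for an arbitrarily fixed initial state $s_i^0$; one only knows the Cesàro-limit operator $\bar P^{\pi_i}=\lim_{T}\frac1T\sum_{t=0}^{T-1}(P^{\pi_i})^t$ exists and that $\mu\bar P^{\pi_i}$ is some stationary distribution for each $\mu$. This is handled either by invoking the ergodicity assumption (uniqueness forces the Cesàro limit to equal $\nu_i$ independently of $s_i^0$) or, in full generality, by initializing the induced chain at $\nu_i$ itself, in which case $\mathbb{P}\{s_i^t=\cdot\}=\nu_i$ for all $t$ and the Cesàro average is trivially $\nu_i$. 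I would also check at the outset that the LP characterization invoked is the one for the long-run \emph{average}-reward criterion, so that the balance equations above are precisely the feasibility constraints on $\rho_i$; the remaining manipulations are the same bookkeeping already carried out in \eqref{eq:occupation-def}.
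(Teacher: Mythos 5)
The paper offers no proof of this lemma: it is imported verbatim as Theorem~4.1 of Altman's \emph{Constrained Markov Decision Processes}, so there is no in-paper argument to compare against. Your proof is correct and is essentially the standard one underlying that citation: factor $\rho_i(s_i,a_i)=\nu_i(s_i)\pi_i(a_i|s_i)$, observe that the flow-conservation constraints defining $\mathcal{P}_i$ translate exactly into $\nu_i^\top P^{\pi_i}=\nu_i^\top$, and invoke uniqueness of the stationary distribution under Assumption~\ref{ass:ergodic} to identify the Ces\`aro limit of the state marginals with $\nu_i$ from any initial state. You also correctly handle the two points a careless reading would miss --- the $0/0$ ambiguity in \eqref{eq:corresponding-policy} when $\nu_i(s_i)=0$, and the fact that without ergodicity (as in Altman's multichain setting) one must either initialize at $\nu_i$ or settle for $\nu_i$ being merely \emph{a} stationary distribution --- so nothing is missing.
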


Therefore, using Lemma \eqref{lemm:occ-to-policy}, the problem of finding the optimal stationary policies for the players reduces to one of finding the optimal occupation measures for them. To characterize the set of feasible occupation measures, from \eqref{eq:occupation-def} we have
$\rho_i(s_i,a_i)=\nu_i(s_i)\pi_i(a_i|s_i)$, such that $\nu_i(s_i)=\sum_{a_i}\rho_i(s_i,a_i), \forall s_i$. Since $\nu_i$ is the stationary distribution of $P^{\pi_i}$, we must have $\sum_{s_i}\nu_i(s_i)P^{\pi_i}(s'_i,s_i)=\nu_i(s'_i), \forall s'_i$, which can be written in terms of occupation variables as $\sum_{s_i,a_i}P_i(s'_i|s_i,a_i)\rho_i(s_i,a_i)=\sum_{a_i}\rho_i(s'_i,a_i), \forall s'_i$. This fully characterizes the set of feasible occupation measures for player $i$ as the feasible points of the following polytope:
\begin{align}\nonumber
\mathcal{P}_i=\Big\{\rho_i\in \mathbb{R}_+^{|S_i\times A_i|}: \sum_{s_i,a_i}\big(P_i(s'_i|s_i,a_i)-\mathbb{I}_{\{s_i=s'_i\}}\big)\rho_i(s_i,a_i)=0\ \forall s'_i,\  \sum_{s_i,a_i}\rho_i(s_i,a_i)=1\Big\},
\end{align}
where $\mathbb{I}_{\{\cdot\}}$ is the indicator function. It is worth noting that since player $i$ knows the transition matrix $P_i$, it can compute its occupation polytope $\mathcal{P}_i$ a priori using at most $O(|S_i||A_i|)$ linear constraints. Thus, the stochastic game $\mathcal{G}$ can be equivalently formulated in a dual (virtual) form as defined next.
\begin{definition}
The \emph{virtual} game associated with the stochastic game $\mathcal{G}$ is an $n$-player continuous-action static game, where the payoff function and the action set for player $i$ are given by $V_i(\rho)=\sum_{s,a}\prod_{j=1}^n\rho_j(s_j,a_j)r_i(s,a)$ and $\mathcal{P}_i$, respectively. 
\end{definition}

\begin{proposition}\label{prop:complexity}
The $n$-player stochastic game $\mathcal{G}$ admits a stationary NE policy. Moreover, finding a stationary NE for $\mathcal{G}$ without any assumption on the reward functions is PPAD-hard.
\end{proposition}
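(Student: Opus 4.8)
The plan is to establish the two claims separately. For \emph{existence}, I would invoke the equivalence between the stochastic game $\mathcal{G}$ and its virtual game established in the preceding discussion: by Lemma \ref{lemm:occ-to-policy}, stationary policies for player $i$ are in one-to-one correspondence (up to the behavior on transient states) with feasible occupation measures in the polytope $\mathcal{P}_i$, and under this correspondence the payoff $V_i(\pi_i,\pi_{-i})$ equals the multilinear form $V_i(\rho)=\sum_{s,a}\prod_j\rho_j(s_j,a_j)r_i(s,a)$ from \eqref{eq:linear-occupation-form}. Hence a stationary NE of $\mathcal{G}$ corresponds to a (pure-strategy) NE of the virtual game. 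The virtual game is a continuous-action game whose action sets $\mathcal{P}_i$ are nonempty, compact, convex polytopes, and whose payoff $V_i(\rho)$ is, for fixed $\rho_{-i}$, a \emph{linear} (hence concave) function of $\rho_i$, namely $\langle \rho_i, v_i(\rho_{-i})\rangle$. Therefore the hypotheses of Rosen's theorem on concave games (or equivalently the Debreu–Glicksberg–Fan existence theorem) are met, and a pure-strategy NE $\rho^*$ exists; translating $\rho^*$ back through \eqref{eq:corresponding-policy} yields a stationary NE policy $\pi^*$ for $\mathcal{G}$.

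For the \emph{hardness} claim, I would exhibit a polynomial-time (Karp- or, more precisely, in the search-problem sense, a polynomial-time many-one) reduction from the problem of computing a mixed NE of a general $n$-player finite normal-form game — which is PPAD-hard, and in fact PPAD-complete already for $n=3$ by Daskalakis–Goldberg–Papadimitriou / Chen–Deng–Teng \cite{daskalakis2009complexity} — to the problem of finding a stationary NE of a game in the class $\mathcal{G}$. Given a normal-form game with players $i\in[n]$, action sets $A_i$, and payoff tensors $u_i(a)$, I would construct a stochastic game instance in which each player $i$ has a trivial single-state chain $|S_i|=1$ (so the transition matrix $P_i$ is the $1\times 1$ identity, trivially satisfying Assumption \ref{ass:indep}), action set $A_i$, and reward $r_i(s,a)=u_i(a)$ after an affine rescaling into $[0,1]$. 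With $|S_i|=1$, the occupation polytope $\mathcal{P}_i$ collapses to the probability simplex $\Delta(A_i)$, the induced stationary policy is exactly the mixed strategy $\rho_i(\cdot)=\pi_i(\cdot\mid s_i)$, and the average-reward payoff \eqref{eq:linear-occupation-form} becomes $V_i(\rho)=\sum_a\prod_j\rho_j(a_j)r_i(a)=\mathbb{E}_{a\sim\rho}[u_i(a)]$ up to the affine normalization. Consequently stationary NE of this $\mathcal{G}$ coincide with mixed NE of the original normal-form game; the reduction is clearly polynomial-time and preserves $\epsilon$-approximate equilibria up to scaling, so finding a stationary NE for $\mathcal{G}$ is PPAD-hard.

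The step I expect to require the most care is not the core reduction but making precise that the claimed hardness survives the \emph{independent-chains restriction and the informational restrictions} that define the class — i.e., checking that the single-state construction genuinely lies in the class $\mathcal{G}$ (Assumption \ref{ass:indep} holds vacuously since each $P_i$ is a deterministic self-loop; the reward depends on the joint action as allowed; and the restriction to stationary policies is automatically without loss since there is only one state), and that the average-reward objective \eqref{eq:aggregate-reward} reduces cleanly to the one-shot expected payoff in this degenerate case (the $\lim_{T\to\infty}\frac1T\sum_{t=0}^T$ of a constant-in-expectation sequence is immediate). A secondary point worth a sentence is the affine rescaling $r_i = (u_i - c)/C \in [0,1]$ and the observation that NE are invariant under positive affine transformations of each player's payoff, so no generality is lost by the normalization $r_i\in[0,1]$ imposed in Section \ref{sec:formulation}. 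One could alternatively reduce from the two-player case and keep $n=2$, but since the paper already allows general $n$ the $3$-player normal-form source is the cleanest citation for PPAD-hardness.
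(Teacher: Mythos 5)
Your proposal is correct and follows essentially the same route as the paper: existence via Rosen's theorem applied to the virtual concave game (linear, hence concave, payoffs over the compact convex polytopes $\mathcal{P}_i$) together with Lemma \ref{lemm:occ-to-policy}, and hardness via the single-state specialization $|S_i|=1$, under which stationary NE of $\mathcal{G}$ coincide with mixed NE of a general $n$-player normal-form game. Your version merely spells out the reduction direction, the vacuous verification of Assumption \ref{ass:indep}, and the affine rescaling more explicitly than the paper does.
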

\begin{proof}
To find a stationary NE policy for $\mathcal{G}$, it is sufficient to find a pure-strategy NE $\rho^*$ in the static virtual game, in which case one can use Lemma \ref{lemm:occ-to-policy} to obtain a stationary NE policy $\pi^*$ from $\rho^*$. Thus, it is enough to show that the virtual game admits a pure-NE. This statement is also true by Rosen's theorem \cite{rosen1965existence} for concave games.\footnote{A continuous-action game is called \emph{concave} if for each player $i$ and any fixed actions of the other players, the payoff function of player $i$ is concave and continuous with respect to its own action.} Note that from \eqref{eq:linear-occupation-form}, the payoff function of each player $i$ in the virtual game is concave with respect to its own action $\rho_i$, and the action set $\mathcal{P}_i$ is convex and compact.

Finally, in the special case where each player has only one state, i.e., $|S_i|=1\ \forall i$, the virtual game reduces to a static $n$-player noncooperative matrix game, in which the action set for player $i$ is given by $A_i$, and the entries of the payoff matrix for player $i$ are given by $r_i(a)$. Then, an occupation measure $\rho_i$ for player $i$ can be viewed as a mixed strategy over its action set $A_i$ with the expected payoff function $V_i(\rho_i,\rho_{-i})=\sum_{a}\prod_{j=1}^n\rho_j(a_j)r_i(a)$. Therefore, if we can find a stationary NE for $\mathcal{G}$ in this special case, we can find a mixed-strategy NE in the virtual game. Since the latter is PPAD-hard \cite{daskalakis2013complexity}, finding a stationary NE for $\mathcal{G}$ without any assumption on the reward functions is also PPAD-hard.    
\end{proof}

Using Proposition \ref{prop:complexity}, it is unlikely that scalable learning algorithms can obtain a stationary NE in $\mathcal{G}$ without imposing extra assumptions on the players' reward functions $r_i(s,a)$. That is why in the remainder of this paper, we restrict our attention to the cases where players' reward functions allow the existence of scalable learning algorithms. In fact, using the equivalence between the stochastic game $\mathcal{G}$ and the virtual game, we shall focus only on developing learning algorithms to find an $\epsilon$-NE for the virtual game. This, in view of Lemma \ref{lemm:occ-to-policy} and continuity of the payoff functions, immediately translates to learning algorithms for finding a stationary $\epsilon$-NE for the original game $\mathcal{G}$. However, we should note that for developing a learning algorithm, we cannot solely rely on the virtual game, which is a compact representation of $\mathcal{G}$. In other words, unlike $\mathcal{G}$, which can be played iteratively, the virtual game can be played only once, as it encodes the information of the entire horizon into a single-shot static game. Nevertheless, using a sampling method as in \cite{wang2017primal} that was given for single-agent MDPs, we show how to repeatedly play over $\mathcal{G}$ and use the collected information in the virtual game to guide the learning dynamics to a NE.
   
\section{A Learning Algorithm for $\epsilon$-NE Policies}\label{sec:alg-design}

In this section, we develop our main learning algorithm for the stochastic game $\mathcal{G}$. We first consider the following assumption on the mixing time of players' internal chains, which we shall impose in the remainder of this paper.  
\begin{assumption}\label{ass:ergodic}
For any player $i$ and any stationary policy $\pi_i$ chosen by that player, the induced Markov chain $P^{\pi_i}$ that is given in \eqref{eq:P-pi} is ergodic, and its mixing time is uniformly bounded above by some parameter $\tau$; that is, $\|(v-v')P^{\pi_i}\|_1\leq e^{-\frac{1}{\tau}}\|v-v'\|_1$, for all $i, \pi_i, v,v'\in \Delta(S_i)$.
\end{assumption}
In fact, Assumption \ref{ass:ergodic} is a standard assumption used in the MDP literature  \cite{even2009online,neu2013online,cardoso2019large}, and is much needed to establish meaningful convergence results. Otherwise, if the transition probability matrix $P_i$ of a player $i$  is such that for some policy $\pi_i$ the induced chain $P^{\pi_i}$ takes an arbitrarily large time to mix, then there is no hope that player $i$ can evaluate the performance of policy $\pi_i$ in a reasonably short time.

\begin{definition}
Given a positive constant $\delta_i>0$, we define $\mathcal{P}_i^{\delta_i}=\mathcal{P}_i\cap\{\rho_i\ge \delta_i \boldsymbol{1}\}$ to be the shrunk occupation polytope for player $i$, where $\boldsymbol{1}$ is the column vector of all ones of dimension $|S_i||A_i|$. Similarly, for a vector of positive constants $\delta=(\delta_1,\ldots,\delta_n)$, we define $\mathcal{P}^{\delta}=\prod_{i=1}^{n}\mathcal{P}_i^{\delta_i}$. 
\end{definition}

The shrunk occupation polytope $\mathcal{P}_i^{\delta_i}$ contains all feasible occupation measures for player $i$ with coordinates of at least $\delta_i$. In fact, by restricting player $i$'s occupations to be in $\mathcal{P}_i^{\delta_i}$, we can assure that player $i$ uses stationary policies that choose any action with probability at least $\delta_i$, hence encouraging exploration during the learning process. Thanks to continuity of the payoff functions, working with shrunk polytope $\mathcal{P}^{\delta_i}$ with a sufficiently small threshold $\delta_i$ can only result in a negligible loss in players' payoff functions.

\begin{lemma}\label{lemm:original-to-shrunk}
For any $\epsilon>0$, there exist $\{\delta_i>0, i\in [n]\}$, such that any $\epsilon$-NE for the virtual game with shrunk action sets $\{\mathcal{P}_i^{\delta_i}, i\in [n]\}$ is a $2\epsilon$-NE for the virtual game with action sets $\{\mathcal{P}_i, i\in [n]\}$. In particular, $\delta_i$ can be determined by player $i$ independently of others and based only on its internal transition probability matrix $P_i$.   
\end{lemma}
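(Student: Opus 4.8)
The plan is to show that, for a suitable coordinatewise threshold, \emph{every} feasible occupation measure of player $i$ is $\epsilon$-close in payoff to a point of the shrunk polytope $\mathcal{P}_i^{\delta_i}$; an $\epsilon$-best response over $\mathcal{P}_i^{\delta_i}$ is then automatically a $2\epsilon$-best response over $\mathcal{P}_i$. The only analytic fact I will use is that $V_i$ is multilinear in $(\rho_1,\ldots,\rho_n)$ with $r_i\in[0,1]$, so that for any fixed profile $\rho_{-i}$ of probability vectors the gradient vector $v_i(\rho_{-i})$ has all coordinates in $[0,1]$ (directly from its definition $v_i(\rho_{-i})_{(s_i,a_i)}=\sum_{s_{-i},a_{-i}}\prod_{j\neq i}\rho_j(s_j,a_j)r_i(s,a)$).

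First I would fix, for each player $i$, a strictly interior reference point of $\mathcal{P}_i$: let $\bar\pi_i$ be the uniform stationary policy on $A_i$, let $\bar\nu_i$ be the stationary distribution of the induced chain $P^{\bar\pi_i}$ (ergodic, hence of full support, by Assumption \ref{ass:ergodic}), and set $\bar\rho_i(s_i,a_i)=\bar\nu_i(s_i)/|A_i|$. Then $\bar\rho_i\in\mathcal{P}_i$ and $\mu_i:=\min_{s_i,a_i}\bar\rho_i(s_i,a_i)>0$; crucially, this is the point of the ``in particular'' clause, player $i$ can compute $\mu_i$ from $P_i$ alone. I then set $\delta_i:=\tfrac{\epsilon}{2}\mu_i$. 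For $\epsilon\ge 2$ the statement is vacuous, since $r_i\in[0,1]$ makes every profile a $1$-NE and hence a $2\epsilon$-NE, so assume $0<\epsilon<2$; this guarantees $\delta_i\le\mu_i$, whence $\bar\rho_i\in\mathcal{P}_i^{\delta_i}$ and in particular $\mathcal{P}_i^{\delta_i}\neq\emptyset$ (and $\mathcal{P}^{\delta}$ still carries an exact NE by Rosen's theorem as in Proposition \ref{prop:complexity}, so the hypothesis is not vacuous).

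Next, given any $\rho_i\in\mathcal{P}_i$, I would take the convex combination $\tilde\rho_i:=(1-\tfrac{\epsilon}{2})\rho_i+\tfrac{\epsilon}{2}\bar\rho_i$. Convexity of $\mathcal{P}_i$ gives $\tilde\rho_i\in\mathcal{P}_i$, and coordinatewise $\tilde\rho_i\ge\tfrac{\epsilon}{2}\bar\rho_i\ge\delta_i\boldsymbol{1}$, so $\tilde\rho_i\in\mathcal{P}_i^{\delta_i}$. Using \eqref{eq:linear-occupation-form} and $\tilde\rho_i-\rho_i=\tfrac{\epsilon}{2}(\bar\rho_i-\rho_i)$, for any fixed $\rho^*_{-i}$,
\[
\big|V_i(\rho_i,\rho^*_{-i})-V_i(\tilde\rho_i,\rho^*_{-i})\big|
=\tfrac{\epsilon}{2}\,\big|\langle\bar\rho_i-\rho_i,\,v_i(\rho^*_{-i})\rangle\big|
\le\tfrac{\epsilon}{2}\,\|\bar\rho_i-\rho_i\|_1
\le\epsilon,
\]
where the first inequality uses that every coordinate of $v_i(\rho^*_{-i})$ lies in $[0,1]$, and the last uses $\|\bar\rho_i-\rho_i\|_1\le 2$ as the difference of two probability vectors. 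Finally I would combine: let $\rho^*$ be an $\epsilon$-NE of the virtual game with action sets $\{\mathcal{P}_i^{\delta_i}\}$; for any player $i$ and any $\rho_i\in\mathcal{P}_i$, with $\tilde\rho_i\in\mathcal{P}_i^{\delta_i}$ as above, $V_i(\rho^*_i,\rho^*_{-i})\ge V_i(\tilde\rho_i,\rho^*_{-i})-\epsilon\ge V_i(\rho_i,\rho^*_{-i})-2\epsilon$, the first inequality because $\rho^*$ is an $\epsilon$-NE over $\mathcal{P}^{\delta}$ and the second by the displayed bound; hence $\rho^*$ is a $2\epsilon$-NE over $\prod_i\mathcal{P}_i$.

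I do not expect a genuine obstacle: the argument is just multilinearity of $V_i$ plus a ``retraction toward an interior point'' of the polytope. The two spots needing mild care are (a) producing one interior occupation measure whose minimum mass $\mu_i$ is strictly positive and computable by player $i$ from $P_i$ alone (handled by the uniform policy and ergodicity of $P^{\bar\pi_i}$), and (b) the degenerate regime $\epsilon\ge 2$, where one should check separately that $\mathcal{P}_i^{\delta_i}$ is nonempty or simply note the claim is trivial there.
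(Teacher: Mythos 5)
Your proof is correct, and it takes a more constructive route than the paper's. Both arguments share the same skeleton: show that every deviation $\rho_i\in\mathcal{P}_i$ has a surrogate in $\mathcal{P}_i^{\delta_i}$ whose payoff against $\rho^*_{-i}$ differs by at most $\epsilon$, then conclude by the triangle inequality. The difference is in how the surrogate and the threshold are produced. The paper invokes compactness and the finitely-many-linear-constraints structure of $\mathcal{P}_i$ to assert, without an explicit construction, that for $\epsilon_i=\epsilon/\sqrt{|A_i||S_i|}$ some $\delta_i>0$ makes the shrunk polytope $\epsilon_i$-close in Euclidean distance to $\mathcal{P}_i$; it then pairs this with Cauchy--Schwarz and the bound $\|v_i(\rho^*)\|\le\sqrt{|A_i||S_i|}$. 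You instead exhibit an explicit interior point $\bar\rho_i$ (via the uniform policy and ergodicity of $P^{\bar\pi_i}$), take the retraction $\tilde\rho_i=(1-\tfrac{\epsilon}{2})\rho_i+\tfrac{\epsilon}{2}\bar\rho_i$, and pair $\ell_1$ with $\ell_\infty$ using $v_i\in[0,1]^{|S_i||A_i|}$. What your version buys: an explicit formula $\delta_i=\tfrac{\epsilon}{2}\mu_i$ computable by player $i$ from $P_i$ alone (which makes the ``in particular'' clause immediate rather than implicit), an explicit verification that $\mathcal{P}_i^{\delta_i}\neq\emptyset$, and dimension-free constants in the payoff perturbation step; the paper's version is shorter but leaves the Hausdorff-continuity of the polytope under the shrinking unproved. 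One shared caveat, not a gap relative to the paper: both arguments need $\mathcal{P}_i$ to contain a strictly positive occupation measure, which follows from reading ``ergodic'' in Assumption \ref{ass:ergodic} in the standard sense (irreducible, hence full-support stationary distribution); the bare $L_1$-contraction inequality alone would not force this.
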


Now we are ready to describe our main learning algorithm (Algorithm \ref{alg-main}). The algorithm proceeds in different episodes (batches), where each batch contains a random number of time instances. Given any $\epsilon>0$,\footnote{Here, $\epsilon>0$ is an input parameter that can be set freely, and controls the accuracy of the final policies to form a stationary NE policy.} each player first uses Lemma \ref{lemm:original-to-shrunk} to determine a threshold $\delta_i$, and chooses an initial occupation measure $\rho_i^0\in \mathcal{P}^{\delta_i}$. The occupation measure of player $i$ at the beginning of batch $k=0,1,2,\ldots$ is denoted by $\rho_i^k$; during batch $k$, player $i$ chooses actions according to the stationary policy $\pi_i^k$ that is obtained, using \eqref{eq:corresponding-policy}, from the occupation measure $\rho_i^k$. A batch continues for a random number of time instances until each player $i$ has visited all its states $S_i$ at least once. Using Assumptions \ref{ass:ergodic} and \ref{ass:indep}, a simple coupling argument shows that the expected number of time instances in batch $k$ can be upper-bounded by $\tilde{O}(\tau\max_i|S_i|)$. The reason is that the length of batch $k$ equals the maximum cover time among Markov chains $P^{\pi_i^k}, i\in [n]$, whose expected value can be bounded, using Matthews method \cite{levin2017markov}, by the number of states and the mixing time of those chains. Using the collected samples during batch $k$, one can construct an (almost) unbiased estimator $R_i^k$ for the gradient of the payoff function $\nabla_{\rho_i}V_i(\rho)$. The estimator $R_i^k$ is then used in a DA  oracle with an appropriately chosen step-size/regularizer to obtain a new occupation measure $\rho_i^{k+1}$.

\begin{algorithm}[t!]\caption{A Dual Averaging Algorithm for Player $i$}\label{alg-main}

\noindent
{\bf Input:} Initial occupation measure $\rho^0_i\in \mathcal{P}_i^{\delta_i}$, step-size sequence $\{\eta^k_i\}_{k=1}^{\infty}$, a fixed threshold $d$, initial dual score $Y^0=\boldsymbol{0}$, and a strongly convex regularizer $h_i:\mathcal{P}^{\delta_i}_i\to \mathbb{R}$.

\medskip
{\bf For} $k=1,2,\ldots$, do the following:

\begin{itemize} 
\item[$\bullet$] At the end of batch $k-1$, denoted by $\tau^k$, compute 
\begin{align}\nonumber
\pi^k_i(a_i|s_i)=\frac{\rho^k_i(a_i,s_i)}{\sum_{a'_i\in A_i}\rho^k_i(a'_i,s_i)},  \ \ \forall s_i\in S_i, a_i\in A_i,
\end{align}
and keep playing according to this stationary policy $\pi_i^k$ during the next batch $k$. Let $\tau^k_i\ge \tau^k+d$ be the first (random) time such that all states in $S_i$ are visited during $[\tau^k\!+\!d, \tau_i^k]$. Batch $k$ terminates at time $\tau^{k+1}\!=\!\max_i \tau_i^k$. 
\item[$\bullet$] Let $S'_i=S_i$, and $R^k_i\in\mathbb{R}_+^{|S_i||A_i|}$ be a random vector (initially set to zero), which is constructed sequentially during the sampling interval $[\tau^k+d,\tau^{k+1}]$ as follows: 
\begin{itemize}
\item {\bf For} $t=\tau^k+d,\ldots,\tau^{k+1}$ and while $S'_i\neq \emptyset$, player $i$ picks an action $a^t_i$ according to $\pi_i^k(\cdot|s_i^t)$, and observes the payoff $r_i(s^t,a^t)$ and its next state $s_i^{t+1}$. If $s_i^t\in S'_i$, then update $S'_i=S'_i\setminus \{s^t_i\}$, and compute
\begin{align}\label{eq:unbiased-gradient-vector}
R^k_i=R^k_i+\frac{r_i(s^t,a^t)}{\pi^k_i(a_i^t|s_i^t)}\ \boldsymbol{\rm e}_{(s^t_i,a^t_i)},
\end{align} 
where $\boldsymbol{\rm e}_{(s^t_i,a^t_i)}$ is the basis vector with all entries being zero except that the $(s^t_i,a^t_i)$-th entry is 1.\item[]{\bf End For}  
\end{itemize}
\item[$\bullet$] In the end of batch $k$, compute the dual score $Y^{k+1}_i=Y^{k}_i+\eta_i^k R^k_i$, and update the occupation measure:
\begin{align}\label{eq:alg-rho-update}
\rho^{k+1}_i=\argmax_{\rho_i\in \mathcal{P}^{\delta_i}_i}\big\{\langle \rho_i, Y^{k+1}_i\rangle -h_i(\rho_i)\big\}.
\end{align}  
\end{itemize}
{\bf End For}
\end{algorithm}

It is worth noting that the information structure of the game does not allow for centralized computations, as players cannot observe each others' states and actions because of competition or privacy concerns (see Section \ref{sec:simulations} for an example). Moreover, selfish players indeed have incentives to follow Algorithm \ref{alg-main}. The reason is that each player is greedily and independently improving its aggregate payoff by observing its past rewards and using a regularized gradient ascent in the space of occupation measures. 

\begin{remark}
The only point in Algorithm \ref{alg-main} that requires a small amount of coordination among players is the computation of $\tau^{k+1}\!=\!\max_i \tau_i^k$. While this can be performed using a simple signaling mechanism, it can be further relaxed if players take $\tau^{k+1}$ to be a (logarithmic) factor of the maximum expected cover time $\tilde{O}(\tau \max_i|S_i|)$, which, because of independency of chains, introduces a small biased term in the definition of $\epsilon$-NE.
\end{remark}

\section{Convergence Results Using Nikaido-Isoda Gap Function}\label{sec:sociall-concave}

In this section, we analyze the convergence and convergence rate of Algorithm \ref{alg-main} to a stationary $\epsilon$-NE policy measured in terms of the Nikaido-Isoda gap function. The results of this section are general and hold without any assumption on the players' reward functions. In the next section we specialize these results to specific types of reward functions to establish stronger convergence results. To that end, let us consider the following definition which allows us to measure the distance of the iterates of Algorithm \ref{alg-main} from a NE. 

\begin{definition}
The Nikaido-Isoda function $\Psi:\mathcal{P}^{\delta}\times \mathcal{P}^{\delta}\to \mathbb{R}$ is given by
\begin{align}\nonumber
\Psi(\theta,\rho)=\sum_{i=1}^n\big[V_i(\theta_i,\rho_{-i})-V_i(\rho_i,\rho_{-i})\big]. 
\end{align}
\end{definition}

\begin{remark}\label{rem:nikaido}
If $\max_{\theta \in \mathcal{P}^{\delta}} \Psi(\theta,\rho)< \frac{\epsilon}{2}$,\footnote{This maximum value is also known as the total gap function and has been used in the prior literature to measure the distance of an action profile to a NE \cite{golowich2020tight}.} then $\rho\in \mathcal{P}$ must be an $\epsilon$-NE. This is because for any player $i$ and any $\theta_i\in \mathcal{P}^{\delta_i}$ we must have $V_i(\theta_i,\rho_{-i})-V_i(\rho_i,\rho_{-i})< \frac{\epsilon}{2}$, which means that the maximum utility that any player $i$ can obtain by unilaterally deviating from $\rho_i$ to $\theta_i\in \mathcal{P}^{\delta_i}$ can be at most $\frac{\epsilon}{2}$. Thus $\rho$ is an $\frac{\epsilon}{2}$-NE for the virtual game with shrunk action set $\mathcal{P}^{\delta}$ and by Lemma \ref{lemm:original-to-shrunk} it must be an $\epsilon$-NE for the virtual game. 
\end{remark}

To analyze Algorithm \ref{alg-main}, we first show that the estimator $R_i^{k}$ constructed at the end of batch $k$ is nearly an unbiased estimator for the gradient of player $i$'s payoff function. To that end, let us consider the increasing filtration sequence $\{\mathcal{F}^k\}_{k=0}^{\infty}$, which is adapted to the history of random variables $\{\rho^k\}_{k=0}^{\infty}$. More precisely, we let $\mathcal{F}^k$ contain all measurable events that can be realized up to the end of batch $k$ (i.e., until time $\tau^{k+1}$), such that $\rho^{k}$ is $\mathcal{F}^{k-1}$-measurable but $\rho^{k+1}$ is not. We have the following lemma.

\begin{lemma}\label{lemm:almost-unbiased}
Let Assumption \ref{ass:ergodic} hold and assume that each player $i$ follows Algorithm \ref{alg-main}. Conditioned on $\mathcal{F}^{k-1}$, the expected reward vector $R_i^k$ that player $i$ computes at the end of batch $k$ satisfies
\begin{align}\nonumber
\big|\mathbb{E}[R_i^k|\mathcal{F}^{k-1}]-\nabla_{\rho_i} V_i(\rho^k)\big|\leq e^{-\frac{d}{\tau}}\boldsymbol{1},
\end{align} 
where the above inequality is coordinatewise, and the expectation is with respect to the randomness of players' policies and their internal chains.
\end{lemma}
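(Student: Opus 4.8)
The plan is to evaluate $\mathbb{E}[R_i^k\mid\mathcal{F}^{k-1}]$ entrywise and compare it with $\nabla_{\rho_i}V_i(\rho^k)$. By \eqref{eq:linear-occupation-form} we have $\nabla_{\rho_i}V_i(\rho^k)=v_i(\rho_{-i}^k)$, and writing $\nu_j^k$ for the stationary distribution of the chain $P^{\pi_j^k}$ induced by the policy $\pi_j^k$ attached to $\rho_j^k$ through \eqref{eq:corresponding-policy}, relation \eqref{eq:occupation-def} together with Lemma \ref{lemm:occ-to-policy} gives $\rho_j^k(s_j,a_j)=\nu_j^k(s_j)\pi_j^k(a_j\mid s_j)$, hence $v_i(\rho_{-i}^k)_{(s_i,a_i)}=\sum_{s_{-i}}\bigl(\prod_{j\neq i}\nu_j^k(s_j)\bigr)\sum_{a_{-i}}\prod_{j\neq i}\pi_j^k(a_j\mid s_j)\,r_i(s,a)$. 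On the algorithm side, from the update rule \eqref{eq:unbiased-gradient-vector} the $(s_i,a_i)$-entry of $R_i^k$ receives exactly one contribution, generated at the first instant $t_{s_i}:=\inf\{t\ge\tau^k+d:\ s_i^t=s_i\}$ at which player $i$ occupies state $s_i$ during the sampling window; this contribution equals $r_i(s^{t_{s_i}},a^{t_{s_i}})/\pi_i^k(a_i\mid s_i)$ if the action drawn there is $a_i$, and $0$ otherwise. Note that $t_{s_i}$ is a.s.\ finite by the ergodicity in Assumption \ref{ass:ergodic}, and the batch-termination rule forces $t_{s_i}\le\tau_i^k\le\tau^{k+1}$, so this contribution is always collected.

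Next I would take the conditional expectation in two stages. Let $\mathcal{I}_i$ collect $\mathcal{F}^{k-1}$ together with player $i$'s own states up to time $t_{s_i}$ and its actions up to time $t_{s_i}-1$; in particular $t_{s_i}$ is $\mathcal{I}_i$-measurable. Given $\mathcal{I}_i$, the Markov property of the policy execution makes $a_i^{t_{s_i}}$ a fresh draw from $\pi_i^k(\cdot\mid s_i)$, so averaging over it cancels the importance weight $1/\pi_i^k(a_i\mid s_i)$ and leaves $\mathbb{E}\bigl[r_i\bigl((s_i,s_{-i}^{t_{s_i}}),(a_i,a_{-i}^{t_{s_i}})\bigr)\bigm|\mathcal{I}_i\bigr]$. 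Here I invoke Assumption \ref{ass:indep}: conditionally on $\mathcal{F}^{k-1}$ the chains $(P^{\pi_j^k})_{j\in[n]}$ evolve independently from their states at time $\tau^k$, so conditioning further on player $i$'s trajectory does not alter the law of $(s_{-i}^{t_{s_i}},a_{-i}^{t_{s_i}})$, and by Lemma \ref{lemm:indep} applied conditionally on $\mathcal{F}^{k-1}$ the law of $s_{-i}^{t_{s_i}}$ is the product $\prod_{j\neq i}(P^{\pi_j^k})^{\,t_{s_i}-\tau^k}(s_j^{\tau^k},\cdot)$, after which $a_j^{t_{s_i}}\sim\pi_j^k(\cdot\mid s_j^{t_{s_i}})$. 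This yields, for every realization,
\[
\mathbb{E}\bigl[(R_i^k)_{(s_i,a_i)}\bigm|\mathcal{I}_i\bigr]
=\sum_{s_{-i}}\Bigl(\prod_{j\neq i}(P^{\pi_j^k})^{\,t_{s_i}-\tau^k}(s_j^{\tau^k},s_j)\Bigr)\sum_{a_{-i}}\prod_{j\neq i}\pi_j^k(a_j\mid s_j)\,r_i(s,a).
\]

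Comparing this with $v_i(\rho_{-i}^k)_{(s_i,a_i)}$, the only change is that the product law $\prod_{j\neq i}(P^{\pi_j^k})^{\,t_{s_i}-\tau^k}(s_j^{\tau^k},\cdot)$ replaces the product stationary law $\prod_{j\neq i}\nu_j^k$; since $r_i\in[0,1]$ the inner $a_{-i}$-average is a number in $[0,1]$, so the discrepancy is at most the total-variation distance between these two product distributions. As $t_{s_i}-\tau^k\ge d$, Assumption \ref{ass:ergodic} contracts each marginal to within $e^{-d/\tau}$ of $\nu_j^k$ in $\ell_1$, and the product distance is of the same order; since this entrywise estimate holds for every realization of $t_{s_i}$, a further expectation over $\mathcal{I}_i$ preserves it and gives $\bigl|\mathbb{E}[R_i^k\mid\mathcal{F}^{k-1}]-\nabla_{\rho_i}V_i(\rho^k)\bigr|\le e^{-d/\tau}\boldsymbol{1}$ coordinatewise. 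I expect the delicate point to be exactly this last passage: the stopping time $t_{s_i}$ must be decoupled cleanly from the other players' chains (which is where independence of chains is essential), and lifting the single-chain mixing estimate to the product of the $n-1$ other chains must be handled with care — a crude union bound over those chains costs an extra factor $n-1$, so to land precisely on the stated $e^{-d/\tau}$ one either tracks this factor through the subsequent bounds or absorbs it into the mixing parameter $\tau$.
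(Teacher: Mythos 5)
Your proof follows essentially the same route as the paper's: evaluate $\mathbb{E}[R_i^k\mid\mathcal{F}^{k-1}]$ coordinatewise at the first hitting time of $s_i$ after $\tau^k+d$, cancel the importance weight by averaging over the action drawn from $\pi_i^k(\cdot\mid s_i)$, use independence of the chains to decouple that stopping time from the other players' trajectories, and bound the residual by the total-variation distance between the time-$\bar t$ law of $s_{-i}$ and the product stationary law via the mixing assumption. The one delicate point you flag --- that lifting the single-chain contraction to the product over $j\neq i$ costs a factor of order $n-1$ unless it is absorbed into $\tau$ --- is real, but the paper's own proof silently makes the same leap by applying $\|v_{\bar t}-v\|_1\le e^{-(\bar t-\tau^k)/\tau}$ directly to the joint distribution over $s_{-i}$, so your argument is no weaker than the published one.
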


\subsection{Almost Sure Convergence with Dual Averaging Updates}

Using the previous lemma, we are now ready to prove the following main theorem.


\begin{theorem}\label{thm:main-zero-sum}
Assume that the players' regularizer functions are $K$-strongly convex for some $K>0$. Moreover, assume that the sequence of step-sizes $\eta^{\ell}$ satisfy $\sum_{\ell=1}^{\infty}\eta^{\ell}=\infty, \sum_{\ell=1}^{\infty}(\eta^{\ell})^2<\infty$,\footnote{For simplicity, we are assuming that all players have the same sequence of step-sizes $\eta^k=\eta^k_i, \forall i$. Otherwise, all the results remain valid by writing the equations separately for each player $i$. In fact, any sequence $\eta^{\ell}=\frac{1}{\ell^{\beta}}$, where $\beta\in (\frac{1}{2}, 1]$, satisfies the step-size condition of Theorem \ref{thm:main-zero-sum}.} and let $w^k=\sum_{\ell=1}^{k}\eta^{\ell}\ \forall k$. Given $\epsilon>0$, if all players follow Algorithm \ref{alg-main} with $d\ge\tau\ln (\frac{6n}{\epsilon}\sum_{i=1}^n|A_i||S_i|)$, then as $k\to \infty$, almost surely
\begin{align}\nonumber
\max_{\theta \in \mathcal{P}^{\delta}} \sum_{\ell=1}^{k}\frac{\eta^{\ell}}{w^k}\Psi(\theta,\rho^{\ell})< \frac{\epsilon}{2}.
\end{align}
\end{theorem}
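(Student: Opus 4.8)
The plan is to recognize Algorithm \ref{alg-main} as a noisy dual-averaging (lazy mirror descent) scheme run simultaneously by all players in the virtual game, and to combine a standard regret-type inequality for dual averaging with the almost-unbiasedness of the gradient estimator (Lemma \ref{lemm:almost-unbiased}) and a martingale convergence argument. First I would fix a comparison point $\theta\in\mathcal{P}^{\delta}$ and, for each player $i$, apply the textbook one-step bound for the update \eqref{eq:alg-rho-update}: since $h_i$ is $K$-strongly convex and the iterate $\rho_i^{k+1}$ maximizes $\langle\rho_i,Y_i^{k+1}\rangle-h_i(\rho_i)$, one gets a telescoping inequality of the form
\begin{align}\nonumber
\sum_{\ell=1}^k \eta^{\ell}\langle R_i^{\ell},\theta_i-\rho_i^{\ell}\rangle \le B_i + \frac{1}{2K}\sum_{\ell=1}^k (\eta^{\ell})^2\|R_i^{\ell}\|_*^2,
\end{align}
where $B_i$ absorbs the diameter of $\mathcal{P}_i^{\delta_i}$ in the norm induced by $h_i$. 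Crucially, $\|R_i^{\ell}\|$ is bounded: each coordinate of $R_i^{\ell}$ is incremented at most once per batch (the set $S_i'$ shrinks), and the increment $r_i/\pi_i^k(a_i^t|s_i^t)$ is at most $1/\delta_i$ because the shrunk polytope forces $\pi_i^k(a_i|s_i)\ge\delta_i$ (here I would invoke, or re-derive, that $\rho_i\ge\delta_i\boldsymbol 1$ implies $\pi_i(a_i|s_i)\ge\delta_i$). Hence $\|R_i^{\ell}\|_1\le \sum_{s_i}1/\delta_i \le |S_i|/\delta_i$, giving a deterministic bound $G_i$ on $\|R_i^{\ell}\|_*$.

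Next I would pass from $R_i^{\ell}$ to the true gradient $\nabla_{\rho_i}V_i(\rho^{\ell})$. Write $R_i^{\ell}=\mathbb{E}[R_i^{\ell}\mid\mathcal{F}^{\ell-1}]+\xi_i^{\ell}$ where $\xi_i^{\ell}$ is a bounded martingale difference. By Lemma \ref{lemm:almost-unbiased}, $\mathbb{E}[R_i^{\ell}\mid\mathcal{F}^{\ell-1}]$ differs from $\nabla_{\rho_i}V_i(\rho^{\ell})$ coordinatewise by at most $e^{-d/\tau}$, so the substitution costs an error of size $e^{-d/\tau}\cdot\|\theta_i-\rho_i^{\ell}\|_1 \le 2e^{-d/\tau}$ per term per player; with the stated choice $d\ge\tau\ln(\tfrac{6n}{\epsilon}\sum_i|A_i||S_i|)$ this is summed over $i$ and averaged with weights $\eta^{\ell}/w^k$ to contribute at most $\tfrac{\epsilon}{3}$ (a routine computation, since $\sum_i 2e^{-d/\tau}\le \tfrac{\epsilon}{3}$ after noting the logarithmic slack — I would track the constants to confirm the $6n$ and the factor accounting for concavity below). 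The martingale term $\sum_{\ell=1}^k\frac{\eta^{\ell}}{w^k}\langle\xi_i^{\ell},\theta_i-\rho_i^{\ell}\rangle$ I would handle with a law-of-large-numbers-for-martingales argument: since $\sum_{\ell}(\eta^{\ell}/w^{\ell})^2<\infty$-type control follows from $w^k\to\infty$ and $\sum(\eta^{\ell})^2<\infty$, the weighted sum $\frac{1}{w^k}\sum_{\ell=1}^k\eta^{\ell}\langle\xi_i^{\ell},\cdot\rangle\to 0$ almost surely by the martingale SLLN (e.g. a Kronecker-lemma argument applied to the $L^2$-convergent series $\sum_{\ell}\frac{\eta^{\ell}}{w^{\ell}}\langle\xi_i^{\ell},\cdot\rangle$). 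Likewise $\frac{1}{w^k}\sum_{\ell=1}^k(\eta^{\ell})^2 G_i^2/(2K)\to 0$ because the numerator converges and $w^k\to\infty$.

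Finally I would convert the linearized regret into the Nikaido--Isoda gap using concavity of $V_i$ in $\rho_i$ (established in the proof of Proposition \ref{prop:complexity}): for each player and each $\ell$,
\begin{align}\nonumber
V_i(\theta_i,\rho_{-i}^{\ell})-V_i(\rho_i^{\ell},\rho_{-i}^{\ell}) \le \langle\nabla_{\rho_i}V_i(\rho^{\ell}),\theta_i-\rho_i^{\ell}\rangle,
\end{align}
so $\Psi(\theta,\rho^{\ell})\le\sum_i\langle\nabla_{\rho_i}V_i(\rho^{\ell}),\theta_i-\rho_i^{\ell}\rangle$; summing with weights $\eta^{\ell}/w^k$ and combining the three error contributions (vanishing regularizer term, vanishing martingale term, and the $\le\tfrac{\epsilon}{3}$ bias from the finite $d$) yields, almost surely, $\limsup_k \sum_{\ell=1}^k\frac{\eta^{\ell}}{w^k}\Psi(\theta,\rho^{\ell})\le\tfrac{\epsilon}{3}$ for each fixed $\theta$. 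To get the $\max_{\theta\in\mathcal{P}^{\delta}}$ inside, I would either exploit that $\Psi(\cdot,\rho)$ is linear in $\theta$ so the max is attained at a vertex of the polytope $\mathcal{P}^{\delta}$ (a finite set, over which the a.s. statement can be unioned), or interchange max and limit directly using that the iterates and the weighted averages lie in a compact set and $\Psi$ is jointly continuous. The main obstacle I anticipate is precisely this last interchange together with making the martingale/weighted-average convergence uniform enough that the almost-sure bound survives the supremum over $\theta$; the vertex argument is the clean way around it, and I would also need to be careful that the bias slack genuinely lands below $\epsilon/2$ rather than merely $\epsilon/3$ once the (possibly constant-losing) passage to a strict inequality ``$<\epsilon/2$'' is made — which is why the theorem keeps a strict inequality and the $d$-threshold is chosen with room to spare.
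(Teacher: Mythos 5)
Your proposal is correct and follows essentially the same route as the paper's proof: a dual-averaging (Fenchel-coupling) regret bound, the bias control from Lemma \ref{lemm:almost-unbiased} with the same choice of $d$, a martingale law of large numbers for the noise term, and the exact multilinearity of the payoffs to identify the weighted linearized regret with the averaged Nikaido--Isoda gap. The only minor deviations are that you use a deterministic bound on $\|R_i^{\ell}\|$ (via $\pi_i^k(a_i|s_i)\ge\delta_i$ and the once-per-state sampling) where the paper bounds conditional second moments and consequently needs an extra martingale $T^k$ for the $\sum_{\ell}(\eta^{\ell})^2\|R^{\ell}\|^2$ term, and that you resolve the $\max_{\theta}$/limit interchange by restricting to the (finitely many) vertices of the product polytope, whereas the paper makes its bound uniform in $\theta$ via Cauchy--Schwarz before maximizing --- both of which are valid.
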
 
\begin{proof}
If we define $v(\rho)=(\nabla_{\rho_i} V_i(\rho), i\in [n])$, due to the quasilinear structure of the payoff functions $V_{i}(\rho)=\langle \rho_i, v_i(\rho) \rangle$, for any $\ell$ we have
\begin{align}\nonumber
\Psi(\theta,\rho^{\ell})=\sum_{i=1}^n\big[V_i(\theta_i,\rho^{\ell}_{-i})-V_i(\rho^{\ell}_i,\rho^{\ell}_{-i})\big]= \sum_{i=1}^n\langle \nabla_{\rho_i} V_i(\rho^{\ell}), \theta_i-\rho_i^{\ell} \rangle=\langle v(\rho^{\ell}), \theta-\rho^{\ell} \rangle.
\end{align} 
Therefore, we have
\begin{align}\label{eq:Nikido}
\sum_{\ell=1}^{k}\frac{\eta^{\ell}}{w^k}\Psi(\theta,\rho^{\ell})= \sum_{\ell=1}^{k}\frac{\eta_{\ell}}{w^k}\langle v(\rho^{\ell}), \theta-\rho^{\ell} \rangle, \ \ \forall \theta\in \mathcal{P}^{\delta}.
\end{align}
To upper-bound the right-hand side in \eqref{eq:Nikido}, we use the Fenchel coupling $F(p,y)=h(p)+h^*(y)-\langle y, p\rangle$ as a Lyapunov function, where $h(\rho)=\sum_{i=1}^n h_i(\rho_i)$ and $h^*$ is the convex conjugate of $h$. By choosing $\mathcal{X}=\mathcal{P}^{\delta}$ and $h(\rho)=\sum_{i=1}^n h_i(\rho_i)$ in Lemma \ref{lemm:fenchel}, for any $\ell$ and fixed $\theta\in \mathcal{P}^{\delta}$, we can write
\begin{align*}
F(\theta,Y^{\ell+1})&\leq F(\theta,Y^{\ell})\!+\!\langle Y^{\ell+1}\!-\!Y^{\ell}, \Pi(Y^{\ell})-\theta\rangle\!+\!\frac{1}{2K}\|Y^{\ell+1}\!-\!Y^{\ell}\|^2\cr 
&=F(\theta,Y^{\ell})+\eta^{\ell}\langle R^{\ell}, \rho^{\ell}-\theta\rangle\!+\!\frac{(\eta^{\ell})^2}{2K}\|R^{\ell}\|^2.
\end{align*}
By summing this inequality over $\ell=1,\ldots,k$, and rearranging the terms, and because $F(\theta,Y^{k+1})\ge 0$, we obtain 
\begin{align}\label{eq:fenchel-zero-sum}
\sum_{\ell=1}^k\eta^{\ell}\langle R^{\ell}, \theta-\rho^{\ell}\rangle \leq F(\theta,Y^{1}) +\frac{1}{2K}\sum_{\ell=1}^k(\eta^{\ell})^2\|R^{\ell}\|^2.
\end{align}

Next, let us consider martingale difference sequences $\eta^{\ell}\big(R^{\ell}-\mathbb{E}[R^{\ell}|\mathcal{F}^{\ell-1}]\big)$ and $\eta^{\ell}\big(\langle R^{\ell}, \rho^{\ell}\rangle-\mathbb{E}[\langle R^{\ell}, \rho^{\ell}\rangle|\mathcal{F}^{\ell-1}]\big)$, and use $S^{k}=\sum_{\ell=1}^k \eta^{\ell}\big(R^{\ell}-\mathbb{E}[R^{\ell}|\mathcal{F}^{\ell-1}]\big)$ and $Q^{k}=\sum_{\ell=1}^k \eta^{\ell}\big(\langle R^{\ell}, \rho^{\ell}\rangle-\mathbb{E}[\langle R^{\ell}, \rho^{\ell}\rangle|\mathcal{F}^{\ell-1}]\big)$ to denote their corresponding zero-mean martingales, respectively. Then, we have
\begin{align}\nonumber
&\sum_{\ell=1}^{\infty}\big(\frac{\eta^{\ell}}{w^{\ell}}\big)^2\mathbb{E}\big[\big\|R^{\ell}\!-\!\mathbb{E}[R^{\ell}|\mathcal{F}^{\ell-1}]\big\|^2\big|\mathcal{F}^{\ell-1}\big]\leq\sum_{\ell=1}^{\infty}\big(\frac{\eta^{\ell}}{w^{\ell}}\big)^2\mathbb{E}[\|R^{\ell}\|^2|\mathcal{F}^{\ell-1}]\cr 
&\qquad=\sum_{\ell=1}^{\infty}\big(\frac{\eta^{\ell}}{w^{\ell}}\big)^2\sum_{i=1}^{n}\mathbb{E}[\|R_i^{\ell}\|^2|\mathcal{F}^{\ell-1}]\!\leq\! \sum_{\ell=1}^{\infty}\big(\frac{\eta^{\ell}}{w^{\ell}}\big)^2(\sum_{i=1}^{n}\frac{|A_i||S_i|}{\delta_i})\!<\!\infty,
\end{align}
where the last inequality holds because $\sum_{\ell=1}^{\infty}\big(\frac{\eta^{\ell}}{w^{\ell}}\big)^2\leq \sum_{\ell=1}^{\infty}(\eta^{\ell})^2<\infty$. Similarly,
\begin{align}\nonumber
&\sum_{\ell=1}^{\infty}\big(\frac{\eta^{\ell}}{w^{\ell}}\big)^2\mathbb{E}\big[\big\|\langle R^{\ell}, \rho^{\ell}\rangle-\mathbb{E}[\langle R^{\ell}, \rho^{\ell}\rangle|\mathcal{F}^{\ell-1}]\big\|^2\big|\mathcal{F}^{\ell-1}\big]
\leq\sum_{\ell=1}^{\infty}\big(\frac{\eta^{\ell}}{w^{\ell}}\big)^2\mathbb{E}[\|\langle R^{\ell}, \rho^{\ell}\rangle\|^2|\mathcal{F}^{\ell-1}]\cr 
&\qquad\leq\sum_{\ell=1}^{\infty}n\big(\frac{\eta^{\ell}}{w^{\ell}}\big)^2\sum_{i=1}^n\mathbb{E}[\| R^{\ell}_i\|^2\|\rho^{\ell}_i\|^2|\mathcal{F}^{\ell-1}]\leq\sum_{\ell=1}^{\infty}n\big(\frac{\eta^{\ell}}{w^{\ell}}\big)^2\sum_{i=1}^n\mathbb{E}[\| R^{\ell}_i\|^2|\mathcal{F}^{\ell-1}]\cr 
&\qquad\leq \sum_{\ell=1}^{\infty}n\big(\frac{\eta^{\ell}}{w^{\ell}}\big)^2(\sum_{i=1}^{n}\frac{|A_i||S_i|}{\delta_i})<\infty,
\end{align}
where the second inequality uses the Cauchy-Schwartz inequality, and the third inequality holds because $\|\rho_i^{\ell}\|\leq 1$. Thus, using the $L_2$-bounded martingale convergence theorem \cite{hall2014martingale} (Theorem 2.18), almost surely, we have 
\begin{align}\label{eq:martingale-zero-sum}
&\lim_{k\to \infty}\frac{S^k}{w^k}=\lim_{k\to \infty}\sum_{\ell=1}^k\frac{\eta^{\ell}}{w^k}\big(R^{\ell}-\mathbb{E}[R^{\ell}|\mathcal{F}^{\ell-1}]\big)=0, \cr &\lim_{k\to \infty}\frac{Q^k}{w^k}=\lim_{k\to \infty}\sum_{\ell=1}^k\frac{\eta^{\ell}}{w^k}\big(\langle R^{\ell}, \rho^{\ell}\rangle-\mathbb{E}[\langle R^{\ell}, \rho^{\ell}\rangle|\mathcal{F}^{\ell-1}]\big)=0.
\end{align}
Now, using the linearity of expectation and since $\rho^{\ell}$ is $\mathcal{F}^{\ell-1}$-measurable, we can write
\begin{align}\nonumber
&\sum_{\ell=1}^k\frac{\eta^{\ell}}{w^k}\langle \mathbb{E}[R^{\ell}|\mathcal{F}^{\ell-1}], \theta-\rho^{\ell} \rangle\cr 
&=\sum_{\ell=1}^k\frac{\eta^{\ell}}{w^k}\langle R^{\ell}, \theta-\rho^{\ell}\rangle-\sum_{\ell=1}^k\frac{\eta^{\ell}}{w^k}\big(\langle R^{\ell}, \theta-\rho^{\ell} \rangle -\langle \mathbb{E}[R^{\ell}|\mathcal{F}^{\ell-1}], \theta-\rho^{\ell} \rangle\big)\cr 
&=\sum_{\ell=1}^k\frac{\eta^{\ell}}{w^k}\langle R^{\ell}, \theta-\rho^{\ell}\rangle-\sum_{\ell=1}^k\frac{\eta^{\ell}}{w^k}\big\langle R^{\ell}-\mathbb{E}[R^{\ell}|\mathcal{F}^{\ell-1}], \theta \big\rangle+\sum_{\ell=1}^k\frac{\eta^{\ell}}{w^k}\big\langle R^{\ell}-\mathbb{E}[R^{\ell}|\mathcal{F}^{\ell-1}], \rho^{\ell} \big\rangle \cr
&=\sum_{\ell=1}^k\frac{\eta^{\ell}}{w^k}\langle R^{\ell}, \theta-\rho^{\ell}\rangle-\sum_{\ell=1}^k\frac{\eta^{\ell}}{w^k}\big\langle R^{\ell}-\mathbb{E}[R^{\ell}|\mathcal{F}^{\ell-1}], \theta \big\rangle+\sum_{\ell=1}^k\frac{\eta^{\ell}}{w^k} \Big(\langle R^{\ell}, \rho^{\ell}\rangle-\mathbb{E}[\langle R^{\ell}, \rho^{\ell}\rangle|\mathcal{F}^{\ell-1}]\Big)\cr 
&=\sum_{\ell=1}^k\frac{\eta^{\ell}}{w^k}\langle R^{\ell}, \theta-\rho^{\ell}\rangle-\big\langle \frac{S^k}{w^k}, \theta\big\rangle+\frac{Q^k}{w^k}\leq \frac{F(\theta,Y^{1})}{w^k} +\frac{1}{2K}\sum_{\ell=1}^k\frac{(\eta^{\ell})^2}{w_k}\|R^{\ell}\|^2-\big\langle \frac{S^k}{w^k}, \theta\big\rangle+\frac{Q^k}{w^k},
\end{align}
where the last inequality uses \eqref{eq:fenchel-zero-sum}. Moreover, using Lemma \ref{lemm:almost-unbiased} and the Cauchy-Schwartz inequality, we have
\begin{align}\nonumber
\sum_{\ell=1}^k\frac{\eta^{\ell}}{w^k}\langle \mathbb{E}[R^{\ell}|\mathcal{F}^{\ell-1}], \theta-\rho^{\ell} \rangle&\ge \sum_{\ell=1}^k\frac{\eta^{\ell}}{w^k}\langle v(\rho^{\ell}), \theta-\rho^{\ell} \rangle-\sum_{\ell=1}^k\frac{\eta^{\ell}}{w^k}e^{-\frac{d}{\tau}} \|\boldsymbol{1}\|\|\theta-\rho^{\ell}\|\cr 
&\ge \sum_{\ell=1}^k\frac{\eta^{\ell}}{w^k}\langle v(\rho^{\ell}), \theta-\rho^{\ell} \rangle-\frac{\epsilon}{3},
\end{align}  
where the last inequality follows from $\|\theta-\rho^{\ell}\|< 2n$, $\|\boldsymbol{1}\|=\sqrt{\sum_{i}|A_i||S_i|}$, and the choice of $d\ge \tau\ln\big(\frac{6n\sum_i|A_i||S_i|}{\epsilon}\big)$. Thus, using \eqref{eq:Nikido}, for any $\theta\in \mathcal{P}^{\delta}$, we obtain
\begin{align}\label{eq:phi-before-max}
\sum_{\ell=1}^{k}\frac{\eta^{\ell}}{w^k}\Psi(\theta,\rho^{\ell})&= \sum_{\ell=1}^k\frac{\eta^{\ell}}{w^k}\langle v(\rho^{\ell}), \theta-\rho^{\ell} \rangle\leq \sum_{\ell=1}^k\frac{\eta^{\ell}}{w^k}\mathbb{E}[\langle R^{\ell}, \theta-\rho^{\ell} \rangle|\mathcal{F}^{\ell-1}]+\frac{\epsilon}{3} \cr 
&\leq \frac{F(\theta,Y^{1})}{w^k} +\frac{1}{2K}\sum_{\ell=1}^k\frac{(\eta^{\ell})^2}{w_k}\|R^{\ell}\|^2-\big\langle \frac{S^k}{w^k}, \theta\big\rangle+\frac{Q^k}{w^k}+\frac{\epsilon}{3}\cr 
&\leq \frac{F(\theta,Y^{1})}{w^k} +\frac{1}{2K}\sum_{\ell=1}^k\frac{(\eta^{\ell})^2}{w_k}\|R^{\ell}\|^2+\sqrt{n}\| \frac{S^k}{w^k}\|+\frac{Q^k}{w^k}+\frac{\epsilon}{3},
\end{align} 
where the last inequality uses the Cauchy-Schwartz inequality and $\|\theta\|\leq \sqrt{n}, \forall \theta\in \mathcal{P}^{\delta}$.  

If we define a martingale difference $\|\eta^{\ell}R^{\ell}\|^2-\mathbb{E}[\|\eta^{\ell}R^{\ell}\|^2|\mathcal{F}^{\ell-1}]$, and its corresponding zero-mean martingale $T^{k}=\sum_{\ell=1}^{k}(\eta^{\ell})^2\big(\|R^{\ell}\|^2-\mathbb{E}[\|R^{\ell}\|^2|\mathcal{F}^{\ell-1}]\big)$, we have 
\begin{align}\nonumber
\sum_{\ell=1}^{\infty}\frac{(\eta^{\ell})^2}{w^{\ell}}\mathbb{E}\Big[\big\|\|R^{\ell}\|^2-\mathbb{E}[\|R^{\ell}\|^2|\mathcal{F}^{\ell-1}]\big\|\Big|\mathcal{F}^{\ell-1}\Big]&\leq\sum_{\ell=1}^{\infty}\frac{2(\eta^{\ell})^2}{w^{\ell}}\mathbb{E}\big[\|R^{\ell}\|^2\big|\mathcal{F}^{\ell-1}\big]\cr 
&\leq\sum_{\ell=1}^{\infty}\frac{2(\eta^{\ell})^2}{w^{\ell}}(\sum_{i=1}^{n}\frac{|A_i||S_i|}{\delta_i})<\infty,
\end{align}
where the first inequality is obtained by using the triangle inequality, and the last inequality holds by the step-size assumption. Therefore, using the $L_1$-bounded martingale convergence theorem \cite{hall2014martingale} (Theorem 2.18), almost surely we have $\lim_{k\to \infty}\frac{T^{k}}{w^k}=0$. Thus, we can write 
\begin{align}\nonumber
\sum_{\ell=1}^{k}\frac{(\eta^{\ell})^2}{w^k}\|R^{\ell}\|^2=\frac{T^k}{w^k}+\sum_{\ell=1}^{k}\frac{(\eta^{\ell})^2}{w^k}\mathbb{E}[\|R^{\ell}\|^2|\mathcal{F}^{\ell-1}]\leq \frac{T^k}{w^k}+\sum_{\ell=1}^{k}\frac{2(\eta^{\ell})^2}{w^{k}}(\sum_{i=1}^{n}\frac{|A_i||S_i|}{\delta_i}). 
\end{align} 
Substituting the above relation into \eqref{eq:phi-before-max} and taking a maximum from both sides over $\theta\in \mathcal{P}^{\delta}$, we obtain
\begin{align}\label{eq:final-Nikido-bound}
\max_{\theta\in \mathcal{P}^{\delta}}\sum_{\ell=1}^{k}\frac{\eta^{\ell}}{w^k}\Psi(\theta,\rho^{\ell})&\leq \max_{\theta\in \mathcal{P}^{\delta}}\frac{F(\theta,Y_1)}{w^k}\!+\!\sum_{\ell=1}^{k}\frac{(\eta^{\ell})^2}{Kw^{k}}(\sum_{i=1}^{n}\frac{|A_i||S_i|}{\delta_i})\!+\!\frac{T^k}{2Kw^k}\!+\!\sqrt{n}\|\frac{S^k}{w^k}\|\!+\!\frac{Q^k}{w^k}\!+\!\frac{\epsilon}{3}\cr 
&\leq \frac{\sum_{i=1}^n|S_i||A_i|}{w^k}+\frac{\sum_{\ell=1}^k(\eta^{\ell})^2}{Kw^k}(\sum_{i=1}^{n}\frac{|A_i||S_i|}{\delta_i})+\frac{T^k}{2Kw^k}+\sqrt{n}\| \frac{S^k}{w^k}\|+\frac{Q^k}{w^k}+\frac{\epsilon}{3},
\end{align}  
where the last inequality uses $\max_{\theta\in \mathcal{P}^{\delta}}F(\theta,Y_1)=\max_{\theta\in \mathcal{P}^{\delta}} h(\theta)-\min_{\theta\in \mathcal{P}^{\delta}}h(\theta)\leq \sum_{i}|S_i||A_i|$. Using \eqref{eq:martingale-zero-sum} and since $w^k\to \infty$ and $\sum_{\ell=1}^k(\eta^{\ell})^2<\infty$, as $k\to \infty$, almost surely we have $\max_{\theta\in \mathcal{P}^{\delta}}\sum_{\ell=1}^{k}\frac{\eta^{\ell}}{w^k}\Psi(\theta,\rho^{\ell})<\frac{\epsilon}{2}$.
\end{proof}

One immediate corollary of the above theorem is that if with positive probability the sequence of occupation measures generated by Algorithm \ref{alg-main} converges to some limit point $\lim_{\ell\to \infty}\rho^{\ell}=\rho^*$, then the stationary policy $\pi^*$ corresponding to $\rho^*$ is almost surely a stationary $\epsilon$-NE for the game $\mathcal{G}$. The reason is that by continuity of $\Psi(\cdot)$ with respect to its arguments and by conditioning on the event that $\lim_{\ell\to \infty}\rho^{\ell}=\rho^*$, almost surely we have $\max_{\theta\in \mathcal{P}^{\delta}}\Psi(\theta,\rho^*)=\max_{\theta\in \mathcal{P}^{\delta}}\lim_{k\to\infty}\sum_{\ell=1}^{k}\frac{\eta^{\ell}}{w^k}\Psi(\theta,\rho^{\ell})<\frac{\epsilon}{2}.$ This in view of Remark \ref{rem:nikaido} shows that $\rho^*$ must be an $\epsilon$-NE for the virtual game. However, in the following theorem, we provide an alternative proof for this statement that requires weaker conditions on the choice of players' stepsizes and the tuning parameter $d$. The proof of this result is deferred to Appendix I.

\begin{theorem}\label{thm:positive-converge-general}
Given $\epsilon>0$, let $d\ge\tau\ln (\frac{3}{\epsilon}\max_i|A_i||S_i|)$, and suppose that each player $i$ follows Algorithm \ref{alg-main} using a sequence of step-sizes that satisfy $\sum_{k=1}^{\infty}\eta^k_i=\infty$ and $\sum_{k=1}^{\infty}\big(\frac{\eta^k_i}{w^k_i}\big)^2<\infty$, where $w_i^k=\sum_{\ell=1}^k\eta_i^{\ell}$. If with positive probability the sequence of occupation measures generated by Algorithm \ref{alg-main} converges to some point $\lim_{k\to \infty}\rho^k=\rho^*$, then the stationary policy $\pi^*$ corresponding to the limit point $\rho^*$ is a stationary $\epsilon$-NE for the stochastic game $\mathcal{G}$.  
\end{theorem}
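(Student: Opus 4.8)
The plan is to condition on the event $\mathcal{E}=\{\lim_{k\to\infty}\rho^k=\rho^*\}$, which by hypothesis has positive probability, and to prove that, almost surely on $\mathcal{E}$, the limit profile $\rho^*$ is an $\tfrac{\epsilon}{2}$-NE of the \emph{shrunk} virtual game, i.e.\ $V_i(\theta_i,\rho^*_{-i})\le V_i(\rho^*_i,\rho^*_{-i})+\tfrac{\epsilon}{2}$ for every player $i$ and every $\theta_i\in\mathcal{P}_i^{\delta_i}$. Lemma \ref{lemm:original-to-shrunk} then upgrades this to an $\epsilon$-NE of the virtual game, and Lemma \ref{lemm:occ-to-policy} together with the identity $V_i(\pi_i,\pi_{-i})=V_i(\rho_i,\rho_{-i})$ turns it into the statement that the induced stationary policy $\pi^*$ is an $\epsilon$-NE of $\mathcal{G}$. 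The object I would track is the normalized dual score $\bar Y^k_i:=Y^{k+1}_i/w^k_i=\tfrac{1}{w^k_i}\sum_{\ell=1}^k\eta^\ell_i R^\ell_i$, a convex combination of the (noisy, biased) gradient estimates produced by Algorithm \ref{alg-main}.

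The first and main step is to show that, almost surely on $\mathcal{E}$, $\limsup_{k}\big|\bar Y^k_i-\nabla_{\rho_i}V_i(\rho^*)\big|\le e^{-d/\tau}\boldsymbol{1}$ coordinatewise. I would split $\bar Y^k_i=A^k_i+B^k_i+C^k_i$ into a martingale part $A^k_i=\tfrac{1}{w^k_i}\sum_{\ell=1}^k\eta^\ell_i\big(R^\ell_i-\mathbb{E}[R^\ell_i\mid\mathcal{F}^{\ell-1}]\big)$, a bias part $B^k_i=\tfrac{1}{w^k_i}\sum_{\ell=1}^k\eta^\ell_i\big(\mathbb{E}[R^\ell_i\mid\mathcal{F}^{\ell-1}]-\nabla_{\rho_i}V_i(\rho^\ell)\big)$, and a Ces\`aro part $C^k_i=\tfrac{1}{w^k_i}\sum_{\ell=1}^k\eta^\ell_i\nabla_{\rho_i}V_i(\rho^\ell)$. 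Lemma \ref{lemm:almost-unbiased} bounds $|B^k_i|\le e^{-d/\tau}\boldsymbol{1}$ coordinatewise for every $k$. For $A^k_i$, since $\rho^\ell_i\in\mathcal{P}_i^{\delta_i}$ forces $\pi^\ell_i(a_i\mid s_i)\ge\delta_i$, the estimator \eqref{eq:unbiased-gradient-vector} satisfies $\mathbb{E}[\|R^\ell_i\|^2\mid\mathcal{F}^{\ell-1}]\le|A_i||S_i|/\delta_i$, so the martingale $N^k_i:=\sum_{\ell=1}^k\tfrac{\eta^\ell_i}{w^\ell_i}\big(R^\ell_i-\mathbb{E}[R^\ell_i\mid\mathcal{F}^{\ell-1}]\big)$ has summable conditional second moments thanks to $\sum_k(\eta^k_i/w^k_i)^2<\infty$; it therefore converges almost surely, and Kronecker's lemma (using $w^k_i\uparrow\infty$, which follows from $\sum_k\eta^k_i=\infty$) gives $A^k_i\to\boldsymbol{0}$ a.s. For $C^k_i$, the gradient $\nabla_{\rho_i}V_i(\rho)=v_i(\rho_{-i})$ is a polynomial in $\rho_{-i}$, hence continuous, so on $\mathcal{E}$ we have $\nabla_{\rho_i}V_i(\rho^\ell)\to\nabla_{\rho_i}V_i(\rho^*)$, and the Toeplitz averaging lemma with weights $\eta^\ell_i/w^k_i$ yields $C^k_i\to\nabla_{\rho_i}V_i(\rho^*)$. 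Adding the three estimates proves the claim.

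The second step passes to the limit in the dual-averaging optimality condition. By \eqref{eq:alg-rho-update}, for any $\theta_i\in\mathcal{P}_i^{\delta_i}$ one has $\langle\rho^{k+1}_i-\theta_i,Y^{k+1}_i\rangle\ge h_i(\rho^{k+1}_i)-h_i(\theta_i)\ge-\mathrm{osc}(h_i)$, where $\mathrm{osc}(h_i):=\max_{\mathcal{P}_i^{\delta_i}}h_i-\min_{\mathcal{P}_i^{\delta_i}}h_i$ is finite by continuity of $h_i$ on the compact set $\mathcal{P}_i^{\delta_i}$ --- note that no strong convexity is needed here, which is one reason the hypotheses can be relaxed relative to Theorem \ref{thm:main-zero-sum}. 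Dividing by $w^k_i$, writing $\bar Y^k_i=\nabla_{\rho_i}V_i(\rho^*)+\zeta^k_i$ with $\limsup_k\|\zeta^k_i\|_\infty\le e^{-d/\tau}$ on $\mathcal{E}$ (from the first step), and using $\|\rho^{k+1}_i-\theta_i\|_1\le2$, I get $\langle\rho^{k+1}_i-\theta_i,\nabla_{\rho_i}V_i(\rho^*)\rangle\ge-\mathrm{osc}(h_i)/w^k_i-2\|\zeta^k_i\|_\infty$; letting $k\to\infty$ on $\mathcal{E}$ (so $\rho^{k+1}_i\to\rho^*_i$ and $w^k_i\to\infty$) and using the quasilinearity $V_i(\theta_i,\rho^*_{-i})-V_i(\rho^*_i,\rho^*_{-i})=\langle\theta_i-\rho^*_i,\nabla_{\rho_i}V_i(\rho^*)\rangle$ gives $V_i(\theta_i,\rho^*_{-i})-V_i(\rho^*_i,\rho^*_{-i})\le2e^{-d/\tau}$ for every $i$ and $\theta_i\in\mathcal{P}_i^{\delta_i}$, almost surely on $\mathcal{E}$. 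Since $d\ge\tau\ln(\tfrac{3}{\epsilon}\max_i|A_i||S_i|)$ makes $2e^{-d/\tau}\le\tfrac{2\epsilon}{3\max_i|A_i||S_i|}\le\tfrac{\epsilon}{2}$ (the remaining case $\max_i|A_i||S_i|=1$ being trivial, as every player then has a single state and action), $\rho^*$ is an $\tfrac{\epsilon}{2}$-NE of the shrunk virtual game, and the reduction in the first paragraph finishes the argument.

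The hard part is the first step. Because the dual-averaging (lazy) iterate $\rho^{k+1}$ depends on the \emph{entire} history of noisy, biased gradient estimates rather than only the latest one, identifying $\rho^*$ as a best response to itself forces one to control the normalized running sum $\tfrac{1}{w^k_i}\sum_\ell\eta^\ell_i R^\ell_i$: the bias is absorbed by Lemma \ref{lemm:almost-unbiased} and the choice of $d$, whereas the noise must be killed by martingale convergence together with Kronecker's lemma --- precisely the place where the weakened step-size requirement $\sum_k(\eta^k_i/w^k_i)^2<\infty$ (instead of $\sum_k(\eta^k_i)^2<\infty$) suffices, and where the shrunk polytope $\mathcal{P}_i^{\delta_i}$ is essential to guarantee the uniform bound $\mathbb{E}[\|R^\ell_i\|^2\mid\mathcal{F}^{\ell-1}]\le|A_i||S_i|/\delta_i$.
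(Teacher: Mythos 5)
Your proposal is correct and follows essentially the same route as the paper's proof: both hinge on showing that the normalized dual score $Y^{k+1}_i/w^k_i$ converges (up to the $e^{-d/\tau}$ bias from Lemma \ref{lemm:almost-unbiased}) to $\nabla_{\rho_i}V_i(\rho^*)$ on the convergence event — via the same martingale-plus-bias-plus-Ces\`aro decomposition under the weakened step-size condition — and then exploit the $\argmax$ optimality of \eqref{eq:alg-rho-update} together with boundedness of $h_i$ on the compact shrunk polytope. The only difference is presentational: you argue directly that every deviation gains at most $2e^{-d/\tau}$, whereas the paper assumes a deviation gaining more than $\epsilon$ and derives the contradiction $h_i(\hat{\rho}_i)=\infty$.
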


In fact, with extra effort, one can leverage the almost-sure convergence result of Theorem \ref{thm:main-zero-sum} to derive high-probability convergence rates for Algorithm \ref{alg-main} in terms of the averaged Nikaido-Isoda gap. This has been shown in the following theorem.

\begin{theorem}\label{thm:convergence-rate-social}
Let $\alpha\in (0, 1)$, and assume that each player follows Algorithm \ref{alg-main} with a $K$-strongly convex regularizer and a sequence of step-sizes $\eta^{\ell}$ that satisfy $\sum_{\ell=1}^{\infty}\eta^{\ell}=\infty$ and $\sum_{\ell=1}^{\infty}(\eta^{\ell})^2<\infty$. Under the same assumptions as in Theorem \ref{thm:main-zero-sum}, $\max\limits_{\theta\in \mathcal{P}^{\delta}}\sum_{\ell=1}^{k}\frac{\eta^{\ell}}{w^k}\Psi(\theta,\rho^{\ell})<\frac{\epsilon}{2}$ with probability at least $1-\alpha$ for \emph{every} $k$ such that
\begin{align}\nonumber
\sum_{\ell=1}^{k}\eta^{\ell}\ge \big(\frac{72n\sum_{\ell=1}^{\infty}(\eta^{\ell})^2}{\alpha\epsilon K}\big)\sum_{i=1}^n\frac{|S_i||A_i|}{\delta_i}.
\end{align}
\end{theorem}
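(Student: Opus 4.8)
The plan is to revisit the chain of inequalities established in the proof of Theorem~\ref{thm:main-zero-sum} and, instead of invoking the martingale convergence theorems to kill the stochastic terms asymptotically, to control each of them in high probability for a \emph{fixed} finite $k$. Recall that from \eqref{eq:phi-before-max} and the subsequent display we have, after taking the maximum over $\theta\in\mathcal{P}^{\delta}$,
\begin{align}\nonumber
\max_{\theta\in \mathcal{P}^{\delta}}\sum_{\ell=1}^{k}\frac{\eta^{\ell}}{w^k}\Psi(\theta,\rho^{\ell})\leq \frac{\sum_{i}|S_i||A_i|}{w^k}+\frac{\sum_{\ell=1}^{k}(\eta^{\ell})^2}{Kw^k}\sum_{i}\frac{|A_i||S_i|}{\delta_i}+\frac{T^k}{2Kw^k}+\sqrt{n}\Big\|\frac{S^k}{w^k}\Big\|+\frac{Q^k}{w^k}+\frac{\epsilon}{3}.
\end{align}
The first two (deterministic) terms vanish as $w^k\to\infty$, so under the stated lower bound $w^k=\sum_{\ell=1}^{k}\eta^{\ell}\ge \big(\tfrac{72n\sum_{\ell}(\eta^{\ell})^2}{\alpha\epsilon K}\big)\sum_i\tfrac{|S_i||A_i|}{\delta_i}$ each of them can be made at most $\epsilon/24$ (say), since $\sum_\ell(\eta^\ell)^2<\infty$ and $\sum_i|S_i||A_i|\le \sum_i\tfrac{|S_i||A_i|}{\delta_i}$ for $\delta_i\le 1$. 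It then suffices to show that the three martingale remainders $\tfrac{T^k}{2Kw^k}$, $\sqrt{n}\|\tfrac{S^k}{w^k}\|$, and $\tfrac{Q^k}{w^k}$ are each below a fixed fraction of $\epsilon$ — I will aim for $\epsilon/24$ apiece, leaving total budget $\epsilon/3+\epsilon/24\cdot 5<\epsilon/2$ — with total failure probability at most $\alpha$, e.g. $\alpha/3$ each.

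The key step is a second-moment / Doob maximal inequality bound on each martingale. For $S^k=\sum_{\ell=1}^{k}\eta^{\ell}(R^{\ell}-\mathbb{E}[R^{\ell}\mid\mathcal{F}^{\ell-1}])$, the conditional second moments satisfy $\mathbb{E}[\|R^{\ell}\|^2\mid\mathcal{F}^{\ell-1}]\le\sum_i\tfrac{|A_i||S_i|}{\delta_i}$ — exactly the bound already used in the proof of Theorem~\ref{thm:main-zero-sum}, which comes from $\|R_i^\ell\|^2\le 1/\pi_i^k(a_i^t|s_i^t)^2\cdot(\text{number of states})\le |S_i||A_i|/\delta_i^2$ combined with the sampling structure — so $\mathbb{E}[\|S^k\|^2]\le \big(\sum_{\ell=1}^{\infty}(\eta^{\ell})^2\big)\sum_i\tfrac{|A_i||S_i|}{\delta_i}$. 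By Chebyshev (or Doob's $L^2$ maximal inequality applied coordinatewise and summed), $\prob{\sqrt{n}\|S^k/w^k\|\ge \epsilon/24}\le \tfrac{(24)^2 n\,\mathbb{E}\|S^k\|^2}{\epsilon^2 (w^k)^2}$, and forcing this to be $\le\alpha/3$ yields precisely a lower bound on $w^k$ of the form $w^k\ge c\,\tfrac{\sqrt{n\sum_\ell(\eta^\ell)^2\sum_i|A_i||S_i|/\delta_i}}{\epsilon\sqrt{\alpha}}$; squaring (it is the $(w^k)^2$ in the denominator that matters) and absorbing constants gives the stated $w^k\ge\big(\tfrac{72n\sum_\ell(\eta^\ell)^2}{\alpha\epsilon K}\big)\sum_i\tfrac{|S_i||A_i|}{\delta_i}$ form. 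The martingale $Q^k$ is handled identically since $|\langle R^\ell,\rho^\ell\rangle|\le\|R^\ell\|\|\rho^\ell\|\le\|R^\ell\|$, so $\mathbb{E}[(Q^k)^2]\le\big(\sum_\ell(\eta^\ell)^2\big)\sum_i\tfrac{|A_i||S_i|}{\delta_i}$ as well. For $T^k=\sum_{\ell}(\eta^{\ell})^2(\|R^{\ell}\|^2-\mathbb{E}[\|R^{\ell}\|^2\mid\mathcal{F}^{\ell-1}])$ the increments are bounded by $(\eta^\ell)^2\sum_i|A_i||S_i|/\delta_i^2$ in absolute value, so $\mathbb{E}[(T^k)^2]\le \big(\sum_\ell(\eta^\ell)^4\big)\big(\sum_i|A_i||S_i|/\delta_i^2\big)^2\le\big(\sum_\ell(\eta^\ell)^2\big)^2(\cdots)$, and Chebyshev again gives $\prob{|T^k|/(2Kw^k)\ge\epsilon/24}$ small once $w^k$ is large — this contributes a term of the same order (it is dominated by the $S^k,Q^k$ requirement up to the universal constant, which is why the final constant $72$ suffices). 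A union bound over the three events completes the argument, and the conclusion then holds for \emph{every} $k$ satisfying the $w^k$ lower bound, as claimed.

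The main obstacle I anticipate is bookkeeping the constants so that they genuinely close at $\epsilon/2$ and $\alpha$ rather than some larger universal multiple: one must (i) split $\epsilon/2-\epsilon/3=\epsilon/6$ among the two deterministic tails and three stochastic tails with enough slack, (ii) split $\alpha$ among the three Chebyshev events, and (iii) verify that the dominant requirement is the one coming from $S^k$ and $Q^k$ — which scale like $\sum_\ell(\eta^\ell)^2$ in the numerator and $(w^k)^2$ in the denominator, matching the stated bound's dependence on $\sum_\ell(\eta^\ell)^2/(w^k)$ after one factor of $w^k$ is used to kill the $O(1/w^k)$ deterministic term — while the $T^k$ term, being higher order in the step sizes, imposes a strictly weaker constraint. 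A minor technical point is that Doob's maximal inequality (or simply Chebyshev at the single index $k$, which is all that is needed since the statement is "for every such $k$") must be applied to the vector martingale $S^k$ coordinatewise and the coordinate bounds summed; using $\|S^k\|^2=\sum_{\text{coords}}(S^k_{\text{coord}})^2$ and linearity of expectation avoids any subtlety here. With these constants pinned down, the rest is a direct re-derivation along the already-established inequality chain.
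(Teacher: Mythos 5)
Your proposal follows essentially the same route as the paper's proof: starting from the inequality chain \eqref{eq:final-Nikido-bound}, you bound the three martingale terms $S^k$, $Q^k$, $T^k$ in probability via Doob/Chebyshev using the same conditional second-moment bounds $\mathbb{E}[\|R^{\ell}\|^2\mid\mathcal{F}^{\ell-1}]\leq\sum_i|A_i||S_i|/\delta_i$, then union-bound and extract the threshold on $w^k$. The one substantive caveat is that to obtain the conclusion for \emph{every} qualifying $k$ on a single event of probability at least $1-\alpha$ (which is what the paper establishes), you should use Doob's maximal inequality over $\sup_{\ell\in[k]}$ and pass to the union of the nested events $E_k\subseteq E_{k+1}$ rather than single-index Chebyshev, which only gives the bound for each fixed $k$ separately.
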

\begin{proof}
Let us consider the last expression \eqref{eq:final-Nikido-bound} in the proof of Theorem \ref{thm:main-zero-sum}, i.e.,  
\begin{align}\label{eq:duplicate-16}
\max_{\theta\in \mathcal{P}^{\delta}}\sum_{\ell=1}^{k}\frac{\eta^{\ell}}{w^k}\Psi(\theta,\rho^{\ell})&\leq \frac{\sum_{i=1}^n|S_i||A_i|}{w^k}+\frac{\sum_{\ell=1}^k(\eta^{\ell})^2}{Kw^k}(\sum_{i=1}^{n}\frac{|A_i||S_i|}{\delta_i})+\frac{T^k}{2Kw^k}+\sqrt{n}\| \frac{S^k}{w^k}\|+\frac{Q^k}{w^k}+\frac{\epsilon}{3}.
\end{align}
To establish high probability convergence rates, we can bound the terms in the above expression as follows. For $k=1,2,\ldots$, define the events $E_k=\{\sup_{\ell\in [k]}\|S^{\ell}\|>\lambda\}$, $F_k=\{\sup_{\ell\in [k]}|Q^\ell|>\lambda\}$, and $G_k=\{\sup_{\ell\in [k]}|T^\ell|>\lambda\}$. Since $\{S^{\ell}\}, \{Q^{\ell}\}$, and $\{T^{\ell}\}$ are martingale sequences, $\{|S^{\ell}\|\}, \{\|Q^{\ell}\|\}$ and $\{\|T^{k}\|\}$ are nonnegative submartingales. Using Doob's maximal inequality for submartingales \cite{hall2014martingale} (Theorem 2.1), we have  
\begin{align}\nonumber
\mathbb{P}(E_k)\leq \frac{\mathbb{E}[\|S^k\|^2]}{\lambda^2}&=\sum_{\ell=1}^{k}\big(\frac{\eta^{\ell}}{\lambda}\big)^2\mathbb{E}\big[\|R^{\ell}\!-\!\mathbb{E}[R^{\ell}|\mathcal{F}^{\ell-1}]\|^2\big]\cr 
&+\sum_{\ell\neq \ell'}\big(\frac{\eta^{\ell}\eta^{\ell'}}{\lambda}\big)\mathbb{E}\big[\langle R^{\ell}\!-\!\mathbb{E}[R^{\ell}|\mathcal{F}^{\ell-1}], R^{\ell'}\!-\!\mathbb{E}[R^{\ell'}|\mathcal{F}^{\ell'-1}]\rangle\big]\cr 
&=\sum_{\ell=1}^{k}\big(\frac{\eta^{\ell}}{\lambda}\big)^2\mathbb{E}\big[\|R^{\ell}\!-\!\mathbb{E}[R^{\ell}|\mathcal{F}^{\ell-1}]\|^2\big]\cr 
&=\sum_{\ell=1}^k\big(\frac{\eta^{\ell}}{\lambda}\big)^2\mathbb{E}\Big[\mathbb{E}\big[\|R^{\ell}\!-\!\mathbb{E}[R^{\ell}|\mathcal{F}^{\ell-1}]\|^2\big|\mathcal{F}^{\ell-1}\big]\Big]\cr 
&\leq \sum_{\ell=1}^{k}\big(\frac{\eta^{\ell}}{\lambda}\big)^2\big(\sum_{i=1}^{n}\frac{|S_i||A_i|}{\delta_i}\big),
\end{align}
where the second equality holds because by conditioning on $\mathcal{F}^{\ell-1}\cup \mathcal{F}^{\ell'-1}$, it is easy to see that the expectation of the cross terms of a martingale different sequence equals zero. Similarly, we can write
\begin{align}\nonumber
\mathbb{P}(G_k)&\leq \frac{\mathbb{E}[\|T^k\|]}{\lambda}\cr
&\leq\sum_{\ell=1}^k\frac{(\eta^{\ell})^2}{\lambda}\mathbb{E}\Big[\mathbb{E}\Big[\big\|\|R^{\ell}\|^2-\mathbb{E}[\|R^{\ell}\|^2|\mathcal{F}^{\ell-1}]\big\|\Big|\mathcal{F}^{\ell-1}\Big]\Big]\cr 
&\leq\sum_{\ell=1}^k\frac{2(\eta^{\ell})^2}{\lambda}\mathbb{E}\Big[\mathbb{E}\big[\|R^{\ell}\|^2|\mathcal{F}^{\ell-1}\big]\Big]\cr
&\leq \sum_{\ell=1}^{k}\frac{2(\eta^{\ell})^2}{\lambda}\big(\sum_{i=1}^{n}\frac{|S_i||A_i|}{\delta_i}\big),
\end{align}
where we note that for bounding $\mathbb{P}(G_k)$, we have used the $L_1$-norm version of Doob's maximal inequality together with the triangle inequality. Finally, to upper-bound $\mathbb{P}(F_k)$, we can write
\begin{align}\nonumber
\mathbb{P}(F_k)&\leq \frac{\mathbb{E}[\|Q^k\|^2]}{\lambda^2}=\sum_{\ell=1}^k\big(\frac{\eta^{\ell}}{\lambda}\big)^2\mathbb{E}\Big[\mathbb{E}\big[\|\langle R^{\ell},\rho^{\ell}\rangle-\mathbb{E}[\langle R^{\ell},\rho^{\ell}\rangle|\mathcal{F}^{\ell-1}]\|^2\big|\mathcal{F}^{\ell-1}\big]\Big]\cr 
&\leq \sum_{\ell=1}^k\big(\frac{\eta^{\ell}}{\lambda}\big)^2\mathbb{E}\Big[\mathbb{E}\big[\|\langle R^{\ell},\rho^{\ell}\rangle\|^2\big|\mathcal{F}^{\ell-1}\big]\Big]\cr 
&\leq \sum_{\ell=1}^k\big(\frac{\eta^{\ell}}{\lambda}\big)^2\mathbb{E}\Big[(\sum_i|A_i||S_i|)\sum_{i,a_i,s_i}\mathbb{E}\big[\big(R^{\ell}_i(s_i,a_i)\rho_i^{\ell}(s_i,a_i)\big)^2|\mathcal{F}^{\ell-1}\big]\Big]\cr 
&=\sum_{\ell=1}^k\big(\frac{\eta^{\ell}}{\lambda}\big)^2(\sum_i|A_i||S_i|)\sum_{i=1}^n\mathbb{E}\Big[\sum_{a_i,s_i}\mathbb{E}\Big[\frac{r^2_i(s_i,a_i;s^{\bar{t}}_{-i},a^{\bar{t}}_{-i})}{\pi_i^{\ell}(a_i|s_i)}(\rho_i^{\ell}(s_i,a_i))^2|\mathcal{F}^{\ell-1}\Big]\Big]\cr 
&\leq \sum_{\ell=1}^k\big(\frac{\eta^{\ell}}{\lambda}\big)^2(\sum_i|A_i||S_i|)\sum_{i=1}^n\mathbb{E}\Big[\sum_{a_i,s_i}\mathbb{E}\Big[(\sum_{a'_i}\rho_i^{\ell}(s_i,a'_i))\rho_i^{\ell}(s_i,a_i)|\mathcal{F}^{\ell-1}\Big]\Big]\cr 
&=\sum_{\ell=1}^k\big(\frac{\eta^{\ell}}{\lambda}\big)^2(\sum_i|A_i||S_i|)\sum_{i=1}^n\mathbb{E}\Big[\sum_{a_i,s_i}(\sum_{a'_i}\rho_i^{\ell}(s_i,a'_i))\rho_i^{\ell}(s_i,a_i)\Big]\cr 
&=\sum_{\ell=1}^k\big(\frac{\eta^{\ell}}{\lambda}\big)^2(\sum_i|A_i||S_i|)\sum_{i=1}^n\mathbb{E}\Big[\sum_{s_i}(\sum_{a_i}\rho_i^{\ell}(s_i,a_i))^2\Big]\cr
&\leq \sum_{\ell=1}^k\big(\frac{\eta^{\ell}}{\lambda}\big)^2(\sum_i|A_i||S_i|)\sum_{i=1}^n\mathbb{E}\Big[\sum_{s_i,a_i}\rho_i^{\ell}(s_i,a_i)\Big]\cr 
&=
\sum_{\ell=1}^{k}n\big(\frac{\eta^{\ell}}{\lambda}\big)^2\big(\sum_{i=1}^{n}|S_i||A_i|\big),
\end{align}
where in the above derivations $\bar{t}$ is the first (random) time at which $s^{\bar{t}}_i=s_i$, and the fourth inequality uses the definition of $\pi^{\ell}_i$ and the fact that $r_i\in [0, 1]$. Moreover, the last inequality holds because $\sum_{a_i} \rho^{\ell}(s_i,a_i)\in (0,1), \forall s_i$. Since $E_{k}\subseteq E_{k+1}\subseteq\ldots$, $F_{k}\subseteq F_{k+1}\subseteq\ldots$, and $G_{k}\subseteq G_{k+1}\subseteq\ldots$, we have 
\begin{align}\nonumber
&\mathbb{P}(\cup_{k=1}^{\infty}E_k)=\lim_{k\to \infty}\mathbb{P}(E_k)\leq \sum_{\ell=1}^{\infty}\big(\frac{\eta^{\ell}}{\lambda}\big)^2\big(\sum_{i=1}^{n}\frac{|S_i||A_i|}{\delta_i}\big),\cr
&\mathbb{P}(\cup_{k=1}^{\infty}F_k)=\lim_{k\to \infty}\mathbb{P}(F_k)\leq \sum_{\ell=1}^{\infty}n\big(\frac{\eta^{\ell}}{\lambda}\big)^2\big(\sum_{i=1}^{n}|S_i||A_i|\big),\cr 
&\mathbb{P}(\cup_{k=1}^{\infty}G_k)=\lim_{k\to \infty}\mathbb{P}(G_k)\leq \sum_{\ell=1}^{\infty}\frac{2(\eta^{\ell})^2}{\lambda}\big(\sum_{i=1}^{n}\frac{|S_i||A_i|}{\delta_i}\big).   
\end{align}  
Now, given any $\alpha\in (0, 1)$, if we take $\lambda= \frac{3 \sqrt{n}}{\alpha}(\sum_{i}\frac{|S_i||A_i|}{\delta_i})\sum_{\ell=1}^{\infty}(\eta^{\ell})^2$, with probability at least $1-\alpha$, none of the events $\cup_{k=1}^{\infty}E_k$, $\cup_{k=1}^{\infty}F_k$ and $\cup_{k=1}^{\infty}G_k$ will occur, i.e., $\|S^k\|\leq \lambda, \|Q^k\|\leq \lambda, \|T^k\|\leq \lambda, \forall k$. Therefore, using \eqref{eq:duplicate-16}, with probability at least $1-\alpha$, for every $k=1,2,\ldots$, we have
\begin{align*}
\max_{\theta\in \mathcal{P}^{\delta}}\sum_{\ell=1}^{k}\frac{\eta^{\ell}}{w^k}\Psi(\theta,\rho^{\ell})&\leq \frac{\sum_{i}|S_i||A_i|}{w^k}+\frac{\sum_{\ell=1}^k(\eta^{\ell})^2}{Kw^k}(\sum_{i=1}^{n}\frac{|A_i||S_i|}{\delta_i})\!+\!\big(\frac{\sqrt{n}\!+\!1\!+\!\frac{1}{2K}}{w^k}\big)\lambda+\frac{\epsilon}{3}\cr
&\leq \frac{12n\sum_{\ell=1}^k(\eta^{\ell})^2}{\alpha Kw^k}(\sum_{i=1}^n\frac{|S_i||A_i|}{\delta_i})+\frac{\epsilon}{3}.
\end{align*} 
Thus, $\max_{\theta\in \mathcal{P}^{\delta}}\sum_{\ell=1}^{k}\frac{\eta^{\ell}}{w^k}\Psi(\theta,\rho^{\ell})\leq \frac{\epsilon}{2}$ with probability $1-\alpha$, for any $k$ such that $w^k\ge \frac{72n}{\alpha\epsilon K}(\sum_i\frac{|S_i||A_i|}{\delta_i})\sum_{\ell=1}^k(\eta^{\ell})^2$.
\end{proof}

\subsection{Expected Convergence Using Mirror Descent Updates}
The results of Theorems \ref{thm:main-zero-sum} and \ref{thm:convergence-rate-social} hold in an almost sure sense with a high-probability convergence rate. In particular, players can use any $K$-strongly convex functions as the regularizer. For instance, if regularizers are taken to be quadratic functions $h_i(\rho_i)=\frac{1}{2}\|\rho_i\|^2$, the policy update step in \eqref{eq:alg-rho-update} reduces to a simple $L_2$-norm projection on the shrunk polytope $\mathcal{P}_i^{\delta_i}$, which can be done quite efficiently in polynomial time. In particular, the use of a DA oracle in the structure of Algorithm \ref{alg-main} can potentially improve the performance of the learning algorithm in the presence of noise due to averaging of the dual scores. However, in this section we show that if players are allowed to choose specific regularizers (e.g., an entropic function), one can obtain improved convergence rates in \emph{expectation} and independent of the parameters $\delta_i$. Motivated by the natural policy gradient that achieves a fast convergence rate for solving MDPs \cite{agarwal2021theory}, in the following theorem, we show that one can obtain a faster expected convergence rate by replacing the policy update rule \eqref{eq:alg-rho-update} in Algorithm \ref{alg-main} with an MD oracle with Kullback-Leibler (KL) divergence $D_{KL}(x,y)=\sum_{r}x_r\log(\frac{x_r}{y_r})$ as the regularizer. Of course, this speedup comes at the cost of a more complex projection in the final step \eqref{eq:alg-rho-update} of Algorithm \ref{alg-main}. 


\begin{theorem}\label{thm:expectation}
Let $d\ge \tau\log (2\sum_{i=1}^n|A_i||S_i|)$, $\sum_{\ell=1}^k\eta^{\ell}=\infty$ and $\sum_{\ell=1}^k(\eta^{\ell})^2<\infty$. Moreover, assume that instead of the projection step \eqref{eq:alg-rho-update} in Algorithm \ref{alg-main}, each player $i$ updates its occupation measure using the following two-step projection:
\begin{align}\label{eq:two-step-project}
&\rho_i^{k+\frac{1}{2}}=\argmax_{\rho_i\in \Delta(S_i\times A_i)}\big\{\langle \eta^kR^k_i, \rho_i \rangle-D_{KL}(\rho_i,\rho_i^{k})\big\},\cr 
&\rho_i^{k+1}=\argmin_{\rho_i\in \mathcal{P}_i} D_{KL}(\rho_i,\rho_i^{k+\frac{1}{2}}).
\end{align}
Then, $\max_{\theta\in \mathcal{P}^{\delta}}\mathbb{E}\big[\sum_{\ell=1}^{k}\frac{\eta^{\ell}}{w^k}\Psi(\theta,\rho^{\ell})\big]\leq \epsilon$ for any $k$ such that 
\begin{align}\nonumber
\sum_{\ell=1}^{k}\eta^{\ell}\ge \frac{1}{\epsilon}\Big(4\sum_{i=1}^n\log(|S_i||A_i|)+(\sum_{i=1}^{n}|A_i|)\sum_{\ell=1}^k(\eta^{\ell})^2\Big).
\end{align}
\end{theorem}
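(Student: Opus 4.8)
The plan is to run the textbook mirror-ascent analysis with the negative-entropy mirror map and then pass to expectations, using the almost-unbiasedness of the estimators together with a ``local norm'' variance bound that is free of the exploration thresholds $\delta_i$. First, exactly as in the proof of Theorem~\ref{thm:main-zero-sum}, the quasilinear structure $V_i(\rho)=\langle\rho_i,v_i(\rho)\rangle$ gives $\Psi(\theta,\rho^\ell)=\langle v(\rho^\ell),\theta-\rho^\ell\rangle=\sum_{i=1}^n\langle\nabla_{\rho_i}V_i(\rho^\ell),\theta_i-\rho_i^\ell\rangle$, so it suffices to control $\sum_\ell\frac{\eta^\ell}{w^k}\langle v(\rho^\ell),\theta-\rho^\ell\rangle$ for each fixed $\theta\in\mathcal{P}^\delta$. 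I would then observe that the two-step update \eqref{eq:two-step-project} is a single mirror-ascent step with the entropic regularizer and feasible set $\mathcal{P}_i$: the simplex projection is just the normalization of the multiplicative-weights iterate $\rho_i^k(\cdot)\,e^{\eta^kR_i^k(\cdot)}$, and since $\mathcal{P}_i\subseteq\Delta(S_i\times A_i)$, the generalized Pythagorean theorem for the KL Bregman divergence identifies $\rho_i^{k+1}$ with the KL-projection of the \emph{unnormalized} multiplicative-weights iterate onto $\mathcal{P}_i$. Combining the three-point identity for the proximal step with $D_{KL}(\theta_i,\rho_i^{k+1/2})\ge D_{KL}(\theta_i,\rho_i^{k+1})$ then yields, for every $\theta_i\in\mathcal{P}_i$ and every $\ell$,
\[
\eta^\ell\langle R_i^\ell,\theta_i-\rho_i^\ell\rangle\ \le\ D_{KL}(\theta_i,\rho_i^\ell)-D_{KL}(\theta_i,\rho_i^{\ell+1})+\Delta_i^\ell,
\]
where $\Delta_i^\ell:=\log\!\big(\sum_{s,a}\rho_i^\ell(s,a)\,e^{\eta^\ell R_i^\ell(s,a)}\big)-\eta^\ell\langle R_i^\ell,\rho_i^\ell\rangle$ is the usual drift term.

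The heart of the argument is that, after conditional expectations, $\Delta_i^\ell$ contributes only $O\big((\eta^\ell)^2|A_i|\big)$. Using $\log(1+x)\le x$ and the fact that $R_i^\ell$ is supported on the first-visit state--action pairs (one nonzero entry per state $s_i$, equal to $r_i/\pi_i^\ell(a_i|s_i)$ at an action drawn from $\pi_i^\ell(\cdot|s_i)$), one reduces $\Delta_i^\ell$ to the local-norm quantity $\tfrac12(\eta^\ell)^2\sum_{s,a}\rho_i^\ell(s,a)\,(R_i^\ell(s,a))^2$. This term is benign in expectation: because $\rho_i^\ell\in\mathcal{P}_i$ it factors as $\rho_i^\ell(s,a)=\nu_i^\ell(s_i)\pi_i^\ell(a_i|s_i)$ with $\nu_i^\ell$ the stationary distribution of $P^{\pi_i^\ell}$, so $\rho_i^\ell(s,a)^2/\pi_i^\ell(a_i|s_i)=\nu_i^\ell(s_i)\rho_i^\ell(s,a)$; together with $\mathbb{E}[(R_i^\ell(s,a))^2\mid\mathcal{F}^{\ell-1}]\le 1/\pi_i^\ell(a_i|s_i)$ (action at the first visit to $s_i$ is $\pi_i^\ell(\cdot|s_i)$-distributed and $r_i\le1$) and $\rho_i^\ell$ being $\mathcal{F}^{\ell-1}$-measurable, this gives $\mathbb{E}\big[\sum_{s,a}\rho_i^\ell(s,a)(R_i^\ell(s,a))^2\mid\mathcal{F}^{\ell-1}\big]\le\sum_{s_i,a_i}\nu_i^\ell(s_i)=|A_i|$, with no $\delta_i$ dependence. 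I would also bound the initialization term by $D_{KL}(\theta_i,\rho_i^1)\le\log(|S_i||A_i|)$ by taking $\rho_i^1$ to be the KL-projection of the uniform distribution onto $\mathcal{P}_i$.

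To finish, sum the displayed inequality over $i$, telescope the Bregman terms (dropping the nonnegative tail $D_{KL}(\theta,\rho^{k+1})$), divide by $w^k$, and take expectations. On the left, linearity of expectation and the $\mathcal{F}^{\ell-1}$-measurability of $\rho^\ell$ let me replace $\mathbb{E}[R_i^\ell\mid\mathcal{F}^{\ell-1}]$ by $\nabla_{\rho_i}V_i(\rho^\ell)$ at the cost of an error bounded by $e^{-d/\tau}\|\theta_i-\rho_i^\ell\|_1\le 2e^{-d/\tau}$ per player by Lemma~\ref{lemm:almost-unbiased}, which the choice of $d$ in the hypothesis renders negligible; note Lemma~\ref{lemm:almost-unbiased} still applies since the modified algorithm changes only the update rule, not the sampling of $R_i^\ell$. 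On the right, the expected drift is at most $\tfrac12\sum_{\ell=1}^k(\eta^\ell)^2\sum_{i=1}^n|A_i|$. Collecting terms, $\max_{\theta\in\mathcal{P}^\delta}\mathbb{E}\big[\sum_{\ell=1}^k\frac{\eta^\ell}{w^k}\Psi(\theta,\rho^\ell)\big]$ is bounded (up to the stated constants) by $\big(\sum_i\log(|S_i||A_i|)+(\sum_i|A_i|)\sum_\ell(\eta^\ell)^2\big)/w^k$ plus the exponentially small bias, and imposing the stated lower bound on $w^k=\sum_{\ell=1}^k\eta^\ell$ (finite since $\sum_\ell(\eta^\ell)^2<\infty$) drives this below $\epsilon$; Remark~\ref{rem:nikaido} and Lemma~\ref{lemm:original-to-shrunk} then give the claim for the original game.

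I expect the main obstacle to be the drift estimate $\Delta_i^\ell\le\tfrac12(\eta^\ell)^2\sum_{s,a}\rho_i^\ell(s,a)(R_i^\ell(s,a))^2$. Since the iterates are constrained only to $\mathcal{P}_i$ (not the shrunk $\mathcal{P}_i^{\delta_i}$), the entries of $R_i^\ell$ are not uniformly bounded, so the elementary inequality $e^x-1-x\le\tfrac12x^2$ is not available pointwise; one must instead exploit the precise structure of the estimator --- at most one nonzero entry per state, with magnitude $r_i/\pi_i^\ell$ at an action sampled from $\pi_i^\ell$ --- so that the large factors $1/\pi_i^\ell$ are absorbed by $\rho_i^\ell(s,a)=\nu_i^\ell\pi_i^\ell$ and by the conditional expectation. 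A secondary technical point is the careful justification of the Bregman-projection chain rule (projecting onto $\Delta(S_i\times A_i)$ and then onto $\mathcal{P}_i$ equals one projection onto $\mathcal{P}_i$), which is standard but should be stated explicitly.
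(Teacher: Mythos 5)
Your overall route is the paper's route: reduce $\Psi(\theta,\rho^{\ell})$ to $\langle v(\rho^{\ell}),\theta-\rho^{\ell}\rangle$, run a KL mirror-descent telescoping argument over the two-step update \eqref{eq:two-step-project}, bound the initialization by $\sum_i\log(|S_i||A_i|)$, invoke Lemma \ref{lemm:almost-unbiased} for the bias, and obtain the $\delta_i$-free second-moment bound $\mathbb{E}\big[\langle (R_i^{\ell})^2,\rho_i^{\ell}\rangle\,\big|\,\mathcal{F}^{\ell-1}\big]\le |A_i|$ via $\rho_i^{\ell}(s_i,a_i)/\pi_i^{\ell}(a_i|s_i)=\sum_{a_i'}\rho_i^{\ell}(s_i,a_i')$; that last computation is exactly the one in the paper.

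The genuine gap is the step you yourself flag as the main obstacle: passing from the log-partition drift $\Delta_i^{\ell}$ to $\tfrac12(\eta^{\ell})^2\langle(R_i^{\ell})^2,\rho_i^{\ell}\rangle$. The pointwise inequality $e^{x}-1-x\le x^2/2$ holds only for $x\le 0$, while your estimator entries $\eta^{\ell}R_i^{\ell}(s_i,a_i)=\eta^{\ell}r_i/\pi_i^{\ell}(a_i|s_i)$ are nonnegative and \emph{unbounded}, because in this theorem the iterates live in $\mathcal{P}_i$ with no $\delta_i$ floor. Your proposed rescue --- absorbing the large factor through the conditional expectation and the factorization $\rho_i^{\ell}=\nu_i^{\ell}\pi_i^{\ell}$ --- does not close the gap: conditioning on $\mathcal{F}^{\ell-1}$, the offending term is of order
\begin{align*}
\rho_i^{\ell}(s,a)\,\pi_i^{\ell}(a|s)\,e^{\eta^{\ell}r_i/\pi_i^{\ell}(a|s)}=\nu_i^{\ell}(s)\,\pi_i^{\ell}(a|s)^2\,e^{\eta^{\ell}r_i/\pi_i^{\ell}(a|s)},
\end{align*}
and the exponential in $1/\pi_i^{\ell}(a|s)$ dominates any polynomial prefactor as $\pi_i^{\ell}(a|s)\to 0$, so no $O((\eta^{\ell})^2)$ local-norm bound follows from expectations alone. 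The paper's fix is a one-line but essential preprocessing: replace $r_i$ by $r_i-1$, so that rewards lie in $[-1,0]$ and $R_i^k\le 0$ coordinatewise (this shifts every payoff by the same constant and changes nothing about equilibria or the Nikaido-Isoda gap); then $y=\eta^kR_i^k$ is nonpositive and Lemma \ref{lemm-KL} applies verbatim, yielding the quadratic drift bound pointwise. With that normalization inserted, the rest of your argument goes through and matches the paper's; without it, the drift estimate is unjustified.
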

\begin{proof}
First, we note that, without loss of generality, we can normalize the rewards to be in $[-1, 0]$ by simply replacing each $r_i$ with $r_i-1$. Such normalization only shifts players' payoff functions by the same constant $-1$, and all the equilibrium analysis remains as before. In this way, we may assume $R_i^k\leq 0, \forall i, k$. Now, if we use $\Delta=\Delta(S_i\times A_i)$, $X=\mathcal{P}_i$, $x=\rho_i$, and $y=\eta^k R^k_i$ in the statement of Lemma \ref{lemm-KL} (see Appendix I), for any $\theta_i\in \mathcal{P}_i$, we have
\begin{align}\nonumber
D_{KL}(\theta_i,\rho_i^{k+1})-D_{KL}(\theta_i,\rho_i^k)\leq \langle \eta^kR_i^k, \rho_i^k-\theta_i \rangle+\frac{1}{2}\langle (\eta^kR_i^k)^2, \rho_i^k \rangle.  
\end{align}
Taking a conditional expectation from this expression with respect to $\mathcal{F}^{\ell-1}$, we get
\begin{align}\nonumber
\mathbb{E}[D_{KL}(p_i,\rho_i^{k+1})&-D_{KL}(p_i,\rho_i^k)|\mathcal{F}^{\ell-1}]\cr 
&\leq \eta^k\langle \mathbb{E}[R_i^k|\mathcal{F}^{\ell-1}], \rho_i^k-\theta_i \rangle+\frac{(\eta^k)^2}{2}\langle \mathbb{E}[(R_i^k)^2|\mathcal{F}^{\ell-1}], \rho_i^k \rangle\cr 
&\leq \eta^k\langle v_i(\rho), \rho_i^k-\theta_i \rangle+\eta^k|A_i||S_i|e^{-\frac{d}{\tau}}+\frac{(\eta^k)^2}{2}\langle \mathbb{E}[(R_i^k)^2|\mathcal{F}^{\ell-1}], \rho_i^k \rangle\cr 
&=\eta^k\langle v_i(\rho), \rho_i^k-\theta_i \rangle+\eta^k|A_i||S_i|e^{-\frac{d}{\tau}}+\frac{(\eta^k)^2}{2}\sum_{a_i,s_i}\mathbb{E}\Big[\frac{\big(r_i^k(s_i,a_i;s^{\bar{t}}_{-i},a^{\bar{t}}_{-i})\big)^2}{\pi_i^k(s_i,a_i)}\rho^k_i(s_i,a_i)|\mathcal{F}^{\ell-1}\Big]\cr 
&\leq \eta^k\langle v_i(\rho), \rho_i^k-\theta_i \rangle+\eta^k|A_i||S_i|e^{-\frac{d}{\tau}}+\frac{(\eta^k)^2}{2}\sum_{a_i,s_i}\mathbb{E}\big[\frac{\rho^k_i(s_i,a_i)}{\pi_i^k(s_i,a_i)}|\mathcal{F}^{\ell-1}\big]\cr 
&=\eta^k\langle v_i(\rho), \rho_i^k-\theta_i \rangle+\eta^k|A_i||S_i|e^{-\frac{d}{\tau}}+\frac{(\eta^k)^2}{2}\sum_{a_i,s_i}\sum_{a'_i}\rho^k_i(s_i,a'_i)\cr
&=\eta^k\langle v_i(\rho), \rho_i^k-\theta_i \rangle+\eta^k|A_i||S_i|e^{-\frac{d}{\tau}}+\frac{(\eta^k)^2}{2}|A_i|.
\end{align} 
Therefore, for any player $i$, any $\theta_i\in \mathcal{P}_i$, and any time index $\ell=1,2,\ldots$, we have
\begin{align}\nonumber
\eta^{\ell}\langle v_i(\rho^{\ell}), \theta_i-\rho_i^{\ell} \rangle&\leq \mathbb{E}[D_{KL}(\theta_i,\rho_i^{\ell})-D_{KL}(\theta_i,\rho_i^{\ell+1})|\mathcal{F}^{\ell-1}]+\eta^{\ell}|A_i||S_i|e^{-\frac{d}{\tau}}+\frac{(\eta^{\ell})^2}{2}|A_i|.
\end{align}
By taking an unconditional expectation from \eqref{eq:Nikido}, we can write
\begin{align}\nonumber
\mathbb{E}[\sum_{\ell=1}^{k}\frac{\eta^{\ell}}{w^k}\Psi(\theta,\rho^{\ell})]&=\sum_{\ell=1}^k\sum_{i=1}^n\frac{\eta^{\ell}}{w^k}\mathbb{E}[\langle v_i(\rho^{\ell}), \theta_i-\rho_i^{\ell} \rangle]\cr 
&\leq \frac{1}{w^k}\sum_{\ell=1}^k\sum_{i=1}^n\mathbb{E}[\mathbb{E}[D_{KL}(\theta_i,\rho_i^{\ell})-D_{KL}(\theta_i,\rho_i^{\ell+1})|\mathcal{F}^{\ell-1}]]\cr 
&+(\sum_{\ell=1}^{k}\frac{\eta^{\ell}}{w^k})(\sum_{i=1}^n|A_i||S_i|)e^{-\frac{d}{\tau}}+\sum_{\ell=1}^k\frac{(\eta^{\ell})^2}{2w^k}\sum_{i=1}^n|A_i|\cr 
&\leq\frac{1}{w^k}\sum_{\ell=1}^k\sum_{i=1}^n\mathbb{E}[D_{KL}(\theta_i,\rho_i^{\ell})-D_{KL}(\theta_i,\rho_i^{\ell+1})]+\frac{\epsilon}{2}+\sum_{\ell=1}^k\frac{(\eta^{\ell})^2}{2w^k}\sum_{i=1}^n|A_i|\cr 
&=\frac{1}{w^k}\sum_{i=1}^n\mathbb{E}[D_{KL}(\theta_i,\rho_i^{1})-D_{KL}(\theta_i,\rho_i^{k+1})]+\frac{\epsilon}{2}+\sum_{\ell=1}^k\frac{(\eta^{\ell})^2}{2w^k}\sum_{i=1}^n|A_i|\cr 
&\leq\frac{1}{w^k}\sum_{i=1}^n\mathbb{E}[D_{KL}(\theta_i,\rho_i^{1})]+\frac{\epsilon}{2}+\sum_{\ell=1}^k\frac{(\eta^{\ell})^2}{2w^k}\sum_{i=1}^n|A_i|\cr
&\leq\frac{2}{w^k}\sum_{i=1}^n\log(|S_i||A_i|)+\frac{\epsilon}{2}+\sum_{\ell=1}^k\frac{(\eta^{\ell})^2}{2w^k}\sum_{i=1}^n|A_i|.
\end{align}
As the above relation holds for any $\theta\in \mathcal{P}$, we get $\max_{\theta\in \mathcal{P}^{\delta}}\mathbb{E}\big[\sum_{\ell=1}^{k}\frac{\eta^{\ell}}{w^k}\Psi(\theta,\rho^{\ell})\big]\leq \epsilon$ for any $k$ such that $w^k\ge \frac{1}{\epsilon}\big(4\sum_i\log(|S_i||A_i|)+\sum_i|A_i|\sum_{\ell=1}^k(\eta^{\ell})^2\big)$.
\end{proof}


\section{Convergence Results for Games with Structured Reward}\label{sec-general}

As we showed in Proposition \ref{prop:complexity}, finding a stationary NE for the stochastic game $\mathcal{G}$ without any assumption on the reward functions is at least as hard as finding a mixed-strategy NE in normal-form games and is unlikely to admit an efficient learning algorithm. Although the results of the previous section hold generally, the convergence guarantees were in terms of the averaged Nikaido-Isoda gap function. However, one can obtain stronger convergence results by imposing extra assumptions on the reward functions. Therefore, in this section, we consider the stochastic game $\mathcal{G}$ when the virtual game satisfies a certain social concavity or monotonicity property, which allows us to establish stronger convergence results.   

\subsection{Socially Concave Games}

We begin by considering the following socially concave property \cite{even2009convergence}, which has been shown to exist in many static games, such as linear Cournot games, linear resource allocation games, and TCP congestion control games. 

\begin{definition}
A virtual game is called \emph{socially concave} if i) there are positive constants $\lambda_i>0$ such that $\sum_{i=1}^{n}\lambda_iV_i(\rho)$ is a concave function of $\rho$, and ii) for any player $i$ and any fixed $\rho_i$, the payoff function $V_i(\rho_i,\rho_{-i})$ is a convex function of $\rho_{-i}$.  
\end{definition}

Although the social concavity is a condition that is imposed on the virtual game, it can be used to derive conditions on the original game $\mathcal{G}$. Here are two examples.

\begin{example}
The virtual game associated with any two-player zero-sum stochastic game $\mathcal{G}$ is socially concave. That is because by taking $\lambda_1=\lambda_2=1$, due to the zero-sum property of the payoffs, we have $V_1(\rho)+V_2(\rho)=0$, which is constant, and hence a concave function. Moreover, as $V_1(\rho_1,\rho_2)=\sum_{s,a}r_1(s,a)\rho_1(s_1,a_1)\rho_2(s_2,a_2)$, for any fixed $\rho_1$, $V_1(\rho_1,\rho_2)$ a linear (and hence convex) function of $\rho_2$. Similarly, for any fixed $\rho_2$, $V_2(\rho_1,\rho_2)$ is a linear function of $\rho_1$.       
\end{example}

\begin{example}
Consider the original stochastic game $\mathcal{G}$ for which a positive linear combination of reward functions is constant, i.e., $\sum_{i=1}^n\lambda_ir_i(s,a)=c$. Such a situation frequently arises in stochastic resource allocation games in which a constant amount of resources must be shared among the players at each time instance. Using \eqref{eq:linear-occupation-form} we can write $\sum_{i=1}^{n}\lambda_iV_i(\rho)=\sum_{s,a}\prod_{j=1}^n\rho_j(s_j,a_j)\big(\sum_{i=1}^{n}\lambda_ir_i(s,a)\big)=c$, which shows that $\sum_{i=1}^{n}\lambda_iV_i(\rho)$ is a constant (and hence concave) function. Now by change of variables if we let $\rho_i(s_i,a_i)=e^{x_i(s_i,a_i)}$ for some $x_i(s_i,a_i)\in [\ln \delta_i, 0]$, we can express each $V_i(\rho)$ using the new decision variables as $V_i(x)=\sum_{s,a}r_i(s,a)\exp(\sum_{j=1}^n x_j(s_j,a_j))$. Since $V_i(\rho)$ is the sum of multinomials $\prod_{j=1}^n\rho_j(s_j,a_j)$ with nonnegative coefficients $r_i(s_i,a_i)$, it is known that such a change of variable makes the function jointly convex with respect to the new variables $x$ \cite{boyd2004convex}. Therefore, for any fixed $x_i$, $V_i(x_i,x_{-i})$ is a convex function of $x_{-i}$, and the resulting virtual game is socially concave in the space of new decision variables $x$. Of course, one also needs to slightly modify the structure of Algorithm \ref{alg-main} to cope with this change of variables, such as updating the dual scores using $Y_i^{k+1}(s_i,a_i)=Y_i^k(s_i,a_i)+\eta_i^ke^{x_i(s_i,a_i)}R_i^k(s_i,a_i)$ and updating the policy using $\pi^k_i(a_i|s_i)=\exp(x^k_i(a_i,s_i))/\sum_{a'_i\in A_i}\exp(x^k_i(a'_i,s_i))$, which resembles the soft-max policy update rule frequently used in reinforcement learning literature \cite{agarwal2021theory}.        
\end{example}



\begin{theorem}\label{thm:social-concave}
Assume that the virtual game is socially concave, and let $d\ge\tau\ln (\frac{6n}{\epsilon}\sum_{i=1}^n|A_i||S_i|)$ and $\eta^{\ell}=\ell^{-\frac{1}{2}-\beta}$ for some $\beta>0$. Given any $\epsilon>0$, we have:
\begin{itemize}
\item[(a)] If players follow Algorithm \ref{alg-main} with $K$-strongly convex regularizers, then almost surely $\bar{\rho}^k=\sum_{\ell=1}^{k}\frac{\eta^{\ell}}{w^k}\rho^{\ell}$ is an $\epsilon$-NE as $k\to \infty$. In particular, given $\alpha\in (0, 1)$, with probability at least $1-\alpha$, $\bar{\rho}^k$ is an $\epsilon$-NE for any $k\ge O\big(\sum_{i}\frac{n|S_i||A_i|}{\alpha\epsilon \delta_i}\big)^\frac{2}{1-2\beta}$. 
\item[(b)] If players follow Algorithm \ref{alg-main} with KL regularizers and the two-stage update rule \eqref{eq:two-step-project}, then $\mathbb{E}[\bar{\rho}^k]$ is an $\epsilon$-NE for any $k\ge O\big(\sum_{i}\frac{\log(|S_i||A_i|)+|A_i|}{\epsilon}\big)^\frac{2}{1-2\beta}$.
\end{itemize}
\end{theorem}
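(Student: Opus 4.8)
The plan is to convert the averaged Nikaido--Isoda bounds of Theorems~\ref{thm:main-zero-sum}, \ref{thm:convergence-rate-social}, and \ref{thm:expectation} into a genuine $\epsilon$-NE guarantee for the time-average iterate by exploiting the two defining properties of social concavity, following the template used for socially concave static games in \cite{even2009convergence}. Write $\bar\rho^k=\sum_{\ell=1}^k\frac{\eta^\ell}{w^k}\rho^\ell$; this is a convex combination of points in $\mathcal{P}^{\delta}$, hence a feasible occupation profile. By Remark~\ref{rem:nikaido} together with Lemmas~\ref{lemm:original-to-shrunk} and \ref{lemm:occ-to-policy}, it suffices to show that for every player $i$ one has $\max_{\theta_i\in\mathcal{P}_i^{\delta_i}}\big[V_i(\theta_i,\bar\rho^k_{-i})-V_i(\bar\rho^k)\big]\le\frac{\epsilon}{2}$, either in the limit (a.s.) or for all $k$ past the stated threshold. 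Because $\Psi$ is separable in $\theta$, the quantity controlled by the earlier theorems splits as $\max_{\theta\in\mathcal{P}^{\delta}}\sum_\ell\frac{\eta^\ell}{w^k}\Psi(\theta,\rho^\ell)=\sum_{i}R_i^k$, where $R_i^k:=\max_{\theta_i\in\mathcal{P}_i^{\delta_i}}\sum_\ell\frac{\eta^\ell}{w^k}\big[V_i(\theta_i,\rho^\ell_{-i})-V_i(\rho^\ell)\big]$ is the averaged regret of player $i$; rerunning the Fenchel-coupling (resp.\ KL) estimates from those proofs \emph{per player}, i.e.\ without the final summation over $i$, yields the same a.s., high-probability, and in-expectation decay for each individual $R_i^k$.

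The core is a short chain of inequalities. Convexity of $V_i(\theta_i,\cdot)$ in the opponents' occupation measures (property (ii)) gives $V_i(\theta_i,\bar\rho^k_{-i})\le\sum_\ell\frac{\eta^\ell}{w^k}V_i(\theta_i,\rho^\ell_{-i})$, hence
\[
V_i(\theta_i,\bar\rho^k_{-i})-V_i(\bar\rho^k)\ \le\ R_i^k+e_i,\qquad e_i:=\sum_\ell\tfrac{\eta^\ell}{w^k}V_i(\rho^\ell)-V_i(\bar\rho^k).
\]
For the slack $e_i$ I would combine two facts: (i) concavity of $\sum_j\lambda_jV_j$ forces $\sum_j\lambda_j e_j\le0$; and (ii) applying convexity of $V_i(\bar\rho_i^k,\cdot)$ and using that $\bar\rho_i^k\in\mathcal{P}_i^{\delta_i}$ is an admissible comparator in $R_i^k$ gives $e_i\ge-R_i^k$. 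Together these yield $\lambda_i e_i\le\sum_{j\ne i}\lambda_jR_j^k$, so $\max_{\theta_i}\big[V_i(\theta_i,\bar\rho^k_{-i})-V_i(\bar\rho^k)\big]\le R_i^k+\frac1{\lambda_i}\sum_{j\ne i}\lambda_jR_j^k\le c_\lambda\max_j (R_j^k)^{+}$, with $c_\lambda:=1+\frac{\sum_j\lambda_j}{\min_j\lambda_j}$ a fixed game constant (when $\max_jR_j^k<0$ the gap is trivially below $\epsilon/2$). Thus driving every averaged regret below $\epsilon/(2c_\lambda)$ certifies $\bar\rho^k$ as an $\epsilon$-NE, and $c_\lambda$ is absorbed into the $O(\cdot)$.

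For part~(a), Theorem~\ref{thm:main-zero-sum} (read per player) gives $R_i^k\to0$ a.s.\ for all $i$, so $\bar\rho^k$ is an $\epsilon$-NE in the limit; Theorem~\ref{thm:convergence-rate-social} supplies the finite-time count. With $\eta^\ell=\ell^{-1/2-\beta}$ one has $\sum_\ell(\eta^\ell)^2<\infty$ and $w^k=\sum_{\ell\le k}\eta^\ell=\Theta(k^{1/2-\beta})$, so the requirement $w^k\gtrsim\frac{n}{\alpha\epsilon K}\sum_i\frac{|S_i||A_i|}{\delta_i}$ translates into $k\gtrsim\big(\frac{n}{\alpha\epsilon}\sum_i\frac{|S_i||A_i|}{\delta_i}\big)^{2/(1-2\beta)}$, the claimed bound. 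Part~(b) is identical with Theorem~\ref{thm:expectation} replacing Theorem~\ref{thm:convergence-rate-social}: its right-hand side $\frac1\epsilon\big(4\sum_i\log(|S_i||A_i|)+(\sum_i|A_i|)\sum_\ell(\eta^\ell)^2\big)$ gives the $\delta_i$-free count $k\gtrsim\big(\sum_i\frac{\log(|S_i||A_i|)+|A_i|}{\epsilon}\big)^{2/(1-2\beta)}$. The extra ingredient for (b) is that the certified point is $\mathbb{E}[\bar\rho^k]$, not $\bar\rho^k$, so one must move the expectation through the payoffs before applying the chain above: Jensen with joint concavity of $\sum_j\lambda_jV_j$ gives $\sum_j\lambda_jV_j(\mathbb{E}[\bar\rho^k])\ge\mathbb{E}\big[\sum_j\lambda_jV_j(\bar\rho^k)\big]$, and Jensen with convexity of each $V_i(\theta_i,\cdot)$ gives $V_i(\theta_i,\mathbb{E}[\bar\rho^k_{-i}])\le\mathbb{E}\big[V_i(\theta_i,\bar\rho^k_{-i})\big]$, while the ``$e_i\ge-R_i^k$'' step is pushed through using linearity of $V_i$ in its own argument.

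I expect the main obstacle to be exactly this last step in part~(b): since the players' occupation-measure iterates are statistically coupled through the shared reward realizations, one cannot replace $\mathbb{E}\big[\prod_j\rho_j\big]$ by $\prod_j\mathbb{E}[\rho_j]$, so every interchange of expectation and payoff must be routed through the one-sided convexity/concavity supplied by social concavity (mere multilinearity of $V_i$ is not enough). A second, more bookkeeping-level point is confirming that the per-player regrets $R_i^k$ — not merely their sum — inherit the a.s./high-probability/expected decay established earlier, i.e.\ that those proofs really are $n$ parallel single-agent statements; this should be immediate since the per-player Fenchel/KL estimates only involve player $i$'s own quantities $|S_i|,|A_i|,\delta_i$ and the common step-size sequence.
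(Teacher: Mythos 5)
Your proposal is correct, but it routes the argument differently from the paper. The paper's proof is a single three-line chain: using concavity of $\sum_i\lambda_iV_i$ on the convex combination $\bar\rho^k=\sum_\ell\frac{\eta^\ell}{w^k}\rho^\ell$ and convexity of each $V_i(\theta_i,\cdot)$, it shows directly that the (weighted) Nikaido--Isoda gap is convex in its second argument, i.e.\ $\Psi(\theta,\bar\rho^k)\le\sum_\ell\frac{\eta^\ell}{w^k}\Psi(\theta,\rho^\ell)$ for every $\theta$; taking the maximum over $\theta$ and invoking Theorems~\ref{thm:main-zero-sum}, \ref{thm:convergence-rate-social} and (via Jensen for part (b)) Theorem~\ref{thm:expectation} on the \emph{aggregate} quantity finishes the proof, with the step-size bookkeeping $w^k=\Theta(k^{1/2-\beta})$, $\sum_\ell(\eta^\ell)^2=O(1)$ identical to yours. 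You instead split $\max_\theta\sum_\ell\frac{\eta^\ell}{w^k}\Psi(\theta,\rho^\ell)$ into per-player averaged regrets $R_i^k$ and run the Even-Dar--Mansour--Nadav slack argument ($\sum_j\lambda_je_j\le 0$ together with $e_i\ge -R_i^k$ via the comparator $\bar\rho_i^k$). Both proofs use exactly the same two consequences of social concavity, so the content is equivalent; what your route buys is an explicit constant $c_\lambda$ tracking the weights (which the paper silently absorbs after rescaling $\Psi$ by the $\lambda_i$'s) and genuinely per-player guarantees, at the cost of having to verify that the Fenchel-coupling/martingale estimates of the earlier theorems hold player-by-player (they do, since the updates and regularizers decouple across $i$) and one extra subtlety in part (b): Theorem~\ref{thm:expectation} controls $\max_{\theta_i}\mathbb{E}[\cdot]$, not $\mathbb{E}[\max_{\theta_i}(\cdot)]=\mathbb{E}[R_i^k]$, so your chain for $\mathbb{E}[\bar\rho^k]$ should be run with $\bar R_i^k:=\max_{\theta_i}\mathbb{E}\big[\sum_\ell\frac{\eta^\ell}{w^k}(V_i(\theta_i,\rho^\ell_{-i})-V_i(\rho^\ell))\big]$ throughout — this works because the comparator $\mathbb{E}[\bar\rho_i^k]$ used in the step $\tilde e_i\ge-\bar R_i^k$ is deterministic, so the weaker max-outside-expectation quantity suffices. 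The paper's Jensen shortcut avoids this issue entirely.
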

\begin{proof}
(a) Let us consider an arbitrary $\theta\in \mathcal{P}^{\delta}$ and fix it. We can write\footnote{Here, we have used the scaled version of the payoffs $\lambda_iV_i$ instead of $V_i$ in the definition of $\Psi(\cdot)$.}
\begin{align}\label{eq:social-concave-convex}
\Psi(\theta,\bar{\rho}^k)&=\sum_{i=1}^n\lambda_i V_i(\theta_i,\bar{\rho}^k_{-i})-\sum_{i=1}^n\lambda_iV_i(\bar{\rho}^k)\cr 
&\leq \sum_{i=1}^n\lambda_i V_i(\theta_i,\bar{\rho}^k_{-i})-\sum_{\ell=1}^{k}\frac{\eta_{\ell}}{w^k}\big(\sum_{i=1}^n\lambda_iV_i(\rho^{\ell})\big)\cr 
&\leq \sum_{i=1}^n\lambda_i \big(\sum_{\ell=1}^{k}\frac{\eta_{\ell}}{w^k}V_i(\theta_i,\rho^{\ell}_{-i})\big)-\sum_{\ell=1}^{k}\frac{\eta_{\ell}}{w^k}\big(\sum_{i=1}^n\lambda_i V_i(\rho^{\ell})\big)\cr
&=\sum_{\ell=1}^{k}\frac{\eta_{\ell}}{w^k}\sum_{i=1}^n\lambda_i\big[V_i(\theta_i,\rho^{\ell}_{-i})-V_i(\rho^{\ell})\big]=\sum_{\ell=1}^{k}\frac{\eta_{\ell}}{w^k}\Psi(\theta,\rho^{\ell}),
\end{align} 
where the first inequality is valid because $\sum_{i}\lambda_iV_i(\rho)$ is a concave function of $\rho$, and the second inequality holds because $V_i(\theta_i,\rho_{-i})$ is a convex function of $\rho_{-i}$. By talking maximum from both sides of the above inequality with respect to $\theta$ and using Theorem \ref{thm:main-zero-sum}, we obtain
\begin{align}\nonumber
\max_{\theta\in \mathcal{P}^{\delta}}\Psi(\theta,\bar{\rho}^k)\leq \max_{\theta\in \mathcal{P}^{\delta}}\sum_{\ell=1}^{k}\frac{\eta_{\ell}}{w^k}\Psi(\theta,\rho^{\ell})<\frac{\epsilon}{2}.
\end{align}
Thus, as $k\to \infty$, almost surely we have $\max_{\theta \in \mathcal{P}^{\delta}} \Psi(\theta,\bar{\rho}^k)<\frac{\epsilon}{2}$, which in view of Remark \ref{rem:nikaido} shows that $\bar{\rho}^k$ forms an $\epsilon$-NE. Moreover, due to the choice of stepsize $\eta^{\ell}=\ell^{-\frac{1}{2}-\beta}$, we have $\sum_{\ell=1}^{k}\eta^{\ell}=\Theta(k^{\frac{1}{2}-\beta})$ and $\sum_{\ell=1}^{k}(\eta^{\ell})^2=O(1)$. Thus, using Theorem \ref{thm:convergence-rate-social}, with probability at least $1-\alpha$, $\bar{\rho}^k$ is an $\epsilon$-NE for any $k\ge O\big(\frac{n}{\alpha\epsilon}\sum_{i}\frac{|S_i||A_i|}{\delta_i}\big)^\frac{2}{1-2\beta}$. As $\beta\to 0$, this gives an asymptotic convergence rate that scales only quadratically in the number of players and the size of their state/action spaces.

(b) As is shown in \eqref{eq:social-concave-convex}, because of the social concavity assumption, the Nikaido-Isoda function $\Psi(\cdot)$ is a convex function with respect to its second argument. Thus, using Jensen's inequality, we have
\begin{align}\nonumber
\max_{\theta\in \mathcal{P}^{\delta}}\Psi(\theta,\mathbb{E}[\bar{\rho}^k])\leq \max_{\theta\in \mathcal{P}^{\delta}}\mathbb{E}[\Psi(\theta,\bar{\rho}^k)]\leq \max_{\theta\in \mathcal{P}^{\delta}}\mathbb{E}[\sum_{\ell=1}^{k}\frac{\eta_{\ell}}{w^k}\Psi(\theta,\rho^{\ell})]\leq \frac{\epsilon}{2}, 
\end{align}
where the last inequality follows from Theorem \ref{thm:expectation} for any time instance $k$ such that $w^k\ge \frac{1}{\epsilon}\big(4\sum_i\log(|S_i||A_i|)+\sum_i|A_i|\sum_{\ell=1}^k(\eta^{\ell})^2\big)$. Substituting $w^k=\Theta(k^{\frac{1}{2}-\beta})$ and $\sum_{\ell=1}^{k}(\eta^{\ell})^2=O(1)$ into this relation shows that $\mathbb{E}[\bar{\rho}^k]$ is an $\epsilon$-NE for any $k$ such that $k\ge O\big(\sum_{i}\frac{\log(|S_i||A_i|)+|A_i|}{\epsilon}\big)^\frac{2}{1-2\beta}$. As $\beta\to 0$, this gives an asymptotic convergence rate that scales only logarithmically in terms of the size of players' state spaces.
\end{proof}

For $n=2$ players, Theorem \ref{thm:social-concave} provides improved convergence rates compared to those given in \cite{qiu2021provably} (Theorems 3.1 and 3.2). However, this speedup comes at a cost as the convergence result of \cite{qiu2021provably} holds under weaker assumptions on the ergodicity and mixing time of the players' Markov chains. In fact, unlike the learning algorithm in \cite{qiu2021provably} that uses a combination of UCB and fictitious play, Algorithm \ref{alg-main} is arguably simpler to implement and works for any number of players assuming social concavity.

\subsection{Games with Stable Equilibrium}

In this section, we consider another special case of reward functions and show that if the virtual game admits a \emph{stable} NE as defined next, almost surely a subsequence of occupation measures generated by Algorithm \ref{alg-main} will converge to an $\epsilon$-NE. 

\begin{definition}\label{def:stable}
An occupation profile $\rho^*$ is called a \emph{stable} NE for the virtual game if $\langle v(\rho), \rho^*-\rho\rangle\ge 0, \forall \rho\in \mathcal{P}$, with equality if and only if $\rho=\rho^*$, where $v(\rho)=(v_i(\rho_{-i}),i\in [n])$ is the vector of players' payoff gradients with respect to their own strategies. If, in addition, $\langle v(\rho), \rho^*-\rho\rangle\ge L\|\rho^*-\rho\|^2, \forall \rho\in \mathcal{P}$, for some constant $L>0$, then $\rho^*$ is called an $L$-strongly stable NE. 
\end{definition}

Since the virtual game with payoff functions $V_i(\rho)=\langle \rho_i, v_i(\rho_{-i})\rangle$ is a concave game, if we use NE characterization for concave games \cite{rosen1965existence,mertikopoulos2019learning}, an occupation profile $\rho^*\in \mathcal{P}$ is a NE if and only if $\langle v(\rho^*), \rho^*-\rho  \rangle\ge 0, \forall \rho\in \mathcal{P}$. This, in view of Definition \ref{def:stable}, shows that a stable NE can be viewed as a NE that is globally attractive. To measure the convergence speed of Algorithm \ref{alg-main} to a stable NE $\rho^*$, we again consider the averaged Nikaido-Isoda function with respect to an stable equilibrium $\rho^*$, i.e.,
\begin{align*}
d(k):=\sum_{\ell=1}^{k}\frac{\eta^{\ell}}{w^k}\Psi(\rho^*,\rho^{\ell}),
\end{align*} 
where $\{\rho^{\ell}\}_{\ell=1}^{\infty}$ is the sequence of iterates generated by Algorithm \ref{alg-main}.\footnote{In fact, if $\lim_{k\to \infty}d(k)=0$, almost surely there must be a subsequence $\{\rho^{\ell_j}\}_{j=1}^{\infty}$ such that $\lim_{j\to \infty}\rho^{\ell_j}=\rho^*$, which justifies the choice of $d(k)$ for measuring the distance of the iterates to $\rho^*$.} The following theorem shows that the in the presence of stable NE, the accumulation points of the sequence $\{\rho^{\ell}\}_{\ell=1}^{\infty}$ can be considered as good estimates for the $\epsilon$-NE policies.

\begin{theorem}\label{thm:final-stable}
Assume that the virtual game admits a stable NE $\rho^*$. If each player follows Algorithm \ref{alg-main} with $d\ge \tau \ln\big(\frac{6n}{\epsilon}\sum_i|A_i||S_i|\big)$, a $K$-strongly convex regularizer, and a sequence of step-sizes $\eta^{\ell}$ that satisfy $\sum_{\ell=1}^{\infty}\eta^{\ell}=\infty$ and $\sum_{\ell=1}^{\infty}(\eta^{\ell})^2<\infty$, then almost surely $\limsup_k d(k)<\epsilon$. If, in addition, $\rho^*$ is $L$-strongly stable, almost surely there are infinitely many $k$ such that $\|\rho^*-\rho^{k}\|^2 <\frac{\epsilon}{L}$. Moreover, with probability at least $1-\alpha$, we have $d(k)<\epsilon$ for any $k$ such that $w^k\ge \frac{72n}{\alpha\epsilon K}(\sum_i\frac{|S_i||A_i|}{\delta_i})\sum_{\ell=1}^{\infty}(\eta^{\ell})^2$. 
\end{theorem}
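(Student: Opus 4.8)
The plan is to read $d(k)=\sum_{\ell=1}^{k}\frac{\eta^{\ell}}{w^k}\Psi(\rho^*,\rho^{\ell})$ as the averaged Nikaido--Isoda gap already controlled in Theorems \ref{thm:main-zero-sum} and \ref{thm:convergence-rate-social}, evaluated at the single comparator $\theta=\rho^*$, and then feed in the defining properties of a (strongly) stable equilibrium. As in the proof of Theorem \ref{thm:main-zero-sum}, the quasilinear structure $V_i(\rho)=\langle\rho_i,v_i(\rho_{-i})\rangle$ gives $\Psi(\rho^*,\rho^{\ell})=\langle v(\rho^{\ell}),\rho^*-\rho^{\ell}\rangle$, so by Definition \ref{def:stable} every summand is nonnegative and $d(k)\ge 0$. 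The one mismatch to repair is that $\rho^*\in\mathcal{P}$, whereas the iterates and the estimates \eqref{eq:phi-before-max}--\eqref{eq:final-Nikido-bound} live over the shrunk set $\mathcal{P}^{\delta}$.

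To repair it, I would first recall that under Assumption \ref{ass:ergodic} the occupation measure $u_i$ induced by the uniform policy of player $i$ has all coordinates bounded below by a fixed $\mu_i>0$, so $\mathcal{P}_i^{\delta_i}\neq\emptyset$; taking $\tilde\rho_i^*=(1-\tfrac{\delta_i}{\mu_i})\rho_i^*+\tfrac{\delta_i}{\mu_i}u_i$ yields $\tilde\rho_i^*\in\mathcal{P}_i^{\delta_i}$ with $\|\tilde\rho_i^*-\rho_i^*\|\le \tfrac{2\delta_i}{\mu_i}$. Since $\|v_i(\rho_{-i})\|\le\sqrt{|S_i||A_i|}$ (rewards lie in $[0,1]$) and the $\delta_i$ supplied by Lemma \ref{lemm:original-to-shrunk} may be taken as small as needed (any sufficiently small value still works there), this gives $|\Psi(\rho^*,\rho)-\Psi(\tilde\rho^*,\rho)|<\epsilon/3$ uniformly in $\rho\in\mathcal{P}^{\delta}$, hence $d(k)\le \max_{\theta\in\mathcal{P}^{\delta}}\sum_{\ell=1}^{k}\frac{\eta^{\ell}}{w^k}\Psi(\theta,\rho^{\ell})+\epsilon/3$ for every $k$.

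With this in hand, the first and third claims become direct substitutions. The right-hand side of \eqref{eq:final-Nikido-bound} converges almost surely to $\epsilon/3$ (all the averaged martingale terms and the squared-step-size term vanish), so $\limsup_k d(k)\le 2\epsilon/3<\epsilon$ a.s.; and Theorem \ref{thm:convergence-rate-social} makes $\max_{\theta\in\mathcal{P}^{\delta}}\sum_{\ell=1}^{k}\frac{\eta^{\ell}}{w^k}\Psi(\theta,\rho^{\ell})<\epsilon/2$ with probability at least $1-\alpha$ for every $k$ with $w^k\ge\frac{72n}{\alpha\epsilon K}(\sum_i\frac{|S_i||A_i|}{\delta_i})\sum_{\ell=1}^{\infty}(\eta^{\ell})^2$ (this is $\ge$ the per-$k$ threshold there since $\sum_{\ell=1}^{k}(\eta^{\ell})^2\le\sum_{\ell=1}^{\infty}(\eta^{\ell})^2$), whence $d(k)<\epsilon/2+\epsilon/3<\epsilon$ for all such $k$. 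For the $L$-strongly stable case I would use $\Psi(\rho^*,\rho^{\ell})=\langle v(\rho^{\ell}),\rho^*-\rho^{\ell}\rangle\ge L\|\rho^*-\rho^{\ell}\|^2\ge 0$, so $d(k)\ge L\sum_{\ell=1}^{k}\frac{\eta^{\ell}}{w^k}\|\rho^*-\rho^{\ell}\|^2$, and argue by contradiction: if $\|\rho^*-\rho^{\ell}\|^2\ge\epsilon/L$ held for all large $\ell$, then since $w^k\to\infty$ the finitely many early terms are asymptotically negligible in the weighted average, forcing $\liminf_k d(k)\ge\epsilon$ and contradicting the first claim; hence a.s. infinitely many $k$ satisfy $\|\rho^*-\rho^{k}\|^2<\epsilon/L$.

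I expect the genuine work to be the comparator-replacement step of the second paragraph---verifying that $\rho^*\in\mathcal{P}$ can be pulled into $\mathcal{P}^{\delta}$ with only $o(\epsilon)$ loss while keeping the dependence of $\delta_i$ on $\epsilon$ consistent with Lemma \ref{lemm:original-to-shrunk}. Everything downstream is a plug-in into inequalities already proved, together with the routine fact that if all but finitely many terms of a bounded weighted average (with $w^k\to\infty$) are at least some threshold, then so is its limit inferior.
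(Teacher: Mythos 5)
Your proof is correct and follows essentially the same route as the paper: substitute the comparator $\theta=\rho^*$ into the master bound \eqref{eq:final-Nikido-bound}, let the averaged martingale and squared-step-size terms vanish for the almost-sure claim, use nonnegativity of the summands $\langle v(\rho^{\ell}),\rho^*-\rho^{\ell}\rangle$ together with a weighted-average contradiction for the strongly stable subsequence, and reuse the Doob-maximal-inequality events of Theorem \ref{thm:convergence-rate-social} for the high-probability rate. The only divergence is your comparator-replacement step: the paper simply plugs $\theta=\rho^*\in\mathcal{P}$ directly into \eqref{eq:final-Nikido-bound}, whose derivation (in particular the Fenchel-coupling step with $\mathcal{X}=\mathcal{P}^{\delta}$) is stated for $\theta\in\mathcal{P}^{\delta}$, so your explicit approximation of $\rho^*$ by $\tilde\rho^*\in\mathcal{P}^{\delta}$ at an extra $\epsilon/3$ cost is a legitimate, arguably more careful, way to justify that substitution.
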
 
\begin{proof}
By choosing $\theta=\rho^*$ in relation \eqref{eq:final-Nikido-bound}, we have
\begin{align}\label{eq:stable-final-martingale}
d(k)=\sum_{\ell=1}^{k}\frac{\eta^{\ell}}{w^k}\Psi(\rho^*,\rho^{\ell})&\leq \frac{\sum_{i=1}^n|S_i||A_i|}{w^k}+\frac{\sum_{\ell=1}^k(\eta^{\ell})^2}{Kw^k}(\sum_{i=1}^{n}\frac{|A_i||S_i|}{\delta_i})+\frac{T^k}{2Kw^k}+\sqrt{n}\| \frac{S^k}{w^k}\|+\frac{Q^k}{w^k}+\frac{\epsilon}{3},
\end{align}
where we note that by the martingale convergence theorem and the choice of step-size, as $k\to \infty$, all the terms on the right-hand side of the above expression (except the term $\frac{\epsilon}{3}$) almost surely converge to zero. Therefore, almost surely we have $\limsup_{k} d(k)< \epsilon$. Moreover, using the multilinear structure of the payoff functions $V_i(\rho)=\langle \rho_i, v_i(\rho_{-i})\rangle$, we have $d(k)=\sum_{\ell=1}^{k}\frac{\eta^{\ell}}{w^k}\Psi(\rho^*,\rho^{\ell})=\sum_{\ell=1}^{k}\frac{\eta^{\ell}}{w^k}\langle v(\rho^{\ell}), \rho^*-\rho^{\ell}\rangle$. Since $\rho^*$ is a stable NE, each of the summands $\langle \rho^*-\rho^{\ell},  v(\rho^{\ell}) \rangle, \ell=1,2,\ldots$, is nonnegative, which together with $\limsup_{k} d(k)< \epsilon$ implies that almost surely there exists a subsequence $\{\rho^{\ell_j}\}_{j=1}^{\infty}$, such that $0\leq \langle v(\rho^{\ell_j}), \rho^*-\rho^{\ell_j}\rangle< \epsilon, \forall j$. In particular, if $\rho^*$ is strongly stable, almost surely $\|\rho^*-\rho^{\ell_j}\|<\frac{\epsilon}{L}, \forall j$, which completes the first part.

To prove the high probability convergence rate, we can again define the events $E_k, F_k$, and $G_k$ as before and conclude that for any $\alpha\in (0,1)$, if we take $\lambda=\frac{3\sqrt{n}}{\alpha}(\sum_i\frac{|S_i||A_i|}{\delta_i})\sum_{\ell=1}^{\infty}(\eta^{\ell})^2$, with probability at least $1-\alpha$, we have $\|S^k\|\leq \lambda$, $\|Q^k\|\leq \lambda$, and $\|T^k\|\leq \lambda$. Substituting these relations into \eqref{eq:stable-final-martingale}, we conclude that with probability at least $1-\alpha$,
\begin{align*}
d(k)&\leq \frac{\sum_{i=1}^n|S_i||A_i|}{w^k}+\frac{\sum_{\ell=1}^k(\eta^{\ell})^2}{Kw^k}(\sum_{i=1}^{n}\frac{|A_i||S_i|}{\delta_i})+\big(\frac{\sqrt{n}+1+\frac{1}{2K}}{w^k}\big)\lambda+\frac{\epsilon}{3}\cr 
&\leq \frac{12n\sum_{\ell=1}^k(\eta^{\ell})^2}{\alpha Kw^k}(\sum_{i=1}^n\frac{|S_i||A_i|}{\delta_i})+\frac{\epsilon}{3}.
\end{align*}
Thus, for every $k$ such that $w^k\ge \frac{72n}{\alpha\epsilon K}(\sum_i\frac{|S_i||A_i|}{\delta_i})\sum_{\ell=1}^{\infty}(\eta^{\ell})^2$, with probability at least $1-\alpha$ we have $d(k)\leq \epsilon$.  
\end{proof}

\section{Numerical Results}\label{sec:simulations}

In this section, we provide several numerical results to illustrate the effectiveness of the proposed Algorithm \ref{alg-main} in learning $\epsilon$-NE policies, even in the absence of social concavity or the existence of a stable equilibrium. Motivated by applications such as energy management in smart grids, we first consider the following stochastic game model.     

\subsection{Game Model} 
Consider an energy market with one utility company and $n$ players, which can both produce and consume energy. Each player generates energy using its solar panel or wind turbine and is equipped with a storage device that can store the remaining energy at the end of each day $t\in \mathbb{Z}_+$. Let $s^t_i$ denote the (quantized) amount of stored energy of player $i$ at the beginning of day $t$ with maximum storage capacity $C$. Moreover, let $g_i^t$ be a random variable denoting the amount of harvested energy for player $i$ at the end of day $t$, whose distribution is determined by the weather conditions on that day. Now if we denote the total amount of energy consumed by player $i$ during day $t$ by $a_i^t$, then the stored energy at the end of day $t$ (or the beginning of day $t+1$) is given by $s_i^{t+1}=\min\{C, g_i^t+(s_i^t-a_i^t)^+\}$, where $(s_i^t-a_i^t)^+=\max\{0, s_i^t-a_i^t\}$. In particular, player $i$ needs to purchase $(a_i^t-s_i^t)^+$ units of energy from the utility company on day $t$ to satisfy its demand on that day. On the other hand, the utility company sets the energy price as a function of total demands $\{(a_i^t-s_i^t)^+, i\in [n]\}$, which is given by $p(a^t,s^t)$. If $u_{i}(a^t_i)$ denotes the utility that player $i$ derives by consuming $a_i^t$ units of energy, then the reward of player $i$ at time $t$ is given by $r_i(a^t,s^t)=u_{i}(a^t_i)-p(a^t,s^t)\cdot(a_i^t-s_i^t)^+$. In particular, if players are at distant locations, they likely experience independent weather conditions, so their transition probability models that are governed by stochasticity of $\{g^t_i, i\in [n]\}$ will be independent. In this game, players want to adopt consumption policies to maximize their aggregate rewards despite not being able to observe others' states/actions.

\subsection{Parameters Setup} 
In order to simulate the performance of Algorithm \ref{alg-main} for the above game setup, we consider the following choice of parameters.  
\begin{itemize}
\item Each player $i$ has a storage capacity size of $C=7$, and $S_i=A_i=\{0,\ldots,7\}$.
\item Each player $i$ has its own i.i.d. random harvested energy that is uniformly distributed over $g^t_i\!\sim\! \mbox{Unif}\{0,\ldots,G_i-1\!\}$, where $G_i$ is a player-specific constant.
\item The price function is given by the players' aggregate demand, i.e., $p(a^t,s^t)=\lambda \sum_{i=1}^n (a_i-s_i)^+$, where $\lambda$ is a parameter set by the utility company. We will consider two cases of $\lambda=0$ (i.e., free energy) and $\lambda=1.5$.      
\item We set $u_i(a_i)=a_i^2\ \forall i$, and the reward functions $r_i(a,s)$ are normalized by $C^2$ (which is the maximum reward a player can obtain) to ensure $r_i\in [0, 1]$.  
\end{itemize}

\begin{figure}[t!]
\vspace{-0.5cm}
    \centering
    \hspace{-0.3cm}
    \subfloat[\centering ]{{\includegraphics[width=7cm]{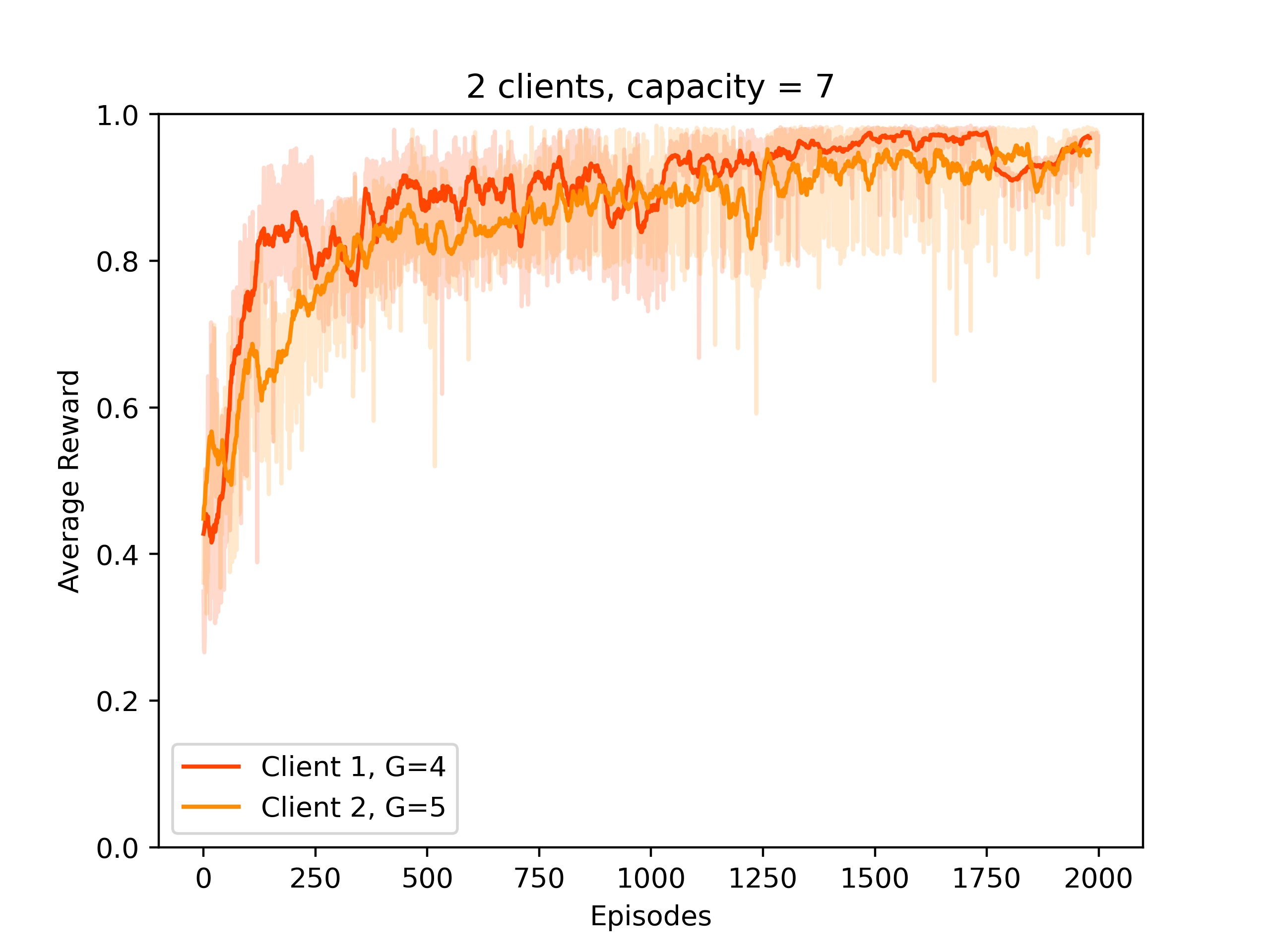} }}%
    \qquad
    \subfloat[\centering ]{{\includegraphics[width=7cm]{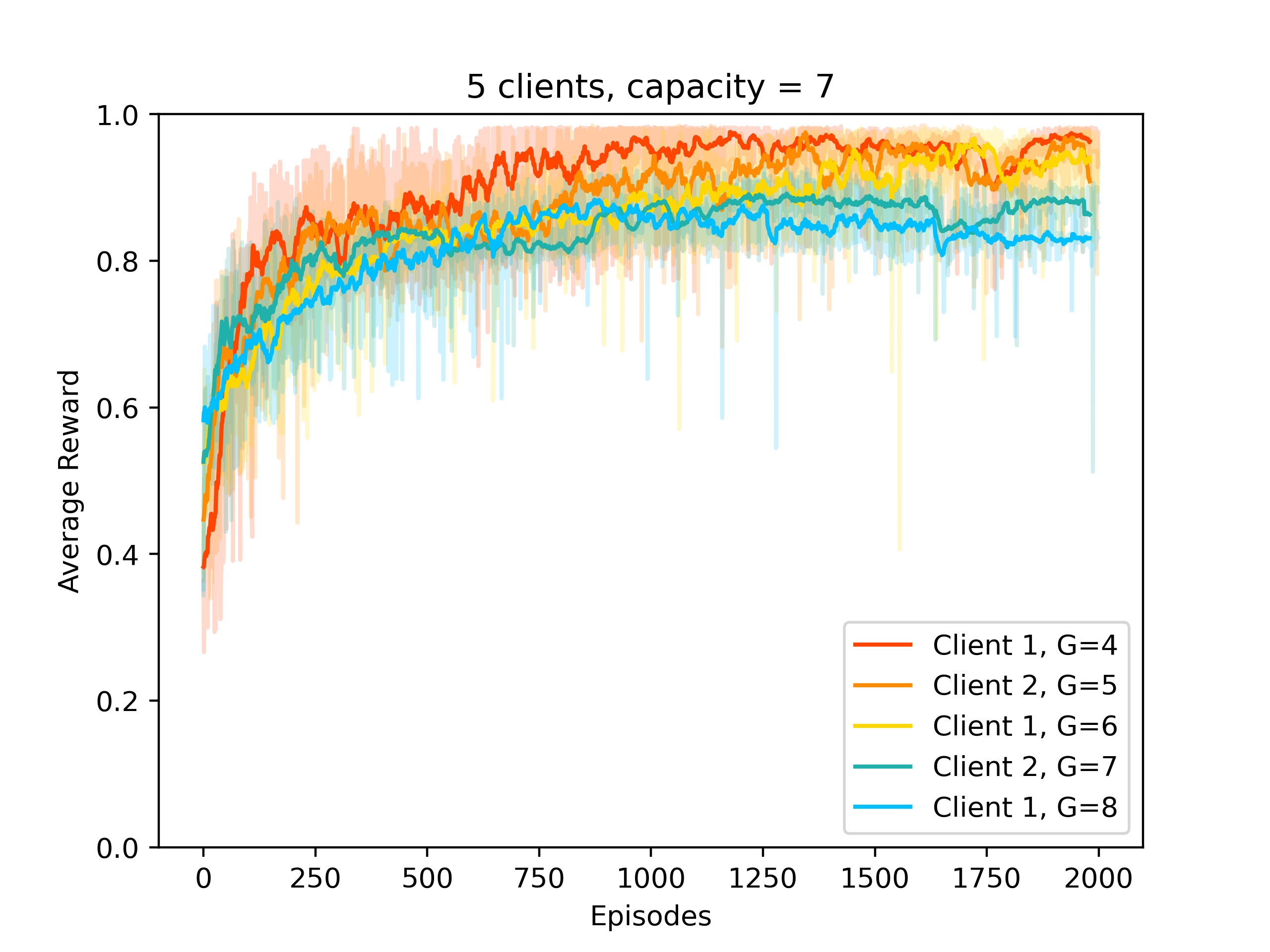} }}%
    \caption{Players' average reward trajectories versus the number of episodes ($\lambda=0$).}%
    \label{fig:lambda0}
\end{figure}

We run Algorithm \ref{alg-main} with the choice of $d = 500$, regularizers $h_i(x_i) = 1000 x_i^2\ \forall i$, shrunk parameter $\delta_i = \frac{1}{20|A_i||S_i|} = 7.8\times 10^{-4}\ \forall i$, and $\eta^{\ell} = \frac{0.02}{\ell}$. For the price parameter $\lambda=0$, i.e., when the energy is free, the players aggressively consume energy because they can always get access to free energy. Thus, it is not surprising that the equilibrium payoffs will approach 1 as depicted in Figure \ref{fig:lambda0} for both cases of $n=2$ players and $n=5$ players.

For the price parameter $\lambda=1.5$, the players' averaged reward trajectories versus the number of episodes during the run of the algorithm are illustrated in Figure \ref{fig:lambda1.5}-(a) when there are $n=2$ players, and in Figure \ref{fig:lambda1.5}-(b) when there are $n=5$ players. As can be seen, the trajectories in both cases converge fast to equilibrium trajectories with less oscillation as the number of episodes increases. It can be seen from the figures that in both cases, the equilibrium averaged rewards are obtained nearly after $k\leq 2000$ episodes, which scales polynomially in terms of the game parameters and the numbers of players.

\begin{figure}
    \centering
    \hspace{-0.3cm}
    \subfloat[\centering ]{{\includegraphics[width=7cm]{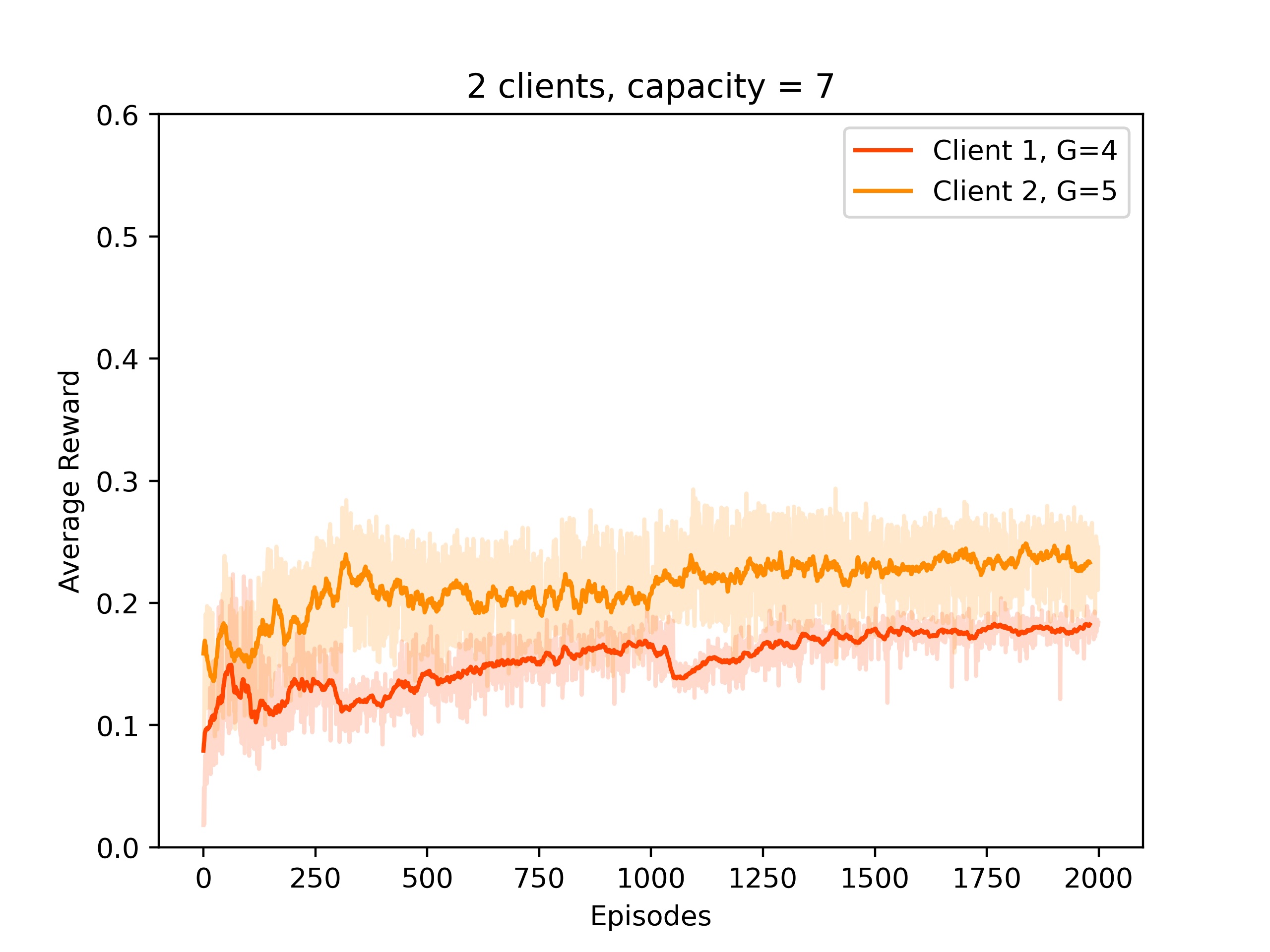} }}%
    \qquad
    \subfloat[\centering ]{{\includegraphics[width=7cm]{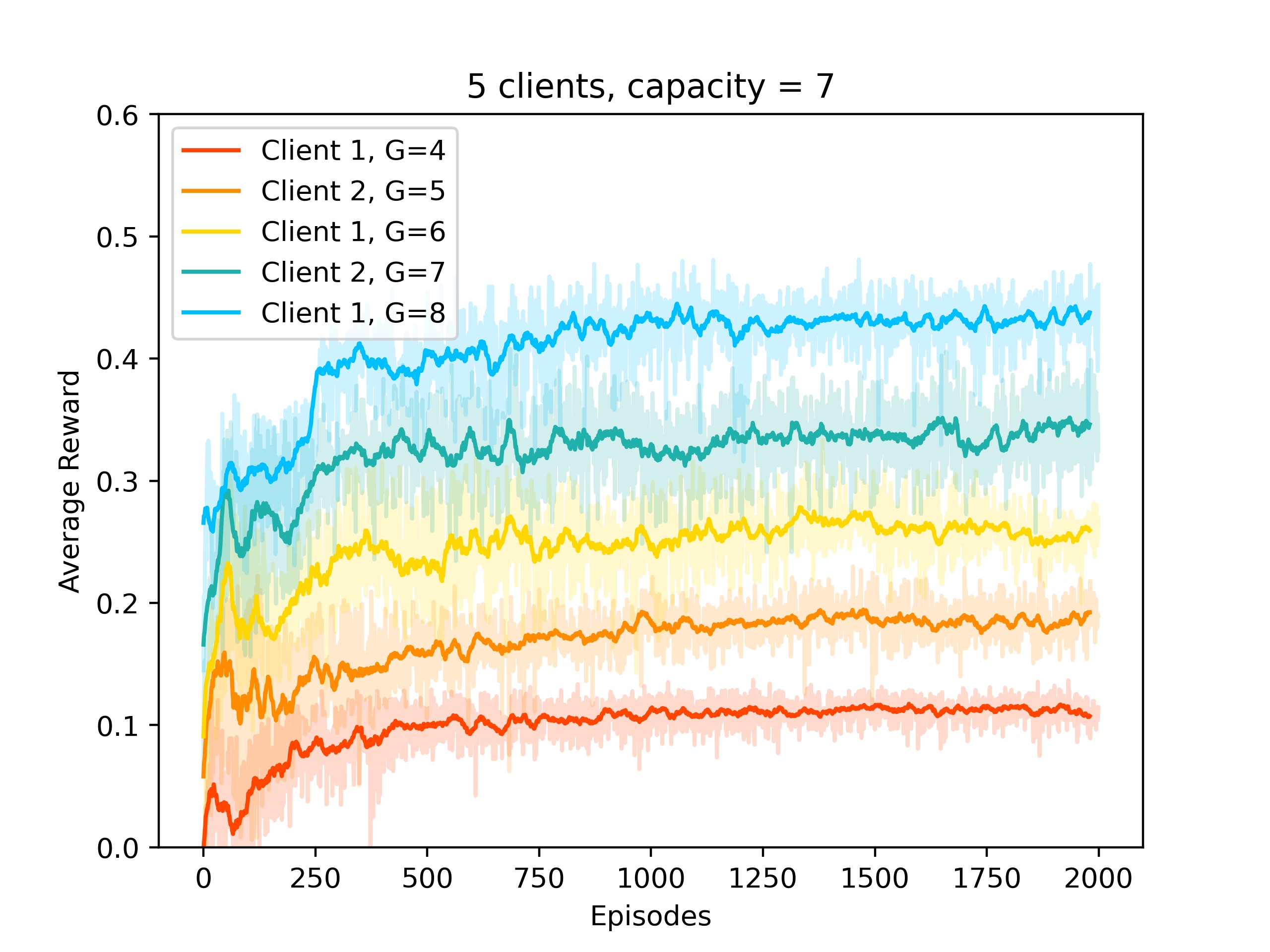} }}%
    \caption{Players' average reward trajectories versus the number of episodes ($\lambda=1.5$).}%
    \label{fig:lambda1.5}
\end{figure}

\section{Conclusions}\label{sec:conclusion}
In this work, we studied a subclass of stochastic games in which players have their own independent internal chains while they are coupled through their payoff functions. By establishing an equivalence between stationary NE in such games and NE points in a virtual continuous-action concave game, we developed scalable learning algorithms that converge to the set of $\epsilon$-NE policies in terms of the averaged Nikaido-Isoda gap function (for general rewards) and in terms of the Euclidean distance (for structured rewards). We also derived high probability or expected convergence rates that scale quadratically or logarithmically in terms of the game parameters in both cases. Beyond Markov potential games and linear-quadratic stochastic games, this work provides another interesting class of stochastic games that admit scalable learning algorithms under some assumptions.  

In general, there are strong computational lower bounds for developing scalable learning algorithms in $n$-player stochastic games. On the other hand, stochastic games provide natural paradigms for modeling competition under uncertainty. One approach to overcoming the computational barrier of learning NE in large-scale stochastic games is to rely on mean-field approximations or to study stochastic aggregative games \cite{zhang2021multi,uz2020reinforcement,meigs2019learning}. The main underlying assumption in mean-field games and aggregative games is that the individual actions of the players do not play a major role in the evolution of the state dynamics, but rather the mean/aggregate of their actions is the deriving force of the dynamics. Such an approach may simplify the learning task by allowing the players to focus on learning the mean-field trajectory of the actions/states rather than individual actions/states. However, the main difference between our work and those lines of work is that we target a more ambitious goal, i.e., learning in the space of all (exponentially many) action profiles. Interestingly, we showed that under certain assumptions, such as the independence of players' chains with structured rewards, one could avoid an exponential running time to learn the stationary NE policies, even for $n$ players, and without any mean-field approximation. In particular, our convergence rate results are not asymptotic, as is often the case for mean-field games. Therefore, studying other classes of stochastic games on the middle ground between mean-field/aggregative stochastic games and two-player stochastic games whose special structure allows scalable learning algorithms for computing NE policies is an interesting future research direction.

\bigskip
\noindent 
{\bf Acknowledgment:} I want to thank Tiancheng Qin for simulating Algorithm \ref{alg-main} and generating Figures \ref{fig:lambda0} and \ref{fig:lambda1.5}.

\bibliographystyle{IEEEtran}
\bibliography{thesisrefs}

\section*{Appendix I: Omitted Proofs and Auxiliary Lemmas}\label{sec:appx1}

\bigskip
{\bf \emph{Proof of Lemma \ref{lemm:original-to-shrunk}}:} Since each polytope $\mathcal{P}_i$ is a compact set and characterized by finitely many (continuous) linear constraints, for any $\epsilon_i:=\epsilon/\sqrt{|A_i||S_i|}$, there exist $\delta_i>0$, such that by replacing the constraint $\rho_i\ge 0$ in the description of $\mathcal{P}_i$ by $\rho_i\ge \delta_i\boldsymbol{1}$, the shrunk polytope $\mathcal{P}_i^{\delta_i}=\mathcal{P}_i\cap\{\rho_i\ge \delta_i \boldsymbol{1}\}$ has a maximum distance of at most $\epsilon_i$ from $\mathcal{P}_i$. In other words, for any point $\rho_i\in \mathcal{P}_i$, there exists $\hat{\rho}_i\in \mathcal{P}^{\delta_i}_i$ such that $\|\rho_i-\hat{\rho}_i\|\leq \epsilon_i$. Thus, by choosing $\delta=\min_i \delta_i$, for any $\rho\in \mathcal{P}$, there exists $\hat{\rho}\in \mathcal{P}^{\delta}=\prod_{i}\mathcal{P}_i^{\delta}$ such that $\|\rho_i-\hat{\rho}_i\|\leq \epsilon_i, \forall i$. 

Now, let $\rho^*$ be an $\epsilon$-NE for the virtual game with shrunk action polytope $\mathcal{P}^{\delta}$, and note that $\rho^*$ also belongs to $\mathcal{P}$. Consider any arbitrary $\rho\in \mathcal{P}$, and let $\hat{\rho}$ be the corresponding closest point to $\rho$ as described above.   Then, for any player $i$, we have
\begin{align}\nonumber
V_i(\rho_i,\rho^*_{-i})-V_i(\rho^*)&=\langle \rho_i-\rho^*_i, v_i(\rho^*)\rangle= \langle \rho_i-\hat{\rho}_i, v_i(\rho^*)\rangle +\langle \hat{\rho}_i-\rho^*_i, v_i(\rho^*)\rangle\cr 
&=\langle \rho_i-\hat{\rho}_i, v_i(\rho^*)\rangle+ \big(V_i(\hat{\rho}_i,\rho^*_{-i})-V_i(\rho^*)\big)\cr 
&\leq \|\rho_i-\hat{\rho}_i\|\|v_i(\rho^*)\|+\epsilon\leq \sqrt{|A_i||S_i|}\epsilon_i+\epsilon=2\epsilon, 
\end{align}
where the first inequality uses the fact that $\rho^*$ is an $\epsilon$-NE with respect to the shrunk polytope $\mathcal{P}^{\delta}$. Since $\rho\in \mathcal{P}$ was chosen arbitrarily, the above inequality shows that $\rho^*$ is a $2\epsilon$-NE with respect to the original polytope $\mathcal{P}$.  \hfill{$\blacksquare$}

\bigskip
{\bf \emph{Proof of Lemma \ref{lemm:almost-unbiased}}:} Given $\mathcal{F}^{k-1}$ and an arbitrary state-action $(s_i,a_i)\in S_i\times A_i$, let $\bar{t}$ be the first (random) time that the state $s_i$ is visited during the sampling interval $[\tau^k+d, \tau_i^k]$, i.e., $\bar{t}\in [\tau^k+d, \tau_i^k]$ is the first time for which $s_i^{\bar{t}}=s_i$. Since each state is sampled once, the expected value of the $(s_i,a_i)$-th coordinate of $R_i^k$ equals
\begin{align}\label{eq:indicator}
\mathbb{E}[R_i^k(s_i,a_i)|\mathcal{F}^{k-1}]=\mathbb{E}\big[\frac{r_i\big(s_i,a_i;s_{-i}^{\bar{t}},a_{-i}^{\bar{t}}\big)}{\pi_i^k(a_i|s_i)}\mathbb{I}_{\{a_i^{\bar{t}}=a_i\}}|\mathcal{F}^{k-1}\big]=\mathbb{E}\big[r_i\big(s_i,a_i;s_{-i}^{\bar{t}},a_{-i}^{\bar{t}}\big)|\mathcal{F}^{k-1}\big],
\end{align}
where $\mathbb{I}_{\{\cdot\}}$ is the indicator function, and the second equality holds because at time $\bar{t}$ the played action $a^{\bar{t}}_i$ equals $a_i$ with probability $\pi^k_i(a_i|s_i)$. Given $\mathcal{F}^{k-1}$, let $v\in \Delta(S_{-i})$ be the stationary distribution induced over states $s_{-i}$ by using policies $\pi^k_{-i}$. We can write
\begin{align}\nonumber
\mathbb{E}[R_i^k(s_i,&a_i)|\mathcal{F}^{k-1}]=\mathbb{E}\big[r_i\big(s_i,a_i;s_{-i}^{\bar{t}},a_{-i}^{\bar{t}}\big)|\mathcal{F}^{k-1}\big]\cr
&=\sum_{s_{-i},a_{-i}}\mathbb{P}(s_{-i}^{\bar{t}}=s_{-i}, a_{-i}^{\bar{t}}=a_{-i}|\mathcal{F}^{k-1})r_i(s_i,a_i;s_{-i},a_{-i})\cr 
&=\sum_{s_{-i},a_{-i}}\mathbb{P}(s_{-i}^{\bar{t}}=s_{-i}|\mathcal{F}^{k-1})\big(\prod_{j\neq i}\pi^k_j(a_j|s_j)\big) r_i(a,s)\cr
&=\sum_{s_{-i},a_{-i}}\big(\mathbb{P}(s_{-i}^{\bar{t}}=s_{-i}|\mathcal{F}^{k-1})-v(s_{-i})\big)\big(\prod_{j\neq i}\pi^k_j(a_j|s_j)\big) r_i(a,s)+\sum_{a_{-i},s_{-i}}v(s_{-i})\big(\prod_{j\neq i}\pi^k_j(a_j|s_j)\big) r_i(a,s)\cr
&=\sum_{s_{-i}}\big(\mathbb{P}(s_{-i}^{\bar{t}}=s_{-i}|\mathcal{F}^{k-1})-v(s_{-i})\big) \sum_{a_{-i}}\big(\prod_{j\neq i}\pi^k_j(a_j|s_j)\big) r_i(a,s)+\sum_{a_{-i},s_{-i}}\big(\prod_{j\neq i}\rho^k_j(s_j,a_j)\big) r_i(a,s)\cr 
&=\sum_{s_{-i}}\big(\mathbb{P}(s_{-i}^{\bar{t}}=s_{-i}|\mathcal{F}^{k-1})-v(s_{-i})\big) \sum_{a_{-i}}\big(\prod_{j\neq i}\pi^k_j(a_j|s_j)\big) r_i(a,s)+\nabla_{\rho_i(s_i,a_i)}V_i(\rho^k).
\end{align}
Given $\mathcal{F}^{k-1}$, let $v_{t}(\cdot)=\mathbb{P}(s_{-i}^t=\cdot|\mathcal{F}^{k-1})$ be the probability distribution of being at different states $s_{-i}$ at time $t$ should players $j\neq i$ follow policies $\pi^k_{-i}$. Since $r_i(a,s)\in [0, 1]$, using triangle inequality, we can write
\begin{align}\nonumber
\big|\mathbb{E}[R_i^k(s_i,a_i)|\mathcal{F}^{k-1}]-\nabla_{\rho_i(s_i,a_i)}V_i(\rho^k)\big|&\leq \sum_{s_{-i}}\big|\mathbb{P}(s_{-i}^{\bar{t}}=s_{-i}|\mathcal{F}^{k-1})-v(s_{-i})\big| \sum_{a_{-i}}\big(\prod_{j\neq i}\pi^k_j(a_j|s_j)\big)\cr
&=\sum_{s_{-i}}\big|\mathbb{P}(s_{-i}^{\bar{t}}=s_{-i}|\mathcal{F}^{k-1})-v(s_{-i})\big|\prod_{j\neq i}\big(\sum_{a_{j}}\pi^k_j(a_j|s_j)\big)\cr 
&=\sum_{s_{-i}}\big|\mathbb{P}(s_{-i}^{\bar{t}}=s_{-i}|\mathcal{F}^{k-1})-v(s_{-i})\big|\cr
&=\sum_{s_{-i}}\big|\sum_{t\ge \tau^k+d}\mathbb{P}(s_{-i}^t=s_{-i}|\mathcal{F}^{k-1})\mathbb{P}(\bar{t}=t|\mathcal{F}^{k-1})-v(s_{-i})\big|\cr
&=\sum_{s_{-i}}\big|\sum_{t\ge \tau^k+d}\big(\mathbb{P}(s_{-i}^t=s_{-i}|\mathcal{F}^{k-1})-v(s_{-i})\big)\mathbb{P}(\bar{t}=t|\mathcal{F}^{k-1})\big|\cr
&\leq\sum_{t\ge \tau^k+d}\sum_{s_{-i}}\big|\mathbb{P}(s_{-i}^t=s_{-i}|\mathcal{F}^{k-1})-v(s_{-i})\big|\mathbb{P}(\bar{t}=t|\mathcal{F}^{k-1})\cr
&= \mathbb{E}[\|v_{\bar{t}}-v\|_1|\mathcal{F}^{k-1}]\leq \mathbb{E}[e^{-\frac{(\bar{t}-\tau^k)}{\tau}}|\mathcal{F}^{k-1}],
\end{align}
where the third equality holds by the independency Assumption \ref{ass:indep},\footnote{Conditioned on $\mathcal{F}^{k-1}$, the event $\{\bar{t}=t|\mathcal{F}^{k-1}\}$ is determined by player $i$'s internal state transition matrix and a given policy $\pi^k_i$, which is independent from the event $\{s^t_{-i}=s_i|\mathcal{F}^{k-1}\}$, as the latter is determined by other players' state transition matrices and their given fixed policies $\pi^k_{-i}$.} and the last inequality holds by the ergodicity Assumption \ref{ass:ergodic}. Since $\bar{t}-\tau^k\ge d$ with probability 1, we have $\mathbb{E}[e^{-\frac{(\bar{t}-\tau^k)}{\tau}}|\mathcal{F}^{k-1}]\leq e^{-\frac{d}{\tau}}$. \hfill{$\blacksquare$}


\bigskip
{\bf \emph{Proof of Theorem \ref{thm:positive-converge-general}}:} Suppose by contradiction that $\rho^*$ is not an $\epsilon$-NE for the virtual game, and let $v_i(\rho)=\nabla_{\rho_i}V_i(\rho)$. Since the virtual game is a concave game, using the NE characterization for concave games, there exists a player $i$ and deviation strategy $\hat{\rho}_i\in \mathcal{P}^{\delta_i}_i$ such that $\langle v_i(\rho^*), \hat{\rho}_i-\rho^*_i\rangle>\epsilon$. Thus, if we let $\alpha=\frac{\epsilon}{3\sqrt{|A_i||S_i|}}$, for any $v'_i$ and $\rho'_i\in \mathcal{P}^{\delta_i}_i$ such that $\|v'_i-v_i(\rho^*)\|<\alpha$ and $\|\rho'_i-\rho^*_i\|<\alpha$, we have $\langle v'_i, \hat{\rho}_i-\rho'_i\rangle>\frac{\epsilon}{3}$. To show that, we note that each coordinate of $v_i(\rho^*)$ lies in $[-1, 1]$ because 
\begin{align}\nonumber
\big|v_i(\rho^*)_{(s_i,a_i)}\big|&=\big|\sum_{s_{-i},a_{-i}}\prod_{j\neq i}\rho^*_j(s_j,a_j)r_i(s,a)\big|\leq \sum_{s_{-i},a_{-i}}\prod_{j\neq i}\rho^*_j(s_j,a_j)=1.
\end{align}
Thus $\|v_i(\rho^*)\|\leq \sqrt{|A_i||S_i|}$, and we can write  
\begin{align}\nonumber
\langle v'_i, \hat{\rho}_i-\rho'_i\rangle&=\langle v'_i-v_i(\rho^*), \hat{\rho}_i-\rho'_i\rangle+\langle v_i(\rho^*), \hat{\rho}_i-\rho^*_i\rangle+\langle v_i(\rho^*), \rho^*_i-\rho'_i\rangle\cr 
&>\langle v'_i-v_i(\rho^*), \hat{\rho}_i-\rho'_i\rangle+\epsilon+\langle v_i(\rho^*), \rho^*_i-\rho'_i\rangle\cr 
&>-\alpha \|\hat{\rho}_i-\rho'_i\|+\epsilon-\|v_i(\rho^*)\|\alpha\cr
&\ge \epsilon-(\sqrt{2}+\sqrt{|A_i||S_i|})\alpha\ge \frac{\epsilon}{3},
\end{align} 
where the second inequality uses the Cauchy-Schwartz inequality, and the last inequality holds by the choice of parameter $\alpha$.  

Let us define the (coordinatewise) martingale difference sequence $X^{\ell}_i=\eta^{\ell}_i(R^{\ell}_i-\mathbb{E}[R^{\ell}_i|\mathcal{F}^{\ell-1}]), \ell=1,2,\ldots$, and the corresponding zero-mean martingale $S^k_i=\sum_{\ell=1}^kX^{\ell}_i$. Consider the nondecreasing sequence of positive numbers $w^k_i=\sum_{\ell=1}^{k}\eta^{\ell}_i$. Then, we can write
\begin{align}\nonumber
\sum_{\ell=1}^{\infty}\frac{\mathbb{E}[\|X^{\ell}_i\|^2|\mathcal{F}^{\ell-1}]}{(w^{\ell}_i)^2}&=\sum_{\ell=1}^{\infty}\big(\frac{\eta^{\ell}_i}{w^{\ell}_i}\big)^2\mathbb{E}\big[\big\|R^{\ell}_i-\mathbb{E}[R^{\ell}_i|\mathcal{F}^{\ell-1}]\big\|^2\big|\mathcal{F}^{\ell-1}\big]\cr 
&=\sum_{\ell=1}^{\infty}\big(\frac{\eta^{\ell}_i}{w^{\ell}_i}\big)^2\big(\mathbb{E}[\|R^{\ell}_i\|^2|\mathcal{F}^{\ell-1}]-\|\mathbb{E}[R^{\ell}_i|\mathcal{F}^{{\ell}-1}]\|^2\big)\cr
&\leq \sum_{\ell=1}^{\infty}\big(\frac{\eta^{\ell}_i}{w^{\ell}_i}\big)^2\big(\mathbb{E}[\|R^{\ell}_i\|^2|\mathcal{F}^{\ell-1}]\big)\cr 
&=\sum_{\ell=1}^{\infty}\big(\frac{\eta^{\ell}_i}{w^{\ell}_i}\big)^2\sum_{s_i,a_i}\mathbb{E}\big[\frac{r^2_i(s_i,a_i;s^{\bar{t}}_{-i},a^{\bar{t}}_{-i})}{\pi_i^{\ell}(a_i|s_i)}|\mathcal{F}^{\ell-1}\big]\cr 
&\leq \sum_{\ell=1}^{\infty}\big(\frac{\eta^{\ell}_i}{w^{\ell}_i}\big)^2\frac{|A_i||S_i|}{\delta_i}, 
\end{align}  
where the last equality is obtained using an argument similar to that used to derive \eqref{eq:indicator}, and the final inequality holds because $r^2_i(\cdot)\leq 1$ and $\pi^{\ell}_i(a_i|s_i)\ge \rho_i^{\ell}(s_i,a_i)\ge \delta_i, \forall i,s_i,a_i$. Since by the step-size assumption $\sum_{\ell=1}^{\infty}\big(\frac{\eta^{\ell}_i}{w^{\ell}_i}\big)^2<\infty$ and $\lim_{k\to \infty}w_i^k=\infty$, using the martingale convergence theorem, almost surely we have 
\begin{align}\label{eq:martingle-conv}
\lim_{k\to \infty}\frac{S^k_i}{w^k_i}=\lim_{k\to \infty} \sum_{\ell=1}^k\frac{\eta^{\ell}_i}{w^k_i}(R^{\ell}_i-\mathbb{E}[R^{\ell}_i|\mathcal{F}^{\ell-1}])=0.
\end{align}
Let $\Omega$ be the event that $\rho^k$ converges to $\rho^*$. Conditioned on $\Omega$, almost surely we get 
\begin{align}\label{eq:Y}
\|\frac{Y^{k+1}_i}{w_i^k}-v_i(\rho^*)\|&=\|\sum_{\ell=1}^k\frac{\eta_i^{\ell}}{w_i^k}R^{\ell}_i-v_i(\rho^*)\|\cr 
&=\|\sum_{\ell=1}^k\frac{\eta_i^{\ell}}{w_i^k}\big(\mathbb{E}[R^{\ell}_i|\mathcal{F}^{\ell-1}]-v_i(\rho^{\ell})+v_i(\rho^{\ell})-v_i(\rho^*)\big)+\frac{S^k_i}{w_i^k}\|\cr 
&\leq \sum_{\ell=1}^k\frac{\eta_i^{\ell}}{w_i^k}\|\mathbb{E}[R^{\ell}_i|\mathcal{F}^{\ell-1}]-v_i(\rho^{\ell})\|+\sum_{\ell=1}^k\frac{\eta_i^{\ell}}{w_i^k}\|v_i(\rho^{\ell})-v_i(\rho^*)\|+\|\frac{S^k_i}{w_i^k}\| \cr 
& \leq \sum_{\ell=1}^k\frac{\eta_i^{\ell}}{w_i^k}e^{-\frac{d}{\tau}}\|\boldsymbol{1}\|+\sum_{\ell=1}^k\frac{\eta_i^{\ell}}{w_i^k}\|v_i(\rho^{\ell})-v_i(\rho^*)\|+\|\frac{S^k_i}{w_i^k}\|\cr
&\stackrel{k\to \infty}{=} e^{-\frac{d}{\tau}}\sqrt{|A_i||S_i|},
\end{align}
where the last equality holds by Lemma \ref{lemm:almost-unbiased}, by relation \eqref{eq:martingle-conv}, and because $\lim_{k\to \infty}v_i(\rho^k)=v_i(\rho^*)$ due to continuity of $v_i(\cdot)$. Using \eqref{eq:Y} and since $d\ge\tau\ln (\frac{3\max_i|A_i||S_i|}{\epsilon})$, we obtain $\mathbb{P}\big(\limsup_k\|\frac{Y^{k+1}_i}{w_i^k}-v_i(\rho^*)\|<\alpha\big| \Omega\big)=1$, which, together with $\mathbb{P}\big(\lim_k\|\rho^{k+1}_i-\rho^*_i\|<\alpha\big| \Omega\big)=1$, implies that almost surely
\begin{align}\label{eq:epsilon/2}
\limsup_k\langle \frac{Y^{k+1}_i}{w_i^k}, \hat{\rho}_i-\rho^{k+1}_i\rangle>\frac{\epsilon}{3}.
\end{align}
Moreover, because $\rho^{k+1}_i=\argmax_{\rho_i\in \mathcal{P}^{\delta_i}_i}\{\langle \rho_i, Y^{k+1}_i\rangle -h_i(\rho_i)\}$, we have $\langle \rho^{k+1}_i, Y^{k+1}_i\rangle -h_i(\rho^{k+1}_i)\ge \langle \hat{\rho}_i, Y^{k+1}_i\rangle -h_i(\hat{\rho}_i)$. Therefore, for all sufficiently large $k$, almost surely we have
\begin{align}\nonumber
h_i(\hat{\rho}_i)-h_i(\rho^{k+1}_i)\ge \langle \hat{\rho}_i-\rho^{k+1}_i, Y^{k+1}_i\rangle=w_i^{k} \langle \hat{\rho}_i-\rho^{k+1}_i, \frac{Y^{k+1}_i}{w_i^{k}}\rangle=w_i^{k} \frac{\epsilon}{3}.
\end{align}
Since $\mathbb{P}(\Omega)>0$ by the assumption and within $\Omega$ we have $h_i(\rho^{k+1}_i)\to h_i(\rho^*_i)$, with positive probability we must have $h_i(\hat{\rho}_i)\ge h_i(\rho^*_i)+w_i^{k} \frac{\epsilon}{3}=\infty$ as $k\to \infty$. This contradiction shows that $\rho^*$ must be an $\epsilon$-NE. \hfill{$\blacksquare$}

\bigskip
\subsection*{{\bf {\large Auxiliary Lemmas}}} 

\bigskip
\begin{lemma}\label{lemm:indep}
Let Assumption \ref{ass:indep} hold. Then, $\mathbb{P}(s^t=s)=\prod_{j=1}^n\mathbb{P}(s_j^t=s_j), \forall t,s$.  
\end{lemma}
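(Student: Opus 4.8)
The plan is to prove the identity by induction on $t$. For the base case $t=0$, the initial state $s^0$ is deterministic, so
\begin{align}\nonumber
\mathbb{P}(s^0=s)=\prod_{j=1}^n\mathbb{I}_{\{s_j=s_j^0\}}=\prod_{j=1}^n\mathbb{P}(s_j^0=s_j),
\end{align}
and the claim holds trivially; if instead one allows a random initial state, one would additionally assume its law is a product measure on $\prod_j S_j$, and the rest of the argument goes through unchanged.

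For the inductive step I would assume $\mathbb{P}(s^t=s)=\prod_{j=1}^n\mathbb{P}(s_j^t=s_j)$ for every $s$, and then expand $\mathbb{P}(s^{t+1}=s')$ by conditioning on the joint state--action pair at time $t$:
\begin{align}\nonumber
\mathbb{P}(s^{t+1}=s')=\sum_{s,a}\mathbb{P}(s^t=s)\,\mathbb{P}(a^t=a\mid s^t=s)\,P(s'\mid s,a).
\end{align}
The key point is that every factor in the summand decomposes across players: $\mathbb{P}(a^t=a\mid s^t=s)=\prod_{j}\pi_j(a_j\mid s_j)$ because the players use stationary policies that depend only on their own states and randomize independently (this is the same observation already used just before Lemma \ref{lemm:occ-to-policy}), and $P(s'\mid s,a)=\prod_{j}P_j(s'_j\mid s_j,a_j)$ by Assumption \ref{ass:indep}. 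Substituting these together with the induction hypothesis, the sum over $(s,a)=(s_1,\dots,s_n,a_1,\dots,a_n)$ separates into $n$ independent sums, one per player, yielding
\begin{align}\nonumber
\mathbb{P}(s^{t+1}=s')=\prod_{j=1}^n\Big(\sum_{s_j,a_j}\mathbb{P}(s_j^t=s_j)\,\pi_j(a_j\mid s_j)\,P_j(s'_j\mid s_j,a_j)\Big)=\prod_{j=1}^n\mathbb{P}(s_j^{t+1}=s'_j),
\end{align}
where the last equality is just the one-step evolution of player $j$'s marginal chain $P^{\pi_j}$ from \eqref{eq:P-pi}. This closes the induction.

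This proof is essentially bookkeeping, and I do not expect a genuine obstacle. The only point that warrants care is the justification of $\mathbb{P}(a^t=a\mid s^t=s)=\prod_j\pi_j(a_j\mid s_j)$: one first notes that, conditioned on the full joint history, the action distribution equals $\prod_j\pi_j(a_j\mid s_j^t)$ since under stationary policies each player's action depends on the history only through its own current state, and then takes the conditional expectation given $s^t=s$ to remove the residual history-dependence. Everything else follows immediately from the product structure guaranteed by Assumption \ref{ass:indep}.
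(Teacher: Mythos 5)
Your proof is correct and follows essentially the same route as the paper's: an induction on $t$ whose inductive step conditions on the state--action pair at the previous time, factors the transition kernel via Assumption \ref{ass:indep} and the action distribution via the stationary, own-state-dependent policies, and then separates the sum over $(s,a)$ into a product of per-player sums. Your extra remarks on the base case (random product-form initial law) and on justifying $\mathbb{P}(a^t=a\mid s^t=s)=\prod_j\pi_j(a_j\mid s_j)$ are sound but do not change the argument.
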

\begin{proof}
We use an induction on $t$. For the initial step $t=0$, the statement trivially holds. We can write 
\begin{align}\nonumber
&\mathbb{P}(s^t=s)=\sum_{s',a'}\mathbb{P}(s^t=s|s^{t-1}=s',a^{t-1}=a')\mathbb{P}(a^{t-1}=a'|s^{t-1}=s')\mathbb{P}(s^{t-1}=s')\cr 
&=\sum_{s',a'}\prod_j \Big(P_j(s_j^t=s_j|s_j^{t-1}=s_j',a_j^{t-1}=a_j')\pi_j(a_j^{t-1}=a_j'|s_j^{t-1}=s_j')\Big)\mathbb{P}(s^{t-1}=s')\cr 
&=\sum_{s',a'}\prod_j \Big(P_j(s_j^t=s_j|s_j^{t-1}=s_j',a_j^{t-1}=a_j')\pi_j(a_j^{t-1}=a_j'|s_j^{t-1}=s_j')\mathbb{P}(s_j^{t-1}=s_j')\Big)\cr 
&=\sum_{s',a'}\prod_j \Big(P_j(s_j^t=s_j|s_j^{t-1}=s_j',a_j^{t-1}=a_j')\mathbb{P}(s_j^{t-1}=s_j',a_j^{t-1}=a_j')\Big)\cr 
&=\prod_j \Big(\sum_{s_j',a_j'}P_j(s_j^t=s_j|s_j^{t-1}=s_j',a_j^{t-1}=a_j')\mathbb{P}(s_j^{t-1}=s_j',a_j^{t-1}=a_j')\Big)=\prod_j \mathbb{P}(s_j^t=s_j), 
\end{align}
where the second and third equalities use Assumption \ref{ass:indep} and the hypothesis.  
\end{proof}

\begin{lemma}[\cite{mertikopoulos2019learning}, Proposition 4.3]\label{lemm:fenchel}
Let $h(x)$ be a $K$-strongly convex function over a convex compact set $\mathcal{X}$, and $h^*$ be its convex conjugate defined by $h^*(y)=\max_{x\in \mathcal{X}}\{\langle x, y\rangle-h(x)\}$. Let $F(p,y)=h(p)+h^*(y)-\langle y, p\rangle$ be the Fenchel coupling function. Then, $F(p,y)\ge 0, \forall p\in \mathcal{X}, \forall y$, and we have
\begin{align*}
F(p,y')\leq F(p,y)+\langle y'-y, \Pi(y)-p\rangle+\frac{1}{2K}\|y'-y\|^2,\  \forall p\in \mathcal{X}, \forall y,y',
\end{align*}
where $\Pi(y)=\argmax_{x\in \mathcal{X}}\{\langle x, y\rangle-h(x)\}$.
\end{lemma}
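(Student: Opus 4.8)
The plan is to exploit the classical duality between strong convexity of $h$ and smoothness of its conjugate $h^*$. First I would dispatch the nonnegativity claim: for any $p\in\mathcal{X}$ and any $y$, the definition $h^*(y)=\max_{x\in\mathcal{X}}\{\langle x,y\rangle-h(x)\}$ immediately gives $h^*(y)\ge \langle p,y\rangle-h(p)$, and rearranging this yields $F(p,y)=h(p)+h^*(y)-\langle y,p\rangle\ge 0$.

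Next I would record two standard facts about the conjugate of a function that is $K$-strongly convex on the compact convex set $\mathcal{X}$. (i) For each $y$, the maximizer $\Pi(y)=\argmax_{x\in\mathcal{X}}\{\langle x,y\rangle-h(x)\}$ is unique, since $x\mapsto \langle x,y\rangle-h(x)$ is strictly concave; moreover $h^*$ is differentiable with $\nabla h^*(y)=\Pi(y)$, which is Danskin's/the envelope theorem applied to $h^*(y)=-\min_{x\in\mathcal{X}}\{h(x)-\langle x,y\rangle\}$. (ii) $h^*$ is $\tfrac1K$-smooth, i.e. $\|\nabla h^*(y')-\nabla h^*(y)\|\le \tfrac1K\|y'-y\|$ for all $y,y'$; this is the usual conjugate correspondence between $K$-strong convexity of $h$ and $\tfrac1K$-Lipschitz continuity of $\nabla h^*$, which one can prove directly from the first-order optimality conditions for the two maximization problems defining $\Pi(y)$ and $\Pi(y')$ together with $K$-strong monotonicity of $\partial h$. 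Combining (ii) with the descent lemma for smooth functions gives the quadratic bound $h^*(y')\le h^*(y)+\langle \nabla h^*(y), y'-y\rangle+\tfrac{1}{2K}\|y'-y\|^2$.

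The main inequality then follows by a one-line substitution: writing $F(p,y')=h(p)+h^*(y')-\langle y',p\rangle$, replacing $h^*(y')$ by the quadratic upper bound above, and then adding and subtracting $\langle y,p\rangle$ to reconstruct $F(p,y)=h(p)+h^*(y)-\langle y,p\rangle$, we are left with $F(p,y')\le F(p,y)+\langle y'-y,\nabla h^*(y)\rangle-\langle y'-y,p\rangle+\tfrac{1}{2K}\|y'-y\|^2$, and collecting the two linear terms and using $\nabla h^*(y)=\Pi(y)$ gives exactly $F(p,y')\le F(p,y)+\langle y'-y,\Pi(y)-p\rangle+\tfrac{1}{2K}\|y'-y\|^2$.

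The only genuinely delicate point — and it is a matter of invoking the correct classical results rather than a real difficulty — is justifying parts (i) and (ii) when $\mathcal{X}$ is a constrained set (here a polytope) and $h$ is only assumed $K$-strongly convex on $\mathcal{X}$, so that one works with the restricted conjugate $h^*(y)=-\min_{x\in\mathcal{X}}\{h(x)-\langle x,y\rangle\}$ and subdifferentials rather than gradients of $h$ itself. The cleanest route is the envelope-theorem argument sketched above, noting that the inner minimizer is unique and varies continuously in $y$; alternatively, since the statement is quoted verbatim from \cite{mertikopoulos2019learning}, one may simply invoke Proposition~4.3 there.
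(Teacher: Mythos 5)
Your argument is correct: the nonnegativity of $F$ follows directly from the definition of $h^*$ as a supremum, and the main inequality is exactly the standard conjugate-duality proof (envelope theorem giving $\nabla h^*(y)=\Pi(y)$, $\tfrac1K$-smoothness of $h^*$ from $K$-strong convexity of $h$, then the descent lemma and the substitution you describe). The paper itself offers no proof and simply cites Proposition 4.3 of \cite{mertikopoulos2019learning}, and your reconstruction matches the argument given there, including the correct handling of the restricted conjugate over the compact set $\mathcal{X}$.
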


\bigskip
\begin{lemma}[\cite{wang2017primal}, Lemma 4]\label{lemm-KL}
Let $\Delta$ be an $m$-dimensional probability simplex and $X$ be a compact and convex subset of $\Delta$. Moreover, let $y=(y_1,\ldots,y_m)$ be an arbitrary nonpositive vector and assume $x^k\in X$. Consider the following two-step MD update rule:
\begin{align}\nonumber
&x^{k+\frac{1}{2}}=\argmax_{x\in \Delta}\{\langle y, x \rangle-D_{KL}(x,x^{k})\},\cr 
&x^{k+1}=\argmin_{x\in X} D_{KL}(x,x^{k+\frac{1}{2}}).
\end{align}
Then, for any $z\in X$, we have $D_{KL}(z,x^{k+1})-D_{KL}(z,x^k)\leq \langle y, x^k-z \rangle+\frac{1}{2}\langle y^2, x^k \rangle$, where $y^2=(y^2_1,\ldots,y_m^2)$.
\end{lemma}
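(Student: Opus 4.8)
The plan is to carry out the standard one-step lazy-mirror-descent estimate, specialized to the negative-entropy mirror map $\phi(x)=\sum_r x_r\log x_r$, whose induced Bregman divergence is exactly $D_{KL}$. Three ingredients are needed: (i) the closed form of the entropic proximal step onto the full simplex $\Delta$; (ii) non-expansiveness of the $D_{KL}$-projection onto $X$ with respect to $D_{KL}(z,\cdot)$ for $z\in X$, a special case of the generalized Pythagorean theorem for Bregman divergences; and (iii) a second-order bound on the log-partition function that crucially uses the nonpositivity of $y$.

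First I would solve the inner problem. Writing out the first-order optimality conditions for $\max_{x\in\Delta}\{\langle y,x\rangle-D_{KL}(x,x^k)\}$, with a Lagrange multiplier for $\sum_r x_r=1$, gives the exponential-weights update $x^{k+\frac12}_r=x^k_r e^{y_r}/Z$, where $Z:=\sum_r x^k_r e^{y_r}$; in particular $x^{k+\frac12}$ is a strictly positive point of $\Delta$. Next, for $x^{k+1}=\argmin_{x\in X}D_{KL}(x,x^{k+\frac12})$, the optimality condition together with the three-point identity for Bregman divergences yields, for every $z\in X$,
\[
D_{KL}(z,x^{k+\frac12})\ \ge\ D_{KL}(z,x^{k+1})+D_{KL}(x^{k+1},x^{k+\frac12})\ \ge\ D_{KL}(z,x^{k+1}),
\]
so it suffices to bound $D_{KL}(z,x^{k+\frac12})-D_{KL}(z,x^k)$.

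Plugging the closed form into the definition of $D_{KL}$, the $z_r\log z_r$ terms cancel and $x^k_r/x^{k+\frac12}_r=Z\,e^{-y_r}$, so
\[
D_{KL}(z,x^{k+\frac12})-D_{KL}(z,x^k)=\sum_r z_r\log\frac{x^k_r}{x^{k+\frac12}_r}=\sum_r z_r(\log Z-y_r)=\log Z-\langle y,z\rangle,
\]
using $\sum_r z_r=1$. It remains to control $\log Z$, and this is exactly where $y\le 0$ enters: for every $t\le 0$ one has $e^t\le 1+t+\tfrac12 t^2$ (the function $1+t+\tfrac12t^2-e^t$ vanishes together with its first derivative at $0$ and is convex on $(-\infty,0]$ since its second derivative is $1-e^t\ge0$ there), hence $Z=\sum_r x^k_r e^{y_r}\le 1+\langle y,x^k\rangle+\tfrac12\langle y^2,x^k\rangle$, and $\log(1+u)\le u$ gives $\log Z\le \langle y,x^k\rangle+\tfrac12\langle y^2,x^k\rangle$. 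Combining the last three displays yields $D_{KL}(z,x^{k+1})-D_{KL}(z,x^k)\le\langle y,x^k-z\rangle+\tfrac12\langle y^2,x^k\rangle$, which is the claim.

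The step I expect to be the crux is (iii): the bound $e^t\le 1+t+\tfrac12 t^2$ is false for $t>0$, so the nonpositivity of $y$ is essential — without it one only gets a constant that is exponential in $\|y\|_\infty$. This is precisely why Theorem \ref{thm:expectation} first shifts the rewards into $[-1,0]$, making the gradient estimates $R^k_i$ (and hence $y=\eta^k R^k_i$) nonpositive before this lemma is invoked. A minor secondary point is justifying the Pythagorean inequality when $X$ meets the boundary of $\Delta$: this is fine because $x^{k+\frac12}$ lies in the relative interior of $\Delta$, so $D_{KL}(\cdot,x^{k+\frac12})$ is differentiable on $X$ and the variational inequality characterizing the projection $x^{k+1}$ applies.
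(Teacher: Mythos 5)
The paper does not prove this lemma at all --- it is imported verbatim from the cited reference (Lemma~4 of \cite{wang2017primal}), so there is no internal proof to compare against. Your argument is correct and is the standard one for this statement: the closed-form exponentiated-gradient step, the generalized Pythagorean inequality for the Bregman projection onto $X$, the exact computation $D_{KL}(z,x^{k+\frac12})-D_{KL}(z,x^k)=\log Z-\langle y,z\rangle$, and the bound $e^t\le 1+t+\tfrac12 t^2$ for $t\le 0$ followed by $\log(1+u)\le u$. You have also correctly identified why the nonpositivity of $y$ is essential and why Theorem~\ref{thm:expectation} shifts the rewards to $[-1,0]$ before invoking the lemma; nothing is missing.
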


\end{document}